\documentclass{article} %
\usepackage[preprint]{neurips_2026}

\usepackage[utf8]{inputenc} %
\usepackage[T1]{fontenc}    %
\usepackage{hyperref}       %
\usepackage{url}            %
\usepackage{booktabs}       %
\usepackage{amsfonts}       %
\usepackage{nicefrac}       %
\usepackage{microtype}      %

\usepackage[dvipsnames]{xcolor}

\usepackage{microtype}
\usepackage{graphicx}
\usepackage{subcaption}
\usepackage{booktabs} %

\usepackage{amsmath}
\usepackage{amssymb}
\usepackage{mathtools}
\usepackage{amsthm}

\usepackage{stmaryrd}

\usepackage{hyperref}
\hypersetup{
    colorlinks,
    linkcolor={blue!60!black},
    citecolor={cyan!80!black},
    urlcolor={magenta!60!black}
}

\usepackage[capitalize]{cleveref}

\usepackage{dsfont} %
\usepackage{xcolor}
\usepackage{colortbl}
\usepackage{enumitem}
\usepackage{wrapfig}
\usepackage{multirow}
\usepackage{fancybox}
\usepackage{changepage}
\usepackage{graphicx} %
\usepackage{blindtext}
\usepackage{titlesec}
\usepackage{amsthm}
\usepackage{bm}
\usepackage{booktabs}
\usepackage{array}
\usepackage{diagbox}
\usepackage{subcaption}
\usepackage{threeparttable}
\usepackage{tabularx}  
\usepackage{algorithm}
\usepackage{algorithmic}
\usepackage{pgfmath}

\usepackage{pgfplots}
\usepackage{pgfplotstable}
\usepgfplotslibrary{groupplots} %
\usepgfplotslibrary{fillbetween}
\usetikzlibrary{arrows.meta, positioning}
\pgfplotsset{compat=1.18}
\pgfplotsset{grid = major, grid style={gray!30!white}}

\definecolor{darkgreen}{rgb}{0.31, 0.47, 0.26}

\usepackage[textsize=tiny]{todonotes}

\usepackage[framemethod=TikZ]{mdframed}

\definecolor{theoremcolor}{rgb}{0.931176471, 0.958627451, 1.0}

\mdfsetup{
    backgroundcolor=theoremcolor,
    linecolor=black,      %
    linewidth=0.5pt,      %
    frametitlebackgroundcolor=theoremcolor, %
    leftline=true,
    topline=true,
    rightline=true,
    bottomline=true,
    skipabove=10pt,
    skipbelow=10pt,
    roundcorner=2pt       %
}

\newmdtheoremenv{definition}{Definition}
\newmdtheoremenv{proposition}{Proposition}
\newmdtheoremenv{corollary}{Corollary}
\newmdtheoremenv{theorem}{Theorem}
\newmdtheoremenv{lemma}{Lemma}
\newmdtheoremenv{example}{Example}
\newmdtheoremenv{remark}{Remark}

\crefname{problem}{Problem}{Problems}
\crefname{definition}{Definition}{Definitions}

\newlist{inlinelist}{enumerate*}{1}
\setlist[inlinelist]{label=(\roman*)} %

\definecolor{LightGray}{gray}{0.92}

\usepackage{amsmath,amsfonts,bm}

\def\1{\bm{1}}
\def\0{\bm{0}}

\DeclareMathOperator*{\argmax}{arg\,max}
\DeclareMathOperator*{\argmin}{arg\,min}

\def\RR{{\mathbb{R}}}
\def\NN{{\mathbb{N}}}
\def\EE{{\mathbb{E}}}
\def\PP{{\mathbb{P}}}

\def\conv{{\mathrm{conv}}}

\def\cD{{\mathcal{D}}}
\def\cE{{\mathcal{E}}}

\def\cG{{\mathcal{G}}}
\def\cH{{\mathcal{H}}}

\def\cL{{\mathcal{L}}}
\def\cM{{\mathcal{M}}}
\def\cN{{\mathcal{N}}}
\def\cO{{\mathcal{O}}}

\def\cV{{\mathcal{V}}}

\def\cY{{\mathcal{Y}}}
\def\cZ{{\mathcal{Z}}}

\def\thetav{{\bm{\theta}}}
\def\muv{{\bm{\mu}}}

\def\b{{\bm{b}}}

\def\e{{\bm{e}}}

\def\p{{\bm{p}}}
\def\q{{\bm{q}}}
\def\r{{\bm{r}}}
\def\s{{\bm{s}}}

\def\u{{\bm{u}}}
\def\v{{\bm{v}}}

\def\x{{\bm{x}}}
\def\y{{\bm{y}}}
\def\z{{\bm{z}}}
\def\A{{\bm{A}}}

\def\maxo{f_\Omega}
\def\argmaxo{\p_\Omega}

\def\triangleY{\triangle^{\mathcal{Y}}}
\def\RRY{\mathbb{R}^{|\mathcal{Y}|}}

\def\TV{d_{\mathrm{TV}}}

\def\KL{D_{\mathrm{KL}}}

\def\qh{\hat{\bm{q}}}

\def\piv{\bm{\pi}}

\def\gap{\delta_{\bm{\thetav},\Omega}}
\def\gape{\delta_{\bm{\thetav},\Omegae}}

\def\qlns{\bm{q}^{\mathrm{LNS}}(\bm{y}^{(k)},S^{(k)})}
\def\qgibbs{\bm{q}^{\mathrm{Gibbs}}(\bm{y}^{(k)},S^{(k)})}

\def\st{\bm{s}_{\bm{\thetav}}}

\def\supp{\mathrm{supp}}

\def\Omegae{\Omega_{\varepsilon}}

\def\dom{\mathrm{dom}}
\def\supp{\mathrm{supp}}

\def\ri{\mathrm{ri}}

\def\KL{D_{\mathrm{KL}}}

\def\piLNS{\pi^{\mathrm{LNS}}_{\bm{\theta},\Omega}}
\def\piLNSe{\pi^{\mathrm{LNS}}_{\bm{\theta},\Omegae}}

\def\Sk{S^{(k)}}
\def\Fk{F^{(k)}}
\def\Mk{\cM^{(k)}}
\def\yk{\y^{(k)}}
\def\Yk{\cY^{(k)}}
\def\Omegak{\Omega^{\Yk}}

\def\argmaxok{\p_{\Omegak}}
\def\stk{(\bm{s}_{\bm{\thetav}})_{[\Yk]}}
\def\trianglek{\triangle^{\Yk}}
\def\iotak{\iota^{(k)}}

\def\argmaxoe{\p_{\Omega_\varepsilon}}
\def\argmaxoek{\p_{\Omega_\varepsilon^{\cY^{(k)}}}}

\begin{document}

\title{Regularized Large Neighborhood Search}

\author{%
  Germain Vivier-Ardisson\\
  Google DeepMind \& CERMICS\\
  Paris, France \\
  \texttt{gvivier@google.com} \\
  \And
  Laurent Demonet \\
  Google Research \\
  Paris, France \\
  \texttt{demonet@google.com} \\
  \AND
  Axel Parmentier \\
  CERMICS, ENPC, CNRS, IPP\\
  Marne-la-Vallée, France \\
  \texttt{axel.parmentier@enpc.fr\!\!\!}
  \And
  Mathieu Blondel \\
  Google DeepMind \\
  Paris, France \\
  \texttt{mblondel@google.com} \\
}

\maketitle

\vspace{-0.5cm}
\begin{abstract}
Operations research practitioners typically tackle NP-hard combinatorial problems using large neighborhood search (LNS), a scalable heuristic that iteratively refines a current solution by locally re-optimizing subsets of its variables.
In contrast, most existing approaches for integrating combinatorial optimization layers into neural networks still assume access to an exact global solution, which is computationally intractable. We bridge this gap by introducing regularized LNS (RLNS). By regularizing or perturbing local subproblems, we turn the LNS heuristic into an efficient MCMC sampler over the combinatorial set of feasible solutions, with associated Fenchel-Young losses.
Under entropic regularization, we prove that RLNS performs exact block Gibbs sampling. 
Furthermore, adjusting the number of RLNS iterations allows us to interpolate between pseudolikelihood and exact maximum likelihood estimation, for end-to-end learning without global solvers.
We demonstrate our approach on $k$-subset selection, generalized assignment, and stochastic vehicle scheduling problems.

\end{abstract}

\section{Introduction}
\label{sec:intro}

Integrating combinatorial optimization solvers into neural networks has emerged as a powerful paradigm for structured prediction \citep{dalle_learning_2022,mandi_decision-focused_2024,sadana_survey_2024,schiffer_combinatorial_2026}. Such architectures enable machine learning models to output discrete decisions that satisfy complex constraints, via layers of the form
\begin{align}
\label{pb:ilp}
    \thetav \longmapsto  \hat{\y}(\thetav)\triangleq \argmax_{\y \in \cY} \langle \thetav, \y \rangle,
\end{align}
where $\cY$ is a finite, but combinatorially large, set of feasible outputs. The fundamental challenge in this setting is differentiability. Because the $\argmax$ operator provides zero gradients almost everywhere, standard approaches typically rely on introducing a regularization term to yield a continuous, differentiable relaxation \citep{niculae_sparsemap_2018, mandi_interior_2020, berthet_learning_2020}. As summarized in \cref{tab:overview}, various methods have been proposed to solve the resulting problem, depending on the type of regularization used and available oracle. However, most of the existing literature fundamentally assumes access to a \emph{global} exact solver (a global MAP oracle) or marginal oracle \citep{lafferty_conditional_2001,wainwright_graphical_2008}, which computes the expectation of the Gibbs distribution $p(\y)\propto\exp(\langle\thetav,\y\rangle)$. For NP-hard optimization problems encountered in operations research, calling a global oracle at every training step can be computationally intractable.

Instead, practitioners rely on highly scalable heuristics, particularly large neighborhood search (LNS). LNS iteratively improves a feasible solution by freezing a subset of its variables and exactly re-optimizing the remaining free variables~\citep{pisinger2018large}. The lack of a principled framework for using LNS as combinatorial layer in a neural network remains a critical bottleneck. 

In contrast to recent work that connected local search heuristics with Metropolis-Hastings \citep{vivier-ardisson_learning_2025}, our work bridges LNS with block Gibbs sampling and does not require any sample rejection mechanism.

We make the following contributions.
\begin{itemize}[topsep=0pt,itemsep=1.5pt,parsep=2pt,leftmargin=15pt]
    \item In \cref{sec:framework} we introduce RLNS, a stochastic version of LNS based on smoothed maximum operators, integrating local regularization into the LNS subproblems.
    \item In \cref{sec:entropy_lns} we prove that under entropic regularization, RLNS exactly implements a block Gibbs sampler whose stationary distribution matches a globally regularized Gibbs distribution.
    \item In section \cref{sec:perturbed_lns}, we show that RLNS supports perturbation-based regularization \citep{berthet_learning_2020}, leading to tractable updates using the standard local MAP oracles of LNS heuristics.
    \item In \cref{sec:lns_fyl}, we demonstrate that the number of RLNS iterations allows us to interpolate between pseudolikelihood \citep{besag_spatial_1974} and exact maximum likelihood estimation.
    \item In \cref{sec:experiments}, we evaluate RLNS on a controlled estimation task on $\kappa$-hot vectors, a generalized assignment problem learning task, and a stochastic vehicle scheduling problem.
\end{itemize}
A summary of notations is available at the beginning of the appendix.

\begin{table*}[t]
\centering
\caption{
The new connection we establish between LNS and Gibbs sampling enables us to leverage local oracles and effectively regularize the problem when usual global oracles are not available.}
\begin{small}
\begin{tabular}{cccc}
\toprule
& \textbf{Regularization} & \textbf{Oracle} & \textbf{Approach} \\
\midrule
Differentiable DP (\citeyear{mensch_differentiable_2018}) & Entropic & Global marginal & DP \\
SparseMAP (\citeyear{niculae_sparsemap_2018}) & Quadratic & Global MAP & Frank-Wolfe \\
Barrier FW (\citeyear{krishnan_barrier_2015}) & TRW Entropy & Global MAP & Frank-Wolfe \\
IntOpt (\citeyear{mandi_interior_2020}) & Logarithmic barrier & Interior point solver & Primal-dual \\
Perturbed optimizers (\citeyear{berthet_learning_2020}) & Implicit via noise & Global MAP & Monte-Carlo \\
DYS-net (\citeyear{mckenzie_differentiating_2024}) & Quadratic & Projection oracles & Davis-Yin splitting \\
Blackbox solvers (\citeyear{vlastelica_differentiation_2020}) & None & Global MAP & Interpolation \\
Local search MCMC (\citeyear{vivier-ardisson_learning_2025}) & Entropic & Local search & Metropolis-Hastings \\
\midrule
\rowcolor{LightGray}
RLNS \textbf{(proposed)} & Entropic / Implicit via noise & \emph{Local} marginal / MAP & Gibbs sampling \\
\bottomrule
\end{tabular}
\end{small}
\label{tab:overview}
\end{table*}

\section{Background and related work}
\label{sec:background_related_work}

\paragraph{Large neighborhood search.} LNS is a powerful meta-heuristic used to approximately solve Problem \ref{pb:ilp} when it is intractable (e.g., when it is NP-hard), by using a \emph{local} solver (\emph{local} MAP oracle)
\begin{align}
\thetav\mapsto \argmax_{\y'\in\cY}\;\langle\thetav,\y'\rangle\quad\text{s.t.}\quad \y'_{[S]}=\y_{[S]}.
\label{eq:local_map}
\end{align}
These oracles exactly solve sub-problems of Problem \ref{pb:ilp} by fixing a subset $S\subseteq[d]$ of variables to the previous solution $\y \in \cY$. LNS iteratively refines a solution by freezing different variables at each step and exactly re-optimizing the rest \citep{archetti2014survey,pisinger2018large}. The pseudo-code is given in \cref{algo:lns}.

\begin{algorithm}[H]
\caption{Large neighborhood search (LNS)}
\label{algo:lns}
\begin{algorithmic}[1]
\REQUIRE Logits $\thetav \!\in\! \RR^d$\!, feasible set $\cY \!\subset \!\RR^d$\!, initial solution $\y^{(0)\!} \!\in \!\cY$, number of iterations $K\!\in\NN$.
\FOR{$k = 0$ to $K-1$}
    \STATE Choose a subset of frozen coordinates $\Sk \subset [d]$.
    \STATE Define the local neighborhood: $\Yk\triangleq \left\{\y \in \cY \mid \y_{[\Sk]}=\yk_{[\Sk]}\right\}$
    \STATE Set $\y^{(k+1)}\gets \argmax_{\y\in\Yk}\langle\thetav,\y\rangle$ \COMMENT{Solve the local optimization subproblem.}
\ENDFOR
\STATE \textbf{Return $\y^{(K)}$}
\end{algorithmic}
\end{algorithm}
\vspace{-0.25cm}

\paragraph{Smoothed maximum operators.}

Let $\Omega:\triangle^D\to\RR$ be a lower semicontinuous (l.s.c.) strictly convex regularization function. The smoothed maximum operator $\maxo:\RR^D\to\RR$ \citep{mensch_differentiable_2018} and the smoothed maximizer $\argmaxo:\RR^D\to\Delta^D$ are defined as
\begin{align*}
    \maxo(\s)\triangleq\max_{\q\in\triangle^D}\,\langle\s,\q\rangle-\Omega(\q),\quad\text{and}\quad \argmaxo(\s)\triangleq\argmax_{\q\in\triangle^D}\,\langle\s,\q\rangle-\Omega(\q).
\end{align*}
\vspace{-0.25cm}

The strict convexity of $\Omega$ guarantees that $\argmaxo(\s)$ is single-valued. Danskin's theorem \citep{danskin_theory_1966} gives $\argmaxo=\nabla\maxo$, and we have $\maxo=\Omega^*$, where $\cdot^*$ denotes the Fenchel conjugate. %
Typical choices for $\Omega$ include Shannon's negative entropy, which yields the \emph{log-sum-exp} operator for $\maxo$ and the \emph{softmax} for $\argmaxo$, or $\Omega=\frac{1}{2}\|\cdot\|_2^2$, which gives the \emph{sparsemax} operator \citep{martins_softmax_2016} for $\argmaxo$.

\paragraph{Distribution-space Fenchel-Young losses.}

Following \citet{blondel_learning_2020}, we define the \emph{score vector} $\st\triangleq(\langle\thetav,\y\rangle)_{\y\in\cY}\in\RRY$, and cast Problem \ref{pb:ilp} as an optimization problem on $\triangleY$, the set of distributions over feasible solutions, yielding $\max_{\y\in\cY}\langle\thetav,\y\rangle = \max_{\q\in\triangleY}\langle\st,\q\rangle$. This point of view allows us to introduce distribution-space regularization into the problem.

\begin{definition}[Distribution-space Fenchel-Young loss]
\label{def:dist_fyl}
    Let $\Omega:\triangleY\to\RR$ be a l.s.c. strictly convex regularization function. The \emph{distribution-space Fenchel-Young loss} $L_\Omega: \RR^d\times\triangleY\to\RR_+$ generated by $\Omega$ is defined by:
    \begin{align*}
        L_\Omega(\thetav\,;\p)\,\triangleq\,\Omega^*(\st) + \Omega(\p) - \langle\st,\p\rangle\,=\,\maxo(\st) + \Omega(\p) - \langle\thetav,\EE_\p[Y]\rangle.
    \end{align*}
\end{definition}
\vspace{-0.25cm}

This loss is non-negative, and such that $L_\Omega(\thetav\,;\p)=0\iff\argmaxo(\st)=\p$. It is continuously differentiable and convex in $\thetav$, with $\nabla_\thetav L_{\Omega}(\thetav\,;\p) = \EE_{\argmaxo(\st)}[Y] - \EE_\p[Y]$. If $\Omega$ is strongly convex, then $\thetav\mapsto \nabla_\thetav L_{\Omega}(\thetav\,;\p)$ is Lipschitz-continuous.

\paragraph{Differentiating through combinatorial optimization.}

Most approaches to differentiate Problem \ref{pb:ilp} with respect to $\thetav$ are based on introducing regularization, based on the type of oracle available. If the original problem admits a dynamic programming solution, one can solve an entropy-regularized version by algorithmic smoothing \citep{li_first-_2009,mensch_differentiable_2018}. For linear programs, one can use an interior-point solver to to compute a log-barrier-regularized solution \citep{mandi_interior_2020}, or leverage projection oracles to handle quadratic regularization \citep{mckenzie_differentiating_2024}. Global MAP oracles can also be used to solve the regularized problem, either via a Frank-Wolfe-like routine \citep{niculae_sparsemap_2018,krishnan_barrier_2015} or via Monte-Carlo estimation by adding stochastic perturbations to the solver's input \citep{berthet_learning_2020,dalle_learning_2022}.

Regarding differentiation, several strategies are possible. When the approach only needs to differentiate through a regularized $\max$, as is the case of Fenchel-Young losses \citep{blondel_learning_2020}, we can use Danskin's theorem \citep{danskin_theory_1966}. When it requires differentiating a regularized $\argmax$, one can use autodiff on the unrolled solver iterations, or implicit differentiation of the KKT conditions \citep{amos_optnet_2021,ferber_mipaal_2019,blondel_efficient_2022,agrawal_differentiable_2019}.
Differently, \citet{vlastelica_differentiation_2020} compute surrogate gradients via continuous interpolation of the solver.

\paragraph{Local oracles in structured prediction.} 

Exact MAP inference in graphical models is inherently a combinatorial optimization problem. In both domains, the intractability of global inference motivates the use of local oracles. For example, the seminal iterated conditional modes algorithm \citep{besag_statistical_1986} deterministically maximizes local conditional probabilities, while randomized variants yield provable global approximation guarantees \citep{jung_local_2009}; together, they act as the graphical models analogue to (unregularized) LNS. During learning, intractable global marginal inference is typically bypassed using local oracles: pseudolikelihood (or composite likelihood) methods \citep{besag_spatial_1974, lindsay11988composite, varin_overview_2011} optimize local conditionals of observed data, while local perturb-and-MAP \citep{bertasius_local_2016} forms their noise-regularized counterpart. Approaches based on the local polytope \citep{niculae_sparsemap_2018} relax constraints, while our approach keeps the same constraints but optimizes over a subset of the variables at each iteration.

While \citet{berthet_learning_2020} successfully used global perturb-and-MAP \citep{papandreou_perturb-and-map_2011} for combinatorial layers, using local oracles as a layer remains a critical gap in the field which our work aims to bridge. Contrastive divergences \citep{hinton_training_2000} interpolate between pseudolikelihood (CD-$1$) to maximum likelihood estimation (CD-$\infty$) \citep{hyvarinen_consistency_2006,asuncion_learning_2010}. We will establish a similar result for the loss function associated with RLNS iterations.

\section{Regularized LNS}
\label{sec:regularized_lns}

\subsection{Proposed framework}
\label{sec:framework}

We now introduce our main contribution: using smoothed maximum operators, we define a stochastic, regularized version of LNS (RLNS), whose iterates follow a Markov chain with state space $\cY$. To do so, we first define a family of local regularization functions from a global one, via inclusion maps.

\begin{definition}[Local regularizers]
\label{def:local_reg}
    For any subset of solutions $\cZ\subseteq\cY$, we define the inclusion map $\iota_{\cZ}:\triangle^\cZ\lhook\joinrel\to\triangleY$ from local distributions on $\cZ$ to global distributions on $\cY$ via padding:
    \begin{align*}
        \iota_{\cZ}(\q)\triangleq\left(\begin{cases}
            q_\y &\text{if $\y\in\cZ$}\\
            0 &\text{else}
        \end{cases}\right)_{\y\in\cY}.
    \end{align*}
    Then, for any lower semicontinuous, strictly convex global regularizer $\Omega:\triangleY\to\RR$, we define its local counterpart $\Omega^{\cZ}:\triangle^\cZ\to\RR$ by composing it with $\iota_\cZ$, as $\Omega^\cZ\triangleq\Omega\circ\iota_\cZ$.
\end{definition}

While standard approaches given in \cref{tab:overview} aim to solve a regularized version of the global optimization problem \eqref{pb:ilp}, we propose to regularize the LNS subproblems from \cref{algo:lns} using the local regularizer $\Omegak$, to define local distributions $\argmaxok((\st)_{[\Yk]})\in\trianglek$. RLNS is then obtained by iteratively sampling from these locally regularized distributions, as shown in \cref{algo:rlns}.

\begin{algorithm}[H]
\caption{Regularized LNS with $K$ iterations (RLNS-$K$)}
\label{algo:rlns}
\begin{algorithmic}[1]
\REQUIRE Logits $\thetav$, feasible set $\cY$, initial solution $\y^{(0)}$, number of iterations $K$, regularization $\Omega$.
\FOR{$k = 0$ to $K-1$}
    \STATE Choose a subset of frozen coordinates $\Sk \subset [d]$.
    \STATE Define the local neighborhood: $\Yk\triangleq \left\{\y \in \cY \mid \y_{[\Sk]}=\yk_{[\Sk]}\right\}$
    \STATE Sample $\y^{(k+1)}\sim \argmaxok\left(\stk\right)$
    \COMMENT{Sample from the local regularized distribution.}
    \ENDFOR
\STATE \textbf{Return $\frac{1}{K}\sum_{k=1}^K\yk$}
\end{algorithmic}
\end{algorithm}
\paragraph{Ergodicity.} To guarantee that the distribution of the iterates $\yk$ of \cref{algo:rlns} asymptotically converges to a \emph{unique} stationary distribution independent of the initialization $\y^{(0)}$ (i.e., that the Markov chain is \emph{ergodic}), the two following properties are sufficient:
\begin{enumerate}[topsep=0pt,itemsep=0pt,partopsep=0pt,parsep=0pt]
    \item The algorithm employs an \emph{irreducible variable selection scheme}, i.e., the distribution over frozen variables $\Sk$ induces a connected transition graph over the feasible set $\cY$.
    \item The local update distributions are \emph{dense}, i.e., $\argmaxok(\stk)(\y) > 0\; \forall \y \in \Yk$.
\end{enumerate}
The first condition ensures that LNS can reach any feasible solution from any other one. This requirement is not very stringent in practice, and is typically met by practical implementations which either consider reduced scope or exploit decomposition into subproblems with specific structure \citep{archetti2014survey}.  
The second condition ensures that any path in the transition graph has a strictly positive transition probability, thereby establishing the irreducibility of the chain. Because the current state is always in its own neighborhood ($\yk \in \Yk$), dense local updates guarantee positive self-loop probabilities at every state, which in turn ensures aperiodicity.

\begin{definition}[Stationary distribution of RLNS]
    When the Markov chain $(\yk)_{k\in\NN}$ defined by \cref{algo:rlns} is ergodic, we denote by $\piLNS\in\triangleY$ its unique stationary distribution.
\end{definition}

\paragraph{Rejection-free sampling.} In contrast to the approach of \citet{vivier-ardisson_learning_2025},
which defines a Metropolis-Hastings-based sampler using local search heuristics,
\cref{algo:rlns} does not involve any sample rejection mechanism.
As we empirically show in \cref{fig:k_hot}, this
enables updating more variables at a time without stagnating,
especially in low-temperature regimes.

\subsection{Entropic LNS and Gibbs sampling}
\label{sec:entropy_lns}

We now focus on the case where $\Omega$ is Shannon's negative entropy scaled by
$\gamma > 0$,
\vspace{-0.1cm}
\begin{align*}
    \Omega(\q)\triangleq -\gamma H^s(\q) = \gamma\sum_{\y\in\cY}\q(\y)\log \q(\y).
\end{align*}
\vspace{-0.5cm}

This choice yields the Gibbs distribution
$\argmaxo(\s)(\y)\propto\exp(s_\y/\gamma)$. Computing its expectation (i.e.,
performing global marginal inference) is only tractable in certain settings,
e.g., if the global problem admits an exact dynamic programming solution
\citep{mensch_differentiable_2018}. The local RLNS update distribution
$\argmaxok(\s_{[\Yk]})$, however, is a \emph{local} Gibbs distribution on $\Yk$, and
can be sampled from by regularized dynamic programming if the \emph{local}
problem can be solved by dynamic programming. This is the case, e.g., of our applications in \cref{sec:k-hot,sec:gap}.

\begin{proposition}[Entropic LNS as a block Gibbs sampler]
\label{prop:gibbs_mcmc}
Assume $\Omega=-\gamma H^s$ with $\gamma > 0$. For any current solution $\yk \in
\cY$ and frozen subset $\Sk \subseteq [d]$, the locally regularized distribution matches the global regularized distribution conditioned on the fixed variables:
\begin{align*}
    \argmaxok\left(\stk\right)(\y) = \argmaxo(\st)\left(\y\mid\yk_{[\Sk]}\right) \quad \forall \y \in \Yk.
\end{align*}
Consequently, \cref{algo:rlns} implements an exact \textbf{block Gibbs sampler} with $\argmaxo(\st)$ as a target distribution, and if the variable selection scheme is irreducible, we have:
\begin{align*}
    \piLNS(\y) = \argmaxo(\st)(\y)\propto\exp\left(\langle\thetav,\y\rangle/\gamma\right)\quad\forall\y\in\cY.
\end{align*}
\end{proposition}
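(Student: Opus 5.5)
The argument has three steps: compute the local Gibbs distribution explicitly, compare it with the global Gibbs distribution conditioned on the frozen block, and then invoke the standard stationarity and uniqueness facts for block Gibbs samplers.

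\textbf{Step 1: closed form of the local update.} Let $\q\in\trianglek$. Since $\iotak$ pads with zeros and $0\log 0=0$, we have
\begin{align*}
\Omegak(\q)=\Omega(\iotak(\q))=\gamma\sum_{\y\in\Yk}q_\y\log q_\y ,
\end{align*}
so the local regularizer is again the scaled negative Shannon entropy, now on $\trianglek$. The standard Lagrangian/KKT computation (or Gibbs' variational principle) then gives
\begin{align*}
\argmaxok\bigl(\stk\bigr)(\y)=\frac{\exp(\langle\thetav,\y\rangle/\gamma)}{\sum_{\y'\in\Yk}\exp(\langle\thetav,\y'\rangle/\gamma)}
\end{align*}
for every $\y\in\Yk$. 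Applying the same computation on the full simplex gives $\argmaxo(\st)(\y)\propto\exp(\langle\thetav,\y\rangle/\gamma)$ on $\cY$.

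\textbf{Step 2: identify the conditional.} For $Y\sim\argmaxo(\st)$, the event $\{Y_{[\Sk]}=\yk_{[\Sk]}\}$ is exactly $\{Y\in\Yk\}$. Conditioning therefore restricts the global distribution to $\Yk$ and renormalizes. The global normalizing constant cancels, and we recover the expression from Step 1. This is the claimed identity. The conditioning event has positive probability because $\yk\in\Yk$ and the Gibbs weights are positive.

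\textbf{Step 3: stationarity and uniqueness.} Write $\p\triangleq\argmaxo(\st)$. Fix the frozen set $S$; the case where $\Sk$ is drawn at random is a mixture of such kernels. The transition kernel is
\begin{align*}
P_S(\y,\y')=\p\bigl(\y'\mid \y_{[S]}\bigr)\,\mathds{1}\bigl[\y'_{[S]}=\y_{[S]}\bigr].
\end{align*}
We check detailed balance:
\begin{align*}
\p(\y)\,P_S(\y,\y')=\frac{\p(\y)\,\p(\y')}{\p\bigl(Y_{[S]}=\y_{[S]}\bigr)}\,\mathds{1}\bigl[\y'_{[S]}=\y_{[S]}\bigr].
\end{align*}
This expression is symmetric in $(\y,\y')$, because the indicator forces $\y_{[S]}=\y'_{[S]}$ and hence the denominators coincide. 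So each $P_S$ is $\p$-reversible and leaves $\p$ invariant. Mixtures and compositions of kernels that leave $\p$ invariant also leave $\p$ invariant, which covers random or cyclic selection of $\Sk$.

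The local distributions have full support on $\Yk$, so the density condition holds automatically. Together with the assumed irreducible selection scheme, the ergodicity discussion above gives a unique stationary distribution. It must therefore equal $\p$, so $\piLNS=\p\propto\exp(\langle\thetav,\y\rangle/\gamma)$.

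The only delicate point is Step 3 when $\Sk$ is chosen depending on the current state $\yk$. Detailed balance can then fail unless the selection probability depends only on the frozen values $\y_{[S]}$, since those are unchanged by the move. I would state this restriction explicitly, or assume that $\Sk$ is chosen independently of the state, as the block Gibbs framing implicitly does.
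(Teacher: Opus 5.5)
Your proposal is correct and follows essentially the same route as the paper. Both arguments compute the softmax on $\Yk$ explicitly, cancel the global normalizer in the conditional, use the symmetry of $\pi(\yk)P(\yk\to\y)$ enforced by the indicator, and get uniqueness from irreducibility plus dense updates (self-loops give aperiodicity). Your caveat about state-dependent selection is a valid sharpening: the paper's proof silently assumes it away by writing the selection probability as $\nu(\Sk)$, independent of $\yk$, and your condition that it depend only on the frozen values $\yk_{[\Sk]}$ is exactly what keeps the detailed-balance expression symmetric.
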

The proof is given in \cref{proof:gibbs_mcmc}. Thus, in the Shannon-regularized case, the stationary distribution of the locally-regularized iterates of \cref{algo:rlns} matches the globally-regularized distribution. The next proposition shows that $\Omega=-\gamma H^s$ is in fact the only separable regularizer with this property. The proof, given in \cref{proof:stationary_equivalence}, relies on a functional equation-based characterization of Shannon's entropy from information theory \citep{horibe1988entropy, gselmann2011entropy,mensch_differentiable_2018}.

\begin{proposition}[Characterization of stationary exactness]
\label{prop:stationary_equivalence}
Let $\Omega:\triangleY\to\RR$ be a l.s.c. separable strictly convex regularization function, normalized such that $\Omega(\delta^\cY_\y)=0$ for any $\y\in\cY$. The RLNS stationary distribution matches the global target for all finite feasible sets $\cY \subset \RR^d$, all maximization directions $\thetav \in \RR^d$, and all irreducible variable selection schemes, meaning
\begin{align*}
    \piLNS = \argmaxo(\st), \quad \textbf{if and only if}\quad \exists\gamma>0:\;\Omega=-\gamma H^s.
\end{align*}
\end{proposition}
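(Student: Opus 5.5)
The ``if'' direction is \cref{prop:gibbs_mcmc}. For the ``only if'' direction, I will reduce stationary exactness to a \emph{conditioning-consistency} identity for the smoothed argmax and then solve the resulting functional equation. Write the separable regularizer as $\Omega(\q)=\sum_{\y\in\cY}\phi(q_\y)$ with $\phi$ strictly convex and l.s.c. on $[0,1]$. Then $\Omega^{\cZ}(\q)=\sum_{\y\in\cZ}\phi(q_\y)+(|\cY|-|\cZ|)\phi(0)$. So each local regularizer is the same separable function up to an additive constant, which does not affect the maximizer.

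\textbf{Step 1 (reduction to conditioning consistency).} Fix a finite $\cY$ and a block $B\subset\cY$. I want an irreducible variable selection scheme that, with probability $1/2$, freezes nothing (neighborhood $\cY$) and, with probability $1/2$, freezes a set $S$ whose neighborhoods partition $\cY$ into $B$ and $\cY\setminus B$. Such a scheme can be realized by embedding $\cY$ in $\RR^d$ with an extra coordinate equal to $\mathds{1}[\y\in B]$. The remaining coordinates are chosen as one-hot vectors, so that $\st$ ranges over all of $\RRY$ as $\thetav$ varies. The scheme is irreducible because the full-neighborhood move is always available. Call the two kernels $K_1$ (full neighborhood) and $K_2$ (partition), and let $\p=\argmaxo(\st)$. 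Since $\p K_1=\p$, the identity $\piLNS=\p$ forces $\p K_2=\p$. Blockwise, this says that for every $\s$:
\begin{align*}
    \p_{\Omega^{B}}(\s_{[B]})(\y)=\frac{\p(\y)}{\p(B)}\qquad\forall\y\in B .
\end{align*}
In words, the local smoothed argmax is the conditional of the global one.

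\textbf{Step 2 (functional equation).} Where $\p>0$, the KKT conditions of the separable problem give $\phi'(q_\y)=s_\y-\lambda$ for a multiplier $\lambda$. Setting $g\triangleq(\phi')^{-1}$, which is increasing and continuous by strict convexity, the global solution is $q_\y=g(s_\y-\lambda)$ and the local one is $g(s_\y-\mu)$. The identity from Step 1 becomes
\begin{align*}
    g(s_\y-\mu)=c\,g(s_\y-\lambda),\qquad c=1/\p(B),\quad\forall\y\in B .
\end{align*}
By varying the scores inside and outside $B$, the shift $a=\lambda-\mu$ and the base point $t=s_\y-\lambda$ range over open intervals. This yields $g(t+a)=c(a)\,g(t)$. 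By monotonicity, $\log g$ is then a continuous solution of Cauchy's equation up to an additive constant, so $g(t)=Ce^{t/\gamma}$ with $\gamma>0$. Integrating gives $\phi(q)=\gamma q\log q+\alpha q+\beta$ on $(0,1]$, and lower semicontinuity extends this to $q=0$.

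\textbf{Step 3 (normalization).} On the simplex, $\sum_\y(\alpha q_\y+\beta)=\alpha+|\cY|\beta$ is constant. The condition $\Omega(\delta^\cY_\y)=0$ forces this constant to be $0$, so $\Omega=-\gamma H^s$.

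\textbf{Main obstacle.} I expect the main obstacle to be the boundary issues. If $\phi'(0^+)>-\infty$, the smoothed argmax can put zero mass on some solutions. In that case the chain may not be ergodic, or the KKT equalities hold only on the support. My first attempt is to restrict to score vectors close enough to each other that every solution keeps positive mass, which is possible because $g$ is continuous and increasing. This gives the functional equation on a nonempty open domain, and the standard extension of Cauchy's equation from an interval then suffices. The other delicate point is checking that the embedding in Step 1 realizes the desired neighborhoods while leaving $\st$ unconstrained.
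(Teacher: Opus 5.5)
Your Step 1 is correct but reduces differently from the paper. The paper uses $\cY=\{\e_1,\e_2,\e_3\}$ with a line-graph scheme and detailed balance. Your mixture with the full-neighborhood kernel $K_1$ (for which $\mu K_1=\p$ for every $\mu$) gives $\p K_2=\p$ directly, and both land on the same conditioning identity.

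The gap is in Step 2. At a given score vector, all $\y\in B$ share one pair $(a,c)$, and $a$ is itself fixed by the local normalization $\sum_{\y\in B}g(t_\y+a)=1$. To conclude $g(t+a)=c(a)\,g(t)$, you must hold $a$ fixed while moving two base points independently, and ``varying the scores'' does not give you this. For $|B|=2$ it cannot: $a$ is then a function of $(t_1,t_2)$. In the variables $x=q_1$, $y=\p(B)$, your relations are exactly the paper's equation $\phi'(x)-\phi'(xy)=\phi'(1-x)-\phi'((1-x)y)$. This is not a Cauchy equation; the paper has to integrate it into the fundamental equation of information and invoke Horibe's theorem on its measurable solutions.

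The missing idea is to use the freedom in $\cY$ and take $|B|\ge 3$. For example, take $|\cY|=4$, $|B|=3$, and $\p=(x_1y,x_2y,x_3y,1-y)$ with $x_i>0$, $\sum_ix_i=1$, $y\in(0,1)$. Such $\p$ is interior, so your chain is ergodic via $K_1$. The conditioning identity makes the local solution $(x_1,x_2,x_3)$, also interior, and your KKT relations give $\phi'(x_i)-\phi'(x_iy)=a$ for $i=1,2,3$. Hence $D_y(x)\triangleq\phi'(x)-\phi'(xy)$ agrees at any $x,x'$ with $x+x'<1$. Chaining through a small third point $x''<\min(1-x,1-x')$ then shows $D_y$ is independent of $x\in(0,1)$. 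So $F(u)\triangleq\phi'(e^u)$ has increments $F(u+v)-F(u)$ depending only on $v<0$. This is a Cauchy equation with a monotone solution, so $\phi'(x)=\gamma\log x+\mathrm{const}$, with $\gamma>0$ by strict convexity.

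With this repair your route is genuinely more elementary than the paper's: the third element of $B$ collapses the two-variable entropy equation to Cauchy's equation, bypassing the Horibe--Gselmann theory. It also settles your boundary worry, since interior $\p$ already sweep all $x,y\in(0,1)$, so restricting to them loses nothing. Your Step 3 normalization is fine, and more careful than the paper's ``without loss of generality $\omega(0)=\omega(1)=0$''.
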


\paragraph{Rao-Blackwellization.} When using regularized dynamic programming to sample the next RLNS iterate $\y^{(k+1)}$ from the local Gibbs distribution $\argmaxok(\stk)$, computing its expectation $\EE[\y^{(k+1)}|\yk,\Sk]$ is tractable with the same complexity. Thus, we can perform Rao-Blackwellized MCMC \citep{mckeague_markov_2000} by simply averaging local expectations instead of simple samples along the Markov chain in order to reduce variance.

\subsection{Perturbed LNS and tractable updates}
\label{sec:perturbed_lns}

In some applications, a solver for
the unregularized local problem in \cref{eq:local_map} (local MAP oracle) is
available but a solver for the Shannon-regularized local problem (local marginal oracle) is not. To support this setting,
we now consider the case of \emph{perturbation-based} regularizers
\citep{berthet_learning_2020}, which allow us to make RLNS tractable whenever standard, unregularized LNS is.

\begin{definition}[Perturbation-based regularization]
\label{def:perturbation_regularization}
    Let  $\varepsilon>0$ be a temperature parameter, and $\cD$ be a distribution on $\RR^d$ admitting a positive density. Following \citet{bouvier_primal-dual_2025}, we define the (global) distribution-space \emph{perturbation-based regularization} function $\Omegae:\triangleY\to\RR$ as:
    \begin{align*}
        \Omegae(\q)\triangleq \sup_{\s\in\RRY}\left\{ \langle\s,\q\rangle - \EE_{Z\sim\cD}\left[\max_{\y\in\cY} \{s_\y +\varepsilon \langle Z,\y\rangle\}\right]\right\}.
    \end{align*}
\end{definition}

In \cref{proof:strict_convexity}, we show that $\Omegae$ is strictly convex on $\triangleY$. The corresponding \emph{global} regularized distribution is the \emph{global} perturb-and-MAP distribution \citep{papandreou_perturb-and-map_2011} given by $\argmaxoe(\st)(\y)= \PP_{Z\sim\cD}\left(\argmax_{\y'\in\cY}\langle\thetav+\varepsilon Z,\y'\rangle = \y \right)$, and its expectation is the expected perturbed maximizer $\EE_{\argmaxoe(\st)}[Y]= \EE_{Z\sim\cD}\left[\argmax_{\y\in\cY}\langle\thetav+\varepsilon Z,\y\rangle\right]$. We now show that the \emph{local} regularized distributions correspond to \emph{local} perturb-and-MAP.

\begin{proposition}[Tractability of perturbed LNS]
\label{prop:perturbation_local}
    Let $\Omegae$ be a perturbation-based regularizer, as defined in \eqref{def:perturbation_regularization}. Exact sampling from the locally regularized distribution on $\Yk$ with local scores $\stk \in \RR^{|\Yk|}$ is tractable via perturbed \emph{local} maximization:
    \begin{align*}
        Z \sim \cD \implies \y^{(k+1)}\triangleq\argmax_{\y\in\Yk}\langle\thetav+\varepsilon Z,\y\rangle \sim \argmaxoek\left(\stk\right).
    \end{align*}
\end{proposition}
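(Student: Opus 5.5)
The argument reduces to a computation with Fenchel conjugates: the local regularizer $\Omega_\varepsilon^{\Yk}=\Omegae\circ\iota_{\Yk}$ generates the local perturb-and-MAP distribution. Write $\cZ\triangleq\Yk$, $\iota\triangleq\iota_\cZ$, and
\[
G(\s)\triangleq\EE_{Z\sim\cD}\bigl[\max_{\y\in\cY}\{s_\y+\varepsilon\langle Z,\y\rangle\}\bigr],\qquad \s\in\RRY .
\]
By \cref{def:perturbation_regularization}, $\Omegae=G^*$ restricted to $\triangleY$. The function $G$ is convex and finite, hence closed. Because $\cD$ has a positive density, ties occur with probability zero, so $G$ is differentiable with
\[
\nabla G(\s)(\y)=\PP_Z\bigl(\y=\argmax_{\y'}\{s_{\y'}+\varepsilon\langle Z,\y'\rangle\}\bigr).
\]

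\textbf{Step 1: the local conjugate.} For a local score vector $\r\in\RR^{\cZ}$, I will compute
\[
(\Omega_\varepsilon^{\cZ})^*(\r)=\sup_{\q\in\triangle^\cZ}\langle \r,\q\rangle-\Omegae(\iota\q).
\]
Extend $\r$ to $\s^{(M)}\in\RRY$ by setting $s_\y=r_\y$ on $\cZ$ and $s_\y=-M$ off $\cZ$. For every $\q\in\triangle^\cZ$ we have $\langle\r,\q\rangle=\langle \s^{(M)},\iota\q\rangle$, for any $M$.

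I will then show that $(\Omega_\varepsilon^{\cZ})^*(\r)=G_\cZ(\r)\triangleq\EE_Z[\max_{\y\in\cZ}\{r_\y+\varepsilon\langle Z,\y\rangle\}]$.

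\emph{Upper bound.} Fenchel--Young gives $\langle \s^{(M)},\iota\q\rangle-\Omegae(\iota\q)\le G(\s^{(M)})$. As $M\to\infty$, $G(\s^{(M)})\downarrow G_\cZ(\r)$ by monotone or dominated convergence, since $\EE\|Z\|<\infty$ is needed and I will assume it, as implicit in the definition. Hence $(\Omega_\varepsilon^{\cZ})^*(\r)\le G_\cZ(\r)$.

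\emph{Lower bound.} Take $\q=\nabla G_\cZ(\r)\in\triangle^\cZ$. I will show that $\Omegae(\iota\q)\le\langle\r,\q\rangle-G_\cZ(\r)$, i.e., that for every $\s\in\RRY$,
\[
\langle\s,\iota\q\rangle-G(\s)\le \langle\r,\q\rangle-G_\cZ(\r).
\]
Since $G(\s)\ge G_\cZ(\s_{[\cZ]})$, the left side is at most $\langle\s_{[\cZ]},\q\rangle-G_\cZ(\s_{[\cZ]})$. That quantity is maximized at $\s_{[\cZ]}=\r$, by concavity and first-order optimality (gradient $\q-\nabla G_\cZ=0$). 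This inequality chain is the crux.

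\textbf{Step 2: gradient identification.} Having shown $(\Omega_\varepsilon^{\cZ})^*=G_\cZ$ on all of $\RR^\cZ$, Danskin's theorem gives $\argmaxoek(\r)=\nabla G_\cZ(\r)$. Strict convexity of $\Omega_\varepsilon^{\cZ}$ is inherited from \cref{proof:strict_convexity}, because $\iota$ is injective and affine, so this maximizer is unique. Differentiating under the expectation, justified by a.s. uniqueness of the argmax, yields
\[
\nabla G_\cZ(\r)(\y)=\PP_Z\Bigl(\argmax_{\y'\in\cZ}\{r_{\y'}+\varepsilon\langle Z,\y'\rangle\}=\y\Bigr).
\]
With $\r=\stk$, i.e., $r_\y=\langle\thetav,\y\rangle$, the inner objective is $\langle\thetav+\varepsilon Z,\y'\rangle$, which is the claim.

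\textbf{Expected obstacle.} The delicate part is the conjugacy in Step 1: the restriction to $\cZ$ must commute with conjugation. I handle this through the two-sided bound, the $M\to\infty$ extension and the monotonicity $G\ge G_\cZ\circ(\cdot)_{[\cZ]}$. Secondary technical points are a.s. uniqueness of perturbed argmaxes over the distinct points of $\cZ$, which follows from $\cD$ having a density, and integrability of $Z$.
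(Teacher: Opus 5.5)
Your argument is correct. It uses the same two ingredients as the paper's proof: the monotonicity $G(\s)\ge G_\cZ(\s_{[\cZ]})$, and the limit obtained by sending the scores outside $\Yk$ to $-\infty$. It organizes them differently, though.

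The paper computes the local regularizer itself. It splits the supremum defining $\Omega_\varepsilon^{\Yk}(\q)$ into inside and outside coordinates and notes that the inner supremum over the outside ones equals $\langle\s_{[\cZ]},\q\rangle-G_\cZ(\s_{[\cZ]})$. This gives $\Omega_\varepsilon^{\Yk}=G_\cZ^*$ in one line, where your $G_\cZ$ is the paper's $F_{\varepsilon,\Yk}$. It then invokes Fenchel--Moreau ($G_\cZ^{**}=G_\cZ$) and Danskin to get $\argmaxoek=\nabla G_\cZ$, and differentiates under the expectation as you do.

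You instead compute the conjugate in the other direction, $(\Omega_\varepsilon^{\cZ})^*=G_\cZ$, through a two-sided bound. That is longer, but your lower bound is an explicit primal certificate. It shows that $\q=\nabla G_\cZ(\r)$ attains the supremum, using only the subgradient inequality $G_\cZ(\r')\ge G_\cZ(\r)+\langle\r'-\r,\q\rangle$. So biconjugation and Danskin are not actually needed.

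Uniqueness also follows without the strict-convexity appendix, whose proof in the paper itself relies on the conjugacy established in this proposition. Any maximizer $\q^\star$ satisfies $(\Omega_\varepsilon^{\cZ})^*(\r')\ge(\Omega_\varepsilon^{\cZ})^*(\r)+\langle\r'-\r,\q^\star\rangle$ for all $\r'$. Hence $\q^\star\in\partial G_\cZ(\r)=\{\nabla G_\cZ(\r)\}$.

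Finally, the paper also tacitly needs your integrability assumption $\EE\|Z\|<\infty$. It is required for $G$ to be finite and to justify the limit $\s_{\mathrm{out}}\to-\infty$ inside the expectation.
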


\cref{prop:perturbation_local}, proved in \cref{proof:perturbation_local}, highlights the computational advantage of perturbation-based regularization: samples from the local regularized distribution can be obtained simply by performing local maximization with perturbed costs, leveraging the standard LNS oracles used in \cref{algo:lns}. We summarize this in \cref{algo:perturbed_lns}, whose iterates match those of \cref{algo:rlns} for $\Omega=\Omegae$.

\begin{algorithm}[H]
\caption{Perturbed LNS}
\label{algo:perturbed_lns}
\begin{algorithmic}[1]
\REQUIRE Logits $\thetav \in \RR^d$, finite feasible set $\cY \subset \RR^d$, noise distribution $\cD$, temperature parameter $\varepsilon>0$, initial solution $\y^{(0)} \in \cY$, number of iterations $K\in\NN$.
\FOR{$k = 0$ to $K-1$}
    \STATE Choose a subset of coordinates $\Sk \subset [d]$.
    \STATE Define the local neighborhood: $\Yk\triangleq \left\{\y \in \cY \mid \y_{[\Sk]}=\yk_{[\Sk]}\right\}$
    \STATE Sample $Z^{(k)}\sim \cD$
    \STATE Set $\y^{(k+1)}\gets \argmax_{\y\in\Yk}\langle\thetav+\varepsilon Z^{(k)},\y\rangle$
\ENDFOR
\STATE \textbf{Return $\frac{1}{K}\sum_{k=1}^K\yk$}
\end{algorithmic}
\end{algorithm}

\section{Learning with RLNS}
\label{sec:lns_fyl}

In this section, we show how RLNS can be integrated as a combinatorial layer in a neural network. 

\subsection{RLNS-$\infty$ and global Fenchel-Young loss}
\label{sec:rlns_inf}

In this section, we consider the asymptotic regime where the number of RLNS iterations grows large ($K\to\infty$). Let $\Omega:\triangleY\to\RR$ be a valid distribution-space regularizer, and $L_\Omega: \RR^d \times \triangleY \to \RR_+$ be the corresponding global Fenchel-Young loss, as defined in \cref{def:dist_fyl}. For any target distribution $\p \in \triangleY$, the gradient of this loss is given by $\nabla_\thetav L_\Omega(\thetav\,;\p) = \EE_{\argmaxo(\st)}[Y] - \EE_\p[Y]$.

Because the expectation of the global regularized distribution $\argmaxo(\st)$ is intractable, we can use the RLNS iterates $(\yk)_{k\in\NN}$ of \cref{algo:rlns} to construct an MCMC estimator of this gradient:
\begin{align*}
    \nabla_\thetav L_\Omega(\thetav\,;\p) \approx \frac{1}{K}\sum_{k=1}^K\yk - \EE_\p[Y].
\end{align*}

The behavior and bias of this stochastic gradient estimator depend heavily on the chosen regularization and the number of iterations $K$.

\paragraph{Shannon-regularized case.} When $\Omega = -\gamma H^s$, \cref{prop:gibbs_mcmc} guarantees that the stationary distribution of the RLNS Markov chain exactly matches the target distribution, i.e., $\piLNS = \argmaxo(\st)$. In this setting, the only source of bias in the gradient estimator stems from the finite mixing time of the Markov chain. While biased for any fixed $K$, the estimator is \emph{consistent} in the $K\to\infty$ limit.

\paragraph{General case.} For general regularizers $\Omega$, the estimator accumulates two distinct sources of bias: the finite mixing time of the chain, and the intrinsic distance between the RLNS stationary distribution $\piLNS$ and the true global target $\argmaxo(\st)$. In \cref{sec:approximation_guarantees}, we derive theoretical bounds on this gap in the general setting (\cref{prop:bias_bound,prop:consistency_bound_smoothness}), as well as specific bounds tailored to the perturbation-based setting of \cref{sec:perturbed_lns} based on the constraint graph (\cref{prop:consistency_bound_perturbation,prop:cut_bound}).

These bounds show that the approximation error scales with the number of shared constraints between frozen and free variables, providing a theoretical justification for variable selection schemes that minimize such cross-dependencies. We empirically validate them in \cref{sec:mwis,fig:mwis}.

\paragraph{The practical choice of $K$.} In practice, treating $K$ as a hyperparameter allows us to interpolate between two learning paradigms. Setting $K=1$ minimizes computational overhead and provides unbiased gradients for a local Fenchel-Young loss (\cref{sec:rlns_1}), yielding a principled Fenchel-Young counterpart to pseudolikelihood. Conversely, larger values of $K$ better approximate the global Fenchel-Young loss gradients but incur higher computational costs.

\subsection{RLNS-$1$ and local Fenchel-Young loss}
\label{sec:rlns_1}

We study study the special case of $K=1$ RLNS iteration. For any target feasible solution $\y\in\cY$ and subset of fixed coordinates $S\subseteq[d]$, we denote the corresponding local neighborhood by $\cY_S\triangleq\{\y'\in\cY\mid \y'_{[S]}=\y_{[S]}\}$. Let $\nu\in\Delta^{2^{[d]}}$ denote a distribution on coordinate subsets.

\begin{definition}[Local Fenchel-Young loss]
\label{def:local_fyl}
    Define the expected smoothed local maximum value function $F_\y:\RR^{|\cY|}\to\RR\cup\{+\infty\}$
    \begin{align*}
    F_\y(\s)\triangleq\EE_{S\sim\nu}\left[f_{\Omega^{\cY_S}}\left(\s_{\left[\cY_S\right]}\right)\right].
    \end{align*}
    Letting its convex conjugate be $\Omega_\y\triangleq(F_\y)^*$, the corresponding target-dependent, \emph{local Fenchel-Young loss} $L_{\Omega_\y}:\RR^d\times\dom(\Omega_\y)\to\RR_+$ is given by:
    \begin{align*}
        L_{\Omega_\y}(\thetav\,;\p)\triangleq F_\y(\st) + \Omega_\y(\p) - \langle\thetav,\EE_\p[Y]\rangle.
    \end{align*}
\end{definition}

The following proposition, which we prove in \cref{proof:one_step_grad}, shows we can obtain unbiased stochastic gradients for this loss directly from a single step of RLNS.

\begin{proposition}[Unbiased gradients via RLNS-$1$]
\label{prop:one_step_grad}
    For any $\y\in\cY$, we have $\delta^\cY_\y \in \dom(\Omega_\y)$. Moreover, the gradient of the local Fenchel-Young with target $\y$ is given by:
    \begin{align*}
        \nabla_\thetav L_{\Omega_\y}(\thetav\,;\delta^\cY_\y) = \EE_{S\sim\nu}\left[ \EE_{\p_{\Omega^{\cY_S}} \left((\st)_{\left[\cY_S\right]}\right)}[Y]\right] - \y.
    \end{align*}
    Consequently, executing \cref{algo:rlns} with ground-truth initialization $\y^{(0)}=\y$ and a single iteration $K=1$ yields an unbiased estimator of $\nabla_\thetav L_{\Omega_\y}(\thetav\,;\delta^\cY_\y)$. 
\end{proposition}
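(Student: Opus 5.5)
The plan is to treat $F_\y$ as a convex, differentiable function on $\RRY$ and use Fenchel--Young duality. Membership $\delta^\cY_\y\in\dom(\Omega_\y)$ then comes from a uniform upper bound on $F_\y$ in a suitable direction. The gradient formula follows from Danskin's theorem applied termwise together with the chain rule through $\thetav\mapsto\st$.

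\textbf{Step 1 (convexity and finiteness).} For each $S$, the map $\s\mapsto f_{\Omega^{\cY_S}}(\s_{[\cY_S]})$ is the conjugate of the l.s.c. strictly convex function $\Omega^{\cY_S}$ on the compact simplex $\triangle^{\cY_S}$. It is therefore finite, convex and, by Danskin's theorem, differentiable with gradient $\iota_{\cY_S}(\p_{\Omega^{\cY_S}}(\s_{[\cY_S]}))$. Since $\cY$ is finite, $\nu$ has finite support $2^{[d]}$, so $F_\y$ is a finite convex combination of such functions. Hence $F_\y$ is finite, convex and differentiable with
\begin{align*}
\nabla F_\y(\s)=\EE_{S\sim\nu}\left[\iota_{\cY_S}\left(\p_{\Omega^{\cY_S}}(\s_{[\cY_S]})\right)\right].
\end{align*}

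\textbf{Step 2 ($\delta^\cY_\y\in\dom(\Omega_\y)$).} We have $\Omega_\y(\delta^\cY_\y)=\sup_\s\, s_\y-F_\y(\s)$. The key observation is that $\y\in\cY_S$ for every $S$, since $\y$ agrees with itself on $S$. Each local problem therefore satisfies
\begin{align*}
f_{\Omega^{\cY_S}}(\s_{[\cY_S]})\ge s_\y-\Omega^{\cY_S}(\delta^{\cY_S}_\y)=s_\y-\Omega(\delta^\cY_\y),
\end{align*}
using the test point $\delta^{\cY_S}_\y$ and $\iota_{\cY_S}(\delta^{\cY_S}_\y)=\delta^\cY_\y$. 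Taking expectations gives $F_\y(\s)\ge s_\y-\Omega(\delta^\cY_\y)$, so $\Omega_\y(\delta^\cY_\y)\le\Omega(\delta^\cY_\y)<\infty$. Finiteness of $\Omega$ at vertices holds because $\Omega$ is real-valued on $\triangleY$. This step is the main point requiring care: membership in the domain hinges entirely on the target lying in every neighborhood. For a general $\p$ this can fail, which explains the target-dependent loss.

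\textbf{Step 3 (gradient).} In $L_{\Omega_\y}(\thetav;\delta^\cY_\y)=F_\y(\st)+\Omega_\y(\delta^\cY_\y)-\langle\thetav,\y\rangle$, the middle term is a finite constant. The map $\thetav\mapsto\st$ is linear with adjoint $\q\mapsto\sum_{\y'}q_{\y'}\y'=\EE_\q[Y]$. The chain rule with Step 1 then gives $\EE_{S\sim\nu}\big[\EE_{\p_{\Omega^{\cY_S}}((\st)_{[\cY_S]})}[Y]\big]-\y$.

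\textbf{Step 4 (unbiasedness).} Run \cref{algo:rlns} with $\y^{(0)}=\y$, $K=1$, and $S^{(0)}\sim\nu$. Then $\cY^{(0)}=\cY_{S^{(0)}}$ and $\y^{(1)}\sim\p_{\Omega^{\cY_{S^{(0)}}}}((\st)_{[\cY_{S^{(0)}}]})$. By the tower property, $\EE[\y^{(1)}]=\EE_S[\EE_{\p_{\Omega^{\cY_S}}}[Y]]$, so the returned average $\y^{(1)}$ minus $\y$ is an unbiased estimator of the gradient.
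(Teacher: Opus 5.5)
Your proof is correct. Steps 1, 3 and 4 match the paper's argument: Danskin termwise, the chain rule through the linear map $\thetav\mapsto\st$ (whose adjoint is $\q\mapsto\EE_\q[Y]$), and the tower property. The genuine difference is how you show $\delta^\cY_\y\in\dom(\Omega_\y)$.

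The paper uses conjugate calculus. For each local term $f_S(\s)\triangleq f_{\Omega^{\cY_S}}(\s_{[\cY_S]})$ it computes $\dom(f_S^*)=\iota^{\cY_S}(\triangle^{\cY_S})$. It then writes $\Omega_\y=(F_\y)^*$ as the inf-convolution of the conjugates of the scaled terms $\nu(S)f_S$. This gives the full domain $\dom(\Omega_\y)=\sum_S\nu(S)\,\iota^{\cY_S}(\triangle^{\cY_S})$, a weighted Minkowski sum of padded local simplices, which contains $\delta^\cY_\y$ because $\y\in\cY_S$ for every $S$.

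You instead use $\delta^{\cY_S}_\y$ as a test point in each local maximization. This gives the uniform minorant $F_\y(\s)\ge s_\y-\Omega(\delta^\cY_\y)$, hence $\Omega_\y(\delta^\cY_\y)\le\Omega(\delta^\cY_\y)<\infty$.

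Your route is shorter and more elementary. It needs no conjugate-of-a-sum formula, whose exactness requires a qualification condition; that condition holds here because every $f_S$ is finite everywhere, but the paper does not state it. It also gives an explicit bound on the constant term of the loss, and that bound is in fact an equality. In the inf-convolution representation, any decomposition $\delta^\cY_\y=\sum_S\nu(S)\,\iota^{\cY_S}(\q_S)$ with $\q_S\in\triangle^{\cY_S}$ forces each $\iota^{\cY_S}(\q_S)=\delta^\cY_\y$, since $\delta^\cY_\y$ is an extreme point of $\triangleY$. So $\Omega_\y(\delta^\cY_\y)=\Omega(\delta^\cY_\y)$, which is $0$ for $\Omega=-\gamma H^s$ and matches the value computed in the proof of the pseudolikelihood equivalence.

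The paper's route, for its part, identifies the whole domain of $\Omega_\y$. That turns your closing remark about which targets $\p$ give a finite loss into a precise statement.
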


\paragraph{Connection with pseudolikelihood.} 
In the Shannon-regularized setting $\Omega=-\gamma H^s$, this local Fenchel-Young loss recovers the pseudolikelihood (a.k.a.\ composite likelihood) objective \citep{besag_spatial_1974},  which maximizes the likelihood of subsets of observed variables conditioned on the rest.

\begin{proposition}[Equivalence to pseudolikelihood]
\label{prop:composite_likelihood}
    When $\Omega=-\gamma H^s$, the local Fenchel-Young loss with target $\y$ equals the expected negative log-likelihood of $\y$ conditioned on $\y_{[S]}$:
    $$ L_{\Omega_\y}(\thetav\,;\delta^\cY_\y) = -\gamma\cdot\EE_{S\sim\nu}\left[ \log \left(\argmaxo(\st)(\y\mid\y_{[S]})\right)\right]. $$
\end{proposition}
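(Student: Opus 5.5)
Our plan is to compute the three terms of $L_{\Omega_\y}(\thetav\,;\delta^\cY_\y)=F_\y(\st)+\Omega_\y(\delta^\cY_\y)-\langle\thetav,\y\rangle$ in closed form, using the Shannon structure of $\Omega=-\gamma H^s$. Under $\iota_{\cY_S}$, padding with zeros leaves the entropy unchanged (with $0\log 0=0$). Hence $\Omega^{\cY_S}=-\gamma H^s$ on $\triangle^{\cY_S}$, and $f_{\Omega^{\cY_S}}(\s_{[\cY_S]})=\gamma\log\sum_{\y'\in\cY_S}\exp(s_{\y'}/\gamma)$ is a local log-sum-exp. This gives
\begin{align*}
F_\y(\st)=\EE_{S\sim\nu}\Big[\gamma\log\sum_{\y'\in\cY_S}\exp(\langle\thetav,\y'\rangle/\gamma)\Big].
\end{align*}
By \cref{prop:gibbs_mcmc}, $\argmaxo(\st)(\y\mid\y_{[S]})=\exp(\langle\thetav,\y\rangle/\gamma)/\sum_{\y'\in\cY_S}\exp(\langle\thetav,\y'\rangle/\gamma)$. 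Since $\y\in\cY_S$ for every $S$, we get $-\gamma\log\argmaxo(\st)(\y\mid\y_{[S]})=f_{\Omega^{\cY_S}}((\st)_{[\cY_S]})-\langle\thetav,\y\rangle$. Taking expectations over $S$, the right-hand side of the claim equals $F_\y(\st)-\langle\thetav,\y\rangle$. It therefore suffices to show $\Omega_\y(\delta^\cY_\y)=0$.

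The key step is computing $\Omega_\y(\delta^\cY_\y)=F_\y^*(\delta^\cY_\y)=\sup_{\s\in\RRY}\{s_\y-F_\y(\s)\}$. For the lower bound we use $\s=0$: we get $-F_\y(0)=-\gamma\EE_S\log|\cY_S|$, which is not obviously $0$, so we need a better sequence. Take $s_\y=0$ and $s_{\y'}=-t$ for $\y'\neq\y$, and let $t\to\infty$. Each local log-sum-exp then tends to $\gamma\log 1=0$ because $\y\in\cY_S$, so $s_\y-F_\y(\s)\to0$ and the supremum is at least $0$.

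For the upper bound, each local log-sum-exp dominates $s_\y$ since $\y\in\cY_S$: $f_{\Omega^{\cY_S}}(\s_{[\cY_S]})\ge\gamma\log\exp(s_\y/\gamma)=s_\y$. Equivalently, this is the generic bound $f_{\Omega^{\cY_S}}(\s_{[\cY_S]})\ge s_\y-\Omega^{\cY_S}(\delta_\y)$ with $\Omega^{\cY_S}(\delta_\y)=0$. Averaging over $S$ gives $F_\y(\s)\ge s_\y$, so $\Omega_\y(\delta^\cY_\y)\le0$. Hence $\Omega_\y(\delta^\cY_\y)=0$, which also confirms $\delta^\cY_\y\in\dom(\Omega_\y)$, consistent with \cref{prop:one_step_grad}.

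The main obstacle is this conjugate evaluation. The supremum is not attained, so one must argue via the limiting sequence rather than first-order conditions. One must also handle $F_\y$ possibly being $+\infty$ or ill-defined off the relevant coordinates; this does not arise here because $F_\y$ is finite everywhere. Combining the pieces gives $L_{\Omega_\y}(\thetav\,;\delta^\cY_\y)=F_\y(\st)-\langle\thetav,\y\rangle=-\gamma\,\EE_{S\sim\nu}[\log\argmaxo(\st)(\y\mid\y_{[S]})]$.
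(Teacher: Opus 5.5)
Your proposal is correct and follows essentially the same route as the paper. You write $F_\y(\st)$ as an expected local log-sum-exp, show $\Omega_\y(\delta^\cY_\y)=0$ via the bound $F_\y(\s)\ge s_\y$ together with the limiting sequence $s_\y=0$, $s_{\y'}\to-\infty$, and then match each term with $-\gamma\log$ of the local conditional; the only cosmetic difference is that you cite \cref{prop:gibbs_mcmc} for the conditional probability, whereas the paper recomputes it directly.
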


The derivation is given in \cref{proof:composite_likelihood}. Thus, since the global Fenchel-Young loss is given by  $L_\Omega(\thetav,\delta^\cY_\y)=-\gamma\log\argmaxo(\st)(\y)$ in this setting, the number of entropic RLNS iterations enables interpolating between pseudolikelihood and exact maximum likelihood estimation.

\section{Experiments}
\label{sec:experiments}

\subsection{Estimation on $\kappa$-hot vectors}
\label{sec:k-hot}

To validate our proposed estimators in a controlled setting, we consider the set of $\kappa$-hot vectors, $\cY\triangleq \{\y\in\{0,1\}^d\mid \sum_{i=1}^dy_i=\kappa\}$. In this context, the expectation of the global regularized distribution $\EE_{\argmaxo(\st)}[Y]$ can be exactly computed via regularized dynamic programming for entropic regularization \citep{vivier-ardisson_differentiable_2026}, or accurately estimated via exact i.i.d. Monte-Carlo sampling for perturbation-based regularization. This provides a reliable ground truth against which we can evaluate our approach. Full experimental details are provided in \cref{sec:appendix_k_hot_details}.

\paragraph{Baselines.} We compare RLNS against exact, i.i.d. Monte-Carlo estimation (directly sampling from the global perturb-and-MAP \citep{papandreou_perturb-and-map_2011,berthet_learning_2020} or Gibbs distribution) and Local Search MCMC \citep{vivier-ardisson_learning_2025} using the same neighborhood system. 

\paragraph{Results.} We track the mean absolute error (MAE) to $\EE_{\argmaxo(\st)}[Y]$ over iterations. Allowing more variables to remain free at each iteration accelerates convergence, though it increases subproblem size (\cref{fig:k_hot}, left). RLNS converges faster than Local Search MCMC (\cref{fig:k_hot}, center), as its acceptance rate of $100\%$ does not suffer from updating more variables or using a low regularization strength (\cref{fig:k_hot}, right). Rao-Blackwellization slightly improves convergence, especially at the start of the MCMC.

\begin{figure}
    \centering
    \includegraphics[width=0.99\linewidth]{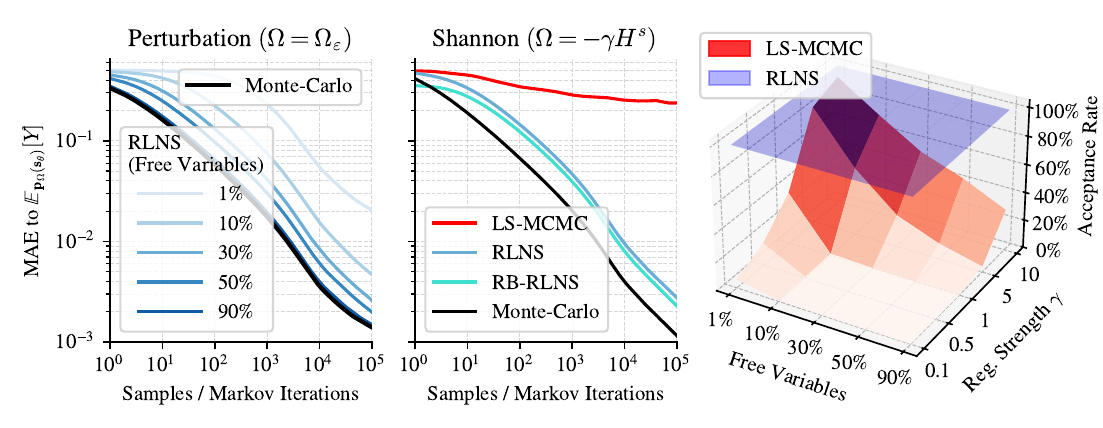}
    \vspace{-0.25cm}
    \caption{ Estimation on $\kappa$-hot vectors. \textbf{Left:} convergence of RLNS with varying proportion of free variables compared to i.i.d. Monte-Carlo estimation. \textbf{Center:} convergence with $30\%$ of free variables, against Local Search MCMC and i.i.d. Monte-Carlo estimation (RB- denotes Rao-Blackwellization). \textbf{Right:} acceptance rate with varying proportion of free variables and regularization strength $\gamma$.}
    \label{fig:k_hot}
    \vspace{-0.2cm}
\end{figure}

\subsection{Generalized assignment problem}
\label{sec:gap}

We now evaluate RLNS as a combinatorial layer on the generalized assignment problem (GAP) \citep{martello_generalized_1992}. In the GAP, we are given $n$ items and $m$ bins with capacity $C_j\in\NN$. Each item-bin assignment has value $\theta_{ij}\in\RR$ and a weight $w_{ij}\in\NN$. The combinatorial problem writes:
\begin{align*}
    \max_{\y\in\{0,1\}^{n\times m}}\;&\sum_{i=1}^n\sum_{j=1}^m\theta_{ij}y_{ij} &&\text{s.t.} \quad \sum_{j=1}^m y_{ij}  \leq 1 \quad\forall i\in[n],\quad \text{and} \quad\sum_{i=1}^n w_{ij}y_{ij}  \leq C_j \quad\forall j\in[m].
\end{align*}
The first constraint ensures each item is assigned to at most one bin, while the second one ensures that the capacity of each bin is not exceeded. Given a dataset of $(\x,\y)$ pairs where $\x\in\RR^{n\times m\times p}$ contains relational features for each item-bin pair, the goal is to learn a neural network $f_W:\RR^p\to\RR$ that predicts assignment scores $\theta_{ij} = f_W(\x_{ij})$, such that $\hat{\y}(\thetav)\approx \y$.

\paragraph{Local oracles.} We choose to freeze the assignments to all bins except a randomly selected one at each RLNS iteration. The LNS subproblem is then a $0/1$ 1D Knapsack problem, which admits an exact dynamic programming solution \citep{kellerer_knapsack_2004}, and we can sample from the local regularized distributions via perturbed or regularized dynamic programming \citep{vivier-ardisson_differentiable_2026}. Detailed explanations and full experimental details are given in \cref{sec:appendix_gap_details}.

\paragraph{Results.} Higher numbers of RLNS iterations $K$ better approximate global perturbed Fenchel-Young (PFY, \citet{berthet_learning_2020}) learning (\cref{fig:gap_iterations}). Augmenting $K$ also reduces gradient variance, which is further reduced by Rao-Blackwellization (\cref{fig:gap_time_variance}, top). Local MAP oracles are orders or magnitude faster than global ones, especially when increasing scale (\cref{fig:gap_time_variance}, bottom) (we set $m=n/10$).

\begin{figure}[t!]
    \centering
    \begin{subfigure}[b]{0.325\textwidth}
        \centering
        \includegraphics[width=\textwidth]{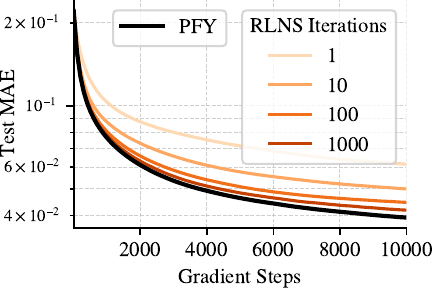}
        \caption{Average test set MAE between predicted and ground-truth solutions of RLNS and PFY on the GAP.}
        \label{fig:gap_iterations}
    \end{subfigure}
    \hfill
    \begin{minipage}[b]{0.325\textwidth}
        \begin{subfigure}[b]{\textwidth}
            \centering
            \includegraphics[width=\textwidth]{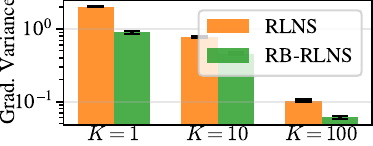}
        \end{subfigure}
        
        \vspace{-0.05cm} %
        
        \begin{subfigure}[b]{\textwidth}
            \centering
            \includegraphics[width=\textwidth]{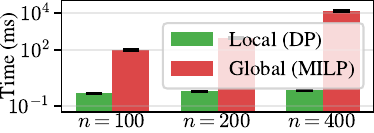}
            \caption{Variance of RLNS gradients and oracle wall-clock time on the GAP.}
            \label{fig:gap_time_variance}
        \end{subfigure}
    \end{minipage}
    \hfill
    \begin{subfigure}[b]{0.325\textwidth}
        \centering
        \includegraphics[width=\textwidth]{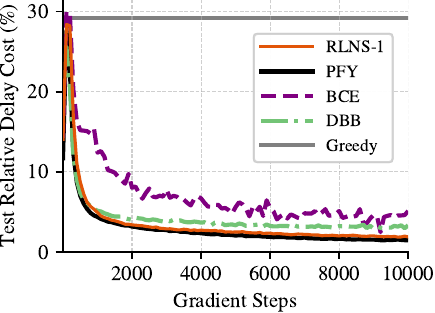}
        \caption{Delay cost relative to the expert policy on stochastic vehicle scheduling test set instances.}
        \label{fig:svsp}
    \end{subfigure}
    
    \vspace{-0.05cm}
\end{figure}

\subsection{Stochastic vehicle scheduling}
\label{sec:svsp}

We now evaluate RLNS as a combinatorial layer on a stochastic vehicle scheduling benchmark \citep{parmentier_learning_2022,dalle_learning_2022,hoppe_structured_2025}. The objective is to route a fleet of vehicles in order to serve a set of spatio-temporal tasks, minimizing the sum of the deterministic vehicle cost and the expected delay cost across a set of stochastic scenarios. The problem is framed as finding paths on a directed acyclic graph $\cG=(\cV, \cE)$, where a feasible routing $\y \in \{0,1\}^{|\cE|}$ satisfies flow conservation and unit demand constraints. To imitate the ground-truth expert (the optimal solution of a two-stage stochastic MILP with full scenario knowledge), a feed-forward neural network predicts edge costs $\theta_e$ from edge-level features, and the predicted routing is obtained via a flow LP. Complete experimental details are provided in \cref{sec:appendix_svsp_details}.

\paragraph{Approach and baselines.} 
We use RLNS-$1$ with perturbation-based regularization. At each step, we freeze the edges corresponding to $66\%$ of randomly selected vehicle routes, and exactly re-optimize the remaining free variables using a significantly smaller flow LP. We compare RLNS against global perturbed Fenchel-Young learning (PFY, \citet{berthet_learning_2020}), blackbox differentiation (DBB, \citet{vlastelica_differentiation_2020}), an unstructured binary cross-entropy loss  (BCE, see, e.g., \citet{joshi_efficient_2019}), and a non-learned greedy policy that minimizes only the deterministic vehicle cost.

\paragraph{Results.} 
All learned methods quickly reach the optimal vehicle cost by using the same number of vehicles as the expert. However, as shown in \cref{fig:svsp}, in terms of delay cost, RLNS-$1$ outperforms DBB and BCE baselines and nearly matches the global PFY loss while only requiring the solver to process a subproblem of approximately one-third the original size.%
\color{black}

\section*{Conclusion}

\paragraph{Bridging communities and future work.}

While we exhibit links between LNS and Gibbs sampling and prior work bridged local search with Metropolis-Hastings \citep{vivier-ardisson_learning_2025}, we believe more combinatorial optimization metaheuristics can be revisited from a sampling perspective. For instance, LNS can actually use local search heuristics as approximate sub-problem solvers in practice \citep{pisinger2018large}: one could therefore combine both perspectives and bridge such "local search within LNS" metaheuristics with "Metropolis-within-Gibbs" MCMC samplers \citep{roberts_geometric_1997,ascolani_scalability_2026}.

\paragraph{Limitations.} While Shannon regularization theoretically guarantees exact block Gibbs sampling, efficient local marginal oracles can remain intractable even within restricted neighborhoods for many combinatorial problems. Our perturbation-based formulation circumvents this by relying solely on standard local MAP oracles, ensuring RLNS remains tractable whenever standard LNS is. However, this scalability comes at the cost of introducing a structural bias into the stationary distribution.

\bibliography{main}
\bibliographystyle{icml}

\appendix

\section*{Notation}

\begin{itemize}
    \item For any vector $\x\in\RR^D$ and subset of coordinates $I\subseteq[D]$, we denote by $\x_{[I]}$ the vector $(x_i)_{i\in I}\in\RR^{|I|}$.

    \item For any finite set $\cY$, we denote by $\triangleY\triangleq\{\p\in\RRY_{\geq0}\mid\sum_{\y\in\cY}p_\y=1\}$ the set of categorical distributions on $\cY$.

    \item For any $\p\in\triangleY$ and $\y\in\cY$, we equivalently use functional notation $\p(\y)$ or vector notation $p_\y$ for the corresponding probability.

    \item For any subset of coordinates $I\subseteq[d]$, any solution $\y'\in\cY$ and $\y\in\cY$ such that $\y_{[I]}=\y'_{[I]}$\,, we denote by:
    \begin{align*}
        \p(\y\mid\y'_{[I]})\triangleq \frac{\p(\y)}{\sum_{\substack{\y''\in\cY\\\text{s.t. } \y''_{[I]}=\y'_{[I]}}}\p(\y'')}
    \end{align*}
    the corresponding conditional probability (we always use the functional notation for conditional probabilities).

    \item For any finite set $\cY$ and any element $\y\in\cY$, we denote by $\delta^\cY_\y\in\triangleY$ the corresponding Dirac distribution. 

    \item For any two distributions $\p, \q \in \triangle^\cY$, we denote their total variation (TV) distance by:
    \begin{align*}
        \TV(\p,\q) \triangleq \frac{1}{2} \sum_{\y \in \cY} | p_\y - q_\y |.
    \end{align*}

    \item We denote the complement of any event $E$ by $\overline{E}$.
    \item For any function $f:\RR^D\to\RR\cup\{+\infty\}$, we denote by $\dom(f)\triangleq\{\x\in\RR^D\mid f(\x)<+\infty\}$ its domain.
    \item For any function $f:\RR^D\to\RR\cup\{+\infty\}$, we denote by $$f^*(\y)\triangleq \sup_{\x\in\dom(f)}\{\langle\x,\y\rangle-f(\x)\}$$ its convex conjugate.
    \item We denote the convex hull of any set $\cY\subseteq\RR^D$ by $\conv(\cY)$.
\end{itemize}

\newpage
\section{Geometrical insights and approximation bounds for general regularizers}
\label{sec:approximation_guarantees}

When $\Omega\neq-\gamma H^s$, the locally regularized distributions used by RLNS may differ from the conditional distributions of the global smoothed maximizer $\argmaxo(\st)$, thereby precluding the block Gibbs sampling perspective established in \cref{prop:gibbs_mcmc}. In this section, we give geometrical insights into this more general case, and establish bounds on the distance between the algorithm's stationary distribution $\piLNS$ and the target $\argmaxo(\st)$.

\subsection{Geometrical insights}

We first show that the update distributions of RLNS and the ideal, exact Gibbs sampler with $\argmaxo(\st)\in\triangleY$ as target are in fact two different projections of the latter onto the local simplex $\trianglek$. 

\begin{proposition}[RLNS and Gibbs updates as Bregman projections]
    \label{prop:updates_as_projections}
    Let $\Yk\subseteq\cY$ be the neighborhood used by \cref{algo:rlns} at step $k$, and $B_\Omega:\triangleY\times\triangleY\to\RR$ denote the Bregman divergence generated by $\Omega$. 
    The local RLNS update distribution corresponds to the projection of the global smoothed maximizer $\argmaxo(\st)$ onto the local simplex $\trianglek$ with respect to $B_\Omega$:
    \begin{align*}
        \argmaxok\left(\stk\right) = \argmin_{\q \in \trianglek} B_\Omega\left(\iotak(\q)\| \argmaxo(\st)\right).
    \end{align*}
    Similarly, the update distribution of the ideal block Gibbs sampler with target $\argmaxo(\st)$ is its projection onto the local simplex with respect to the Kullback-Leibler divergence:
    \begin{align*}
        \argmaxo(\st)\left(\cdot \mid \yk_{[\Sk]}\right) = \argmin_{\q \in \trianglek} \KL\left(\iotak(\q)\| \argmaxo(\st)\right).
    \end{align*}
\end{proposition}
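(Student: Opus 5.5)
We prove the two identities separately. Both rest on writing the objective restricted to $\trianglek$ in terms of the global smoothed maximizer. Write $\p^\star\triangleq\argmaxo(\st)$. Throughout we assume, as the Bregman divergence $B_\Omega(\cdot\|\p^\star)$ requires, that $\Omega$ is differentiable at $\p^\star$. For the entropic or perturbation case this means $\p^\star$ lies in the relative interior of $\triangleY$. If needed, we replace $\nabla\Omega(\p^\star)$ by a subgradient $\bm{g}\in\partial\Omega(\p^\star)$ and define $B_\Omega$ with that choice.

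\textbf{First identity.} The optimality conditions of $\p^\star=\argmax_{\q\in\triangleY}\langle\st,\q\rangle-\Omega(\q)$ give $\st\in\partial\Omega(\p^\star)+N_{\triangleY}(\p^\star)$. Concretely, there exist $\lambda\in\RR$ and $\bm{\mu}\ge0$, with $\mu_\y=0$ whenever $p^\star_\y>0$, such that $\st=\bm{g}+\lambda\1-\bm{\mu}$ with $\bm{g}\in\partial\Omega(\p^\star)$. We take $B_\Omega(\p\|\p^\star)=\Omega(\p)-\Omega(\p^\star)-\langle\bm{g},\p-\p^\star\rangle$. For $\q\in\trianglek$, up to constants independent of $\q$,
\begin{align*}
B_\Omega(\iotak(\q)\|\p^\star)=\Omega^{\Yk}(\q)-\langle\stk,\q\rangle+\lambda-\langle\bm{\mu},\iotak(\q)\rangle .
\end{align*}
Here we used $\langle\1,\iotak(\q)\rangle=1$ and $\langle\st,\iotak(\q)\rangle=\langle\stk,\q\rangle$. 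In the differentiable, interior case we have $\bm{\mu}=\0$, so minimizing over $\trianglek$ is exactly maximizing $\langle\stk,\q\rangle-\Omega^{\Yk}(\q)$, which is the definition of $\argmaxok(\stk)$.

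Uniqueness on both sides follows from strict convexity of $\Omega^{\Yk}=\Omega\circ\iotak$, which holds because $\iotak$ is injective and affine.

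\textbf{Main obstacle.} The difficulty is the boundary term $\bm{\mu}$ when $\p^\star$ has zeros. I would handle it by assuming, as is implicit in the statement, that $\Omega$ is differentiable on $\ri(\triangleY)$ with $\p^\star$ interior. This holds for Shannon entropy and for $\Omegae$, since $\cD$ has positive density. Otherwise, I would absorb $\bm{\mu}$ into the choice of subgradient $\bm{g}'=\st-\lambda\1\in\partial\Omega(\p^\star)$, so that the same computation goes through exactly with $\bm{\mu}=\0$.

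\textbf{Second identity.} Let $Z\triangleq\sum_{\y'\in\Yk}p^\star_{\y'}$, which is the normalizer of the conditional, since $\Yk$ is exactly the set of $\y$ with $\y_{[\Sk]}=\yk_{[\Sk]}$. For $\q\in\trianglek$,
\begin{align*}
\KL(\iotak(\q)\|\p^\star)=\sum_{\y\in\Yk}q_\y\log\frac{q_\y}{p^\star_\y}=\KL\bigl(\q\,\big\|\,\p^\star(\cdot\mid\yk_{[\Sk]})\bigr)-\log Z .
\end{align*}
This uses $\sum_{\y\in\Yk} q_\y=1$. Since $\KL\ge0$ with equality if and only if the two arguments are equal, the unique minimizer is the conditional distribution.

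This identity is also consistent with \cref{prop:gibbs_mcmc}: when $\Omega=-\gamma H^s$, $B_\Omega=\gamma\KL$, so the two projections coincide.
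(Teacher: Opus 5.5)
Your proof is correct. For the first identity you follow the paper's route: substitute the global first-order condition into the linear term of $B_\Omega(\iotak(\q)\|\p^\star)$, then discard $\q$-independent terms using $\langle\1,\iotak(\q)\rangle=1$.

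The difference is that the paper's Lagrangian carries only the equality multiplier, writing $\nabla\Omega(\p^\star)=\st-\nu^\star\1$. This tacitly assumes $\p^\star\in\ri(\triangleY)$. Your normal-cone term $\bm{\mu}$ makes that assumption explicit, and it is not cosmetic. Take $\Omega=\frac12\|\cdot\|_2^2$, $\cY=\{\e_1,\e_2,\e_3\}$ and $\thetav=(1,\tfrac12,-10)$, so that $\p^\star=(\tfrac34,\tfrac14,0)$. With $\yk=\e_2$ and $\Sk=\{1\}$ (so $\Yk=\{\e_2,\e_3\}$), the $B_\Omega$-projection using $\nabla\Omega(\p^\star)=\p^\star$ is $(\tfrac58,\tfrac38)$, while the local sparsemax is $(1,0)$. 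So the statement genuinely needs either your interiority assumption or your convention $\bm{g}=\st-\lambda\1$. The latter is a legitimate subgradient of $\Omega$ on $\triangleY$ precisely because $\p^\star$ is optimal.

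For the second identity your route differs from the paper's. The paper solves the Lagrangian stationarity equations and normalizes. That needs $p^\star_\y>0$ on $\Yk$ in order to take logarithms, and it leaves global optimality to a convexity argument. Your exact decomposition $\KL(\iotak(\q)\|\p^\star)=\KL(\q\|\p^\star(\cdot\mid\yk_{[\Sk]}))-\log\sum_{\y'\in\Yk}p^\star_{\y'}$ gives optimality and uniqueness in one line via Gibbs' inequality, and it tolerates zero entries.

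One minor caveat: your claim that $\p^\star$ is interior for $\Omegae$ requires every point of $\cY$ to be a vertex of $\conv(\cY)$. This holds for $\cY\subseteq\{0,1\}^d$, and your subgradient fallback covers the general case anyway.
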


\cref{prop:updates_as_projections}, proved in \cref{proof:updates_as_projections}, shows that RLNS and the ideal Gibbs sampler update distributions both correspond to Bregman projections of the same target distribution $\argmaxo(\st)$ onto the local simplex $\trianglek$, simply with distinct geometries. While RLNS uses the same regularization $\Omega$ as the one used by the global target distribution, Gibbs sampling uses Shannon's negative entropy.

This gives a geometrical point of view on the claim of \cref{prop:gibbs_mcmc}. Indeed, since we have $\KL=\frac{1}{\gamma}B_{-\gamma H^s}$ where $B_{-\gamma H^s}$ is the Bregman divergence generated by Shannon's scaled negative entropy, the corresponding projection problems are equivalent when $\Omega=-\gamma H^s$ in the first place: the $B_\Omega$-projections match $\KL$-projections, and RLNS matches an exact Gibbs sampler for $\argmaxo(\st)$.

We now give a block-coordinate ascent perspective on the expected RLNS updates.

\begin{proposition}[Expected RLNS as block-coordinate ascent]
\label{prop:mean_space_bca}
Assume that $\cY \subseteq \{0,1\}^d$, and let $\cM \triangleq \conv(\cY)$. Let $\Omega : \triangleY \to \RR$ be a strictly convex distribution-space regularizer. Its mean-space counterpart $\Omega_\cM : \cM \to \RR$, defined as:
\begin{align*}
    \Omega_\cM(\muv) \triangleq \min_{\p \in \triangleY \text{ s.t. } \EE_\p[Y] = \muv} \Omega(\p),
\end{align*}
is strictly convex on $\cM$. Furthermore, for any current solution $\yk \in \cY$ and subset of frozen coordinates $\Sk \subseteq [d]$,  the expected RLNS local update matches the exact BCA update of the regularized mean-space objective over the free variables:
\begin{align*}
    \EE_{\argmaxok\left(\stk\right)}[Y]=\argmax_{\muv \in \cM \text{ s.t. } \muv_{[\Sk]} = \yk_{[\Sk]}} \left\{ \langle \thetav, \muv \rangle - \Omega_\cM(\muv) \right\}.
\end{align*}
\end{proposition}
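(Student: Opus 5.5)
Two things need proof here: strict convexity of $\Omega_\cM$, and the identification of the expected RLNS update with a BCA step. Both follow by pushing the distribution-space problem forward through the linear map $A:\p\mapsto\EE_\p[Y]$ and using the fact that, for binary vectors, fixing coordinates of the mean is the same as restricting the support to the neighborhood.

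\textbf{Strict convexity.} The function $\Omega_\cM$ is the image of $\Omega$ under the linear map $A$, since $\Omega_\cM(\muv)=\min\{\Omega(\p):A\p=\muv\}$. The minimum is attained because $\triangleY$ is compact and $\Omega$ is l.s.c. The minimizer is unique because the fiber $A^{-1}(\muv)\cap\triangleY$ is convex and $\Omega$ is strictly convex. Write $\p_\muv$ for this minimizer.

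Take $\muv_0\neq\muv_1$ and $t\in(0,1)$. The mixture $\p_t=(1-t)\p_{\muv_0}+t\p_{\muv_1}$ is feasible for $\muv_t=(1-t)\muv_0+t\muv_1$. Moreover $\p_{\muv_0}\neq\p_{\muv_1}$, since they have different means. Strict convexity of $\Omega$ then gives
\[
\Omega_\cM(\muv_t)\le\Omega(\p_t)<(1-t)\Omega_\cM(\muv_0)+t\Omega_\cM(\muv_1).
\]

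\textbf{Key lemma.} Let $\Mk=\{\muv\in\cM:\muv_{[\Sk]}=\yk_{[\Sk]}\}$. I claim that a distribution $\p\in\triangleY$ satisfies $A\p\in\Mk$ if and only if $\supp(\p)\subseteq\Yk$, i.e. $\p=\iotak(\q)$ for some $\q\in\trianglek$.

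Fix $i\in\Sk$. Because every $\y\in\cY$ is binary, $y_i\in\{0,1\}$, so $(A\p)_i=\sum_\y p_\y y_i$ is a convex combination of zeros and ones. It equals $\yk_i\in\{0,1\}$ exactly when every $\y$ in the support of $\p$ has $y_i=\yk_i$. This is the only place where $\cY\subseteq\{0,1\}^d$ is used.

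\textbf{BCA identity.} Expand the right-hand side using the definition of $\Omega_\cM$:
\[
\max_{\muv\in\Mk}\bigl\{\langle\thetav,\muv\rangle-\Omega_\cM(\muv)\bigr\}
=\max_{\p\in\triangleY,\;A\p\in\Mk}\bigl\{\langle\st,\p\rangle-\Omega(\p)\bigr\}.
\]
This uses $\langle\thetav,A\p\rangle=\langle\st,\p\rangle$ and merges the inner min into the outer max. By the key lemma, the joint problem equals
\[
\max_{\q\in\trianglek}\bigl\{\langle\stk,\q\rangle-\Omegak(\q)\bigr\},
\]
because $\langle\st,\iotak(\q)\rangle=\langle\stk,\q\rangle$ and $\Omega(\iotak(\q))=\Omegak(\q)$ by \cref{def:local_reg}. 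The unique maximizer of this last problem is $\argmaxok(\stk)$.

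\textbf{Transferring the maximizer.} Let $\muv^\star$ be the mean-space maximizer, which is unique by the strict concavity shown above. I will argue that $\p_{\muv^\star}$ solves the joint problem. For any feasible $\p$ with mean $\muv$, the joint objective satisfies
\[
\langle\st,\p\rangle-\Omega(\p)\le\langle\thetav,\muv\rangle-\Omega_\cM(\muv)\le\langle\thetav,\muv^\star\rangle-\Omega_\cM(\muv^\star),
\]
and $\p_{\muv^\star}$ attains the right-hand bound. The joint problem has a unique maximizer, namely $\iotak(\argmaxok(\stk))$, so the two coincide. Applying $A$ to both gives $\EE_{\argmaxok(\stk)}[Y]=\muv^\star$, which is the claim.

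The main obstacle is the support-characterization lemma, which is the step that genuinely relies on binarity. The rest is routine. The one point needing care is attainment of the minimum defining $\Omega_\cM$ over possibly non-smooth $\Omega$, which follows from compactness and lower semicontinuity.
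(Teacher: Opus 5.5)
Your proposal is correct and follows the paper's proof essentially step for step. It uses the same mixture argument for strict convexity of $\Omega_\cM$, the same binary-coordinate argument showing that $\muv_{[\Sk]}=\yk_{[\Sk]}$ confines the support of $\p$ to $\Yk$, and the same merging of the inner minimum into the outer maximum followed by reparameterization through $\iotak$; your uniqueness-based transfer of the maximizer and your appeal to lower semicontinuity for attainment only make explicit what the paper does implicitly when it ``absorbs'' the maximization over $\muv$ into one over $\p$.
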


The proof is given in \cref{proof:mean_space_bca}. 

\subsection{Local consistency gap}

We first define the \emph{local consistency gap}, which measures the maximum discrepancy between an LNS update and the ideal Gibbs-like update that would preserve the global regularized distribution.

\begin{definition}[Local consistency gap]
\label{def:consistency_gap}
    Let $\yk \in \cY$, $\Sk \subseteq [d]$, and let $\Yk$ denote the corresponding subset of feasible solutions, as defined in \cref{algo:lns,algo:rlns}.
    
    We define the \emph{local consistency gap} $\gap$ as the maximum TV distance between the global regularized distribution $\argmaxo(\st)$ conditioned on the subset of structures matching $\yk$ on coordinates $\Sk$ and the local RLNS update distribution (seen as two elements of $\trianglek$):
    \begin{align*}
        \gap \triangleq \max_{\yk, \Sk} \TV\left( \argmaxo(\st)(\cdot \mid \yk_{[\Sk]})\,,\argmaxok(\stk) \right).
    \end{align*}
\end{definition}

As shown in \cref{sec:entropy_lns}, if $\Omega=-\gamma H^s$ for some $\gamma>0$, the local updates match the conditionals exactly, implying $\gap = 0$. For general regularizers, $\gap > 0$. We now bound the total bias of the algorithm.

\begin{proposition}[Bias of RLNS]
\label{prop:bias_bound}
    Let $P$ be the transition kernel of the LNS Markov chain defined in \cref{algo:rlns}. Let $P^*$ be the transition kernel of the ideal Gibbs sampler targeting $\argmaxo(\st)$, i.e., which uses updates of the form:
    \begin{align*}
        \y^{k+1}\sim \argmaxo(\st)\left(\cdot\mid \y^{(k)}_{[\Sk]} \right).
    \end{align*}
    
    Let $\tau_m(P^*)$ be the Dobrushin contraction coefficient of $(P^*)^m$, i.e.:
    \begin{align*}
        \tau_m(P^*) \triangleq \max_{\y, \y' \in \cY} \TV\left( (P^*)^m(\y\,, \cdot)\,, (P^*)^m(\y', \cdot) \right).
    \end{align*}
    Let $m \geq 1$ such that $\tau_m(P^*) < 1$. Then, the distance between the LNS stationary distribution $\piLNS$ and the target is bounded by:
    \begin{align*}
        \TV\left( \piLNS\,,\argmaxo(\st) \right) \leq \frac{m}{1 - \tau_m(P^*)} \gap.
    \end{align*}
\end{proposition}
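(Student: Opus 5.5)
This is a standard perturbation bound for Markov chains. Write $\pi \triangleq \piLNS$ and $\pi^* \triangleq \argmaxo(\st)$. The first step is to note that $\pi^*$ is stationary for $P^*$: each block-conditional update $\y \mapsto \pi^*(\cdot\mid \y_{[S]})$ leaves $\pi^*$ invariant, and $P^*$ is a mixture of such updates over the random choice of $\Sk$. The second step is to bound the one-step kernel discrepancy by the local consistency gap. For every state $\y$ we have $\TV(P(\y,\cdot),P^*(\y,\cdot)) \le \gap$. Indeed, both kernels first draw the same random subset $S$, and then sample on the same neighborhood from, respectively, the RLNS local distribution and the conditional of $\pi^*$. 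Convexity of TV under mixing over $S$, together with the definition of $\gap$ as a maximum over $(\yk,\Sk)$, gives the bound. As a consequence, for any distribution $\mu$ we get $\TV(\mu P,\mu P^*)\le\gap$.

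Next comes the $m$-step comparison. I would write the telescoping identity
\begin{align*}
\mu P^m - \mu (P^*)^m = \sum_{j=0}^{m-1} \mu P^{j}(P-P^*)(P^*)^{m-1-j}.
\end{align*}
Each term is a signed measure of zero total mass of the form $\nu (P^*)^{m-1-j}$ with $\nu=\mu P^j(P-P^*)$ and $\|\nu\|_{\mathrm{TV}}\le\gap$. Markov kernels do not expand TV, so each term is bounded by $\gap$ in TV. Hence $\TV(\mu P^m,\mu(P^*)^m)\le m\,\gap$. I would keep the crude factor $m$ rather than trying to use partial contraction of $(P^*)^{m-1-j}$, since the statement only asks for the factor $m$.

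Finally I would apply this with $\mu=\pi$, using $\pi P^m=\pi$ and $\pi^*(P^*)^m=\pi^*$:
\begin{align*}
\TV(\pi,\pi^*) &= \TV(\pi P^m, \pi^*(P^*)^m) \\
&\le \TV(\pi P^m,\pi (P^*)^m) + \TV(\pi(P^*)^m,\pi^*(P^*)^m) \\
&\le m\,\gap + \tau_m(P^*)\,\TV(\pi,\pi^*).
\end{align*}
The last inequality uses the Dobrushin contraction property $\TV(\mu K,\mu' K)\le \tau(K)\,\TV(\mu,\mu')$ for $K=(P^*)^m$. Rearranging and using $\tau_m(P^*)<1$ gives the claim.

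The main point needing care is this contraction property: $\tau$ is defined as a maximum over point masses, and I must justify that it extends to arbitrary distributions. I would argue it via the decomposition $\mu-\mu' = \TV(\mu,\mu')(\alpha-\beta)$ with probability measures $\alpha,\beta$. Writing $\alpha K-\beta K$ as an average over pairs $(\y,\y')$ of $K(\y,\cdot)-K(\y',\cdot)$ and applying the triangle inequality then yields the bound. The second point needing care is the stationarity of $\pi^*$ under $P^*$ when subset selection depends on the current state. I would assume, as in \cref{algo:rlns}, that the choice of $\Sk$ is independent of $\yk$, or at least depends only on $\yk_{[\Sk]}$, so that each block update preserves $\pi^*$.
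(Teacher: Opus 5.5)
Your proof is correct and follows the same structure as the paper's. You bound the one-step kernel discrepancy by $\gap$ via convexity of TV over the mixture in $S$, inflate it to $m\,\gap$ over $m$ steps, and use the contraction of $(P^*)^m$ to get the factor $1/(1-\tau_m(P^*))$. The only difference is that the paper cites Theorem 3.2 and Corollary 3.2 of Mitrophanov (2005) for the last two steps, whereas you reprove that perturbation bound directly with the telescoping identity and the Dobrushin contraction argument. You also make explicit the stationarity of $\argmaxo(\st)$ under $P^*$ (for state-independent subset selection), which the paper leaves implicit.
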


\cref{prop:bias_bound}, which is proved in \cref{proof:bias_bound}, enables us to now focus on bounding the local consistency gap $\gap$ as it translates to a global bound on the distance between $\piLNS$ and $\argmaxo(\st)$.

We first bound the local consistency gap under the assumption that the deviation of the regularization from the scaled Shannon's negative entropy $\Omega-(-\gamma H^s)$ has a Lipschitz gradient with respect to the $\ell_1$ norm.

\begin{proposition}[Consistency gap bound via relative smoothness]
\label{prop:consistency_bound_smoothness}
    Let $\gamma > 0$ be a reference temperature, and $h_\gamma: \triangleY \to \RR$ be the deviation of the regularization $\Omega$ from the scaled Shannon's negative entropy:
    \begin{align*}
        h_\gamma(\p) \triangleq \Omega(\p) - (-\gamma H^s)(\p) = \Omega(\p) + \gamma \sum_{\y \in \cY} p_\y \log p_\y.
    \end{align*}
    If $h_\gamma$ is $L_{\Omega,\gamma}$-smooth with respect to the $\ell_1$ norm, we have:
    \begin{align*}
        \gap \leq \frac{L_{\Omega,\gamma}}{\gamma}.
    \end{align*}
\end{proposition}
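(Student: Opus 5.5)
The plan is to compare the two local distributions through their first-order optimality conditions on the local simplex $\trianglek$, treating the smooth part $h_\gamma$ as a perturbation of the scaled negative entropy. Write $\p^\star \triangleq \argmaxo(\st)$, $\q^\star \triangleq \p^\star(\cdot\mid\yk_{[\Sk]})\in\trianglek$ (the ideal Gibbs update), and $\q\triangleq\argmaxok(\stk)\in\trianglek$ (the RLNS update). Decompose $\Omega = -\gamma H^s + h_\gamma$, and write $\mathrm{ne}(\r)\triangleq\sum_\y r_\y\log r_\y$ for the unscaled negative entropy on $\trianglek$. Since $\mathrm{ne}\circ\iotak$ is just the negative entropy on $\trianglek$, the local objective splits in the same way.

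\textbf{Step 1: characterize both distributions as maximizers.}
\begin{itemize}
\item \emph{The global target.} The entropy term forces $\p^\star$ to have full support, because its gradient blows up at the boundary while $\nabla h_\gamma$ stays bounded. The KKT conditions then read $\gamma\log p^\star_\y = s_\y - \nabla h_\gamma(\p^\star)_\y + c$ for all $\y\in\cY$.
\item \emph{The Gibbs update.} Conditioning only renormalizes, so $\gamma\log q^\star_\y = s_\y - \nabla h_\gamma(\p^\star)_\y + c'$ on $\Yk$. Hence $\q^\star$ is the maximizer on $\trianglek$ of the \emph{linearized} objective $G^\star(\r)\triangleq\langle \stk - (\nabla h_\gamma(\p^\star))_{[\Yk]},\r\rangle - \gamma\,\mathrm{ne}(\r)$.
\item \emph{The RLNS update.} By definition, $\q$ maximizes $G(\r)\triangleq\langle\stk,\r\rangle-\gamma\,\mathrm{ne}(\r)-h_\gamma(\iotak(\r))$. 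It is also interior, and its gradient is $\nabla (h_\gamma\circ\iotak)(\r)=(\nabla h_\gamma(\iotak(\r)))_{[\Yk]}$.
\end{itemize}

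\textbf{Step 2: combine the two variational inequalities.} Since both points are interior, each optimality condition can be tested against the other point. This gives $\langle \nabla G^\star(\q^\star),\q-\q^\star\rangle\le0$ and $\langle\nabla G(\q),\q^\star-\q\rangle\le 0$. Adding them, the linear score terms cancel, and I expect to obtain
\begin{align*}
\gamma\,\langle \nabla\mathrm{ne}(\q)-\nabla\mathrm{ne}(\q^\star),\q-\q^\star\rangle \le \big\langle (\nabla h_\gamma(\p^\star)-\nabla h_\gamma(\iotak(\q)))_{[\Yk]},\,\q-\q^\star\big\rangle .
\end{align*}

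\textbf{Step 3: bound each side.}
\begin{itemize}
\item \emph{Left-hand side.} It equals $\gamma$ times the symmetrized KL divergence $\KL(\q\|\q^\star)+\KL(\q^\star\|\q)$. By Pinsker's inequality this is at least $\gamma\|\q-\q^\star\|_1^2$.
\item \emph{Right-hand side.} By H\"older's inequality it is at most $\|\nabla h_\gamma(\p^\star)-\nabla h_\gamma(\iotak(\q))\|_\infty\,\|\q-\q^\star\|_1$. Smoothness of $h_\gamma$ with respect to $\ell_1$ (dual norm $\ell_\infty$) bounds the gradient gap by $L_{\Omega,\gamma}\|\p^\star-\iotak(\q)\|_1$, and this is at most $2L_{\Omega,\gamma}$ since both are probability vectors.
\end{itemize}
Dividing by $\|\q-\q^\star\|_1$ gives $\|\q-\q^\star\|_1\le 2L_{\Omega,\gamma}/\gamma$, i.e.\ $\TV(\q^\star,\q)\le L_{\Omega,\gamma}/\gamma$. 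This holds uniformly in $(\yk,\Sk)$, so taking the maximum yields the bound on $\gap$.

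\textbf{Main obstacle.} The delicate point is justifying interiority and differentiability in Steps 1--2. The smoothness assumption must be read as $h_\gamma$ extending to a differentiable function with $\ell_1$-Lipschitz gradient on the closed simplex, so that $\nabla h_\gamma(\iotak(\q))$ makes sense even though $\iotak(\q)$ lies on a face of $\triangleY$. The full-support claim must then follow from the $-\infty$ slope of $\log$ at $0$ against a bounded $\nabla h_\gamma$. If these prove awkward, I would fall back on a limiting argument with $\varepsilon$-interior simplices.

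The other place to double-check is the sign bookkeeping when adding the two variational inequalities: the score terms must cancel and leave exactly the monotonicity gap of $\nabla\mathrm{ne}$ on the left. Note that the crude estimate $\|\p^\star-\iotak(\q)\|_1\le 2$ is precisely what produces the constant $1$ in $L_{\Omega,\gamma}/\gamma$.
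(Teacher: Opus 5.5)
Your proposal is correct and is essentially the paper's proof. Both arguments test the two first-order optimality conditions against each other so that the common term cancels, then use the $\gamma$-strong convexity of $-\gamma H^s$ in $\ell_1$ (Pinsker), H\"older, the $\ell_1$-smoothness of $h_\gamma$, and the $\ell_1$-diameter $2$ of the simplex, arriving at literally the same summed inequality. The only difference is cosmetic: the paper casts the two updates as the $B_\Omega$- and KL-projections of $\argmaxo(\st)$ onto $\trianglek$ (\cref{prop:updates_as_projections}), whereas you write them as maximizers of the exact and the linearized local objectives on $\trianglek$, which has the small advantage of never evaluating $\nabla(-\gamma H^s)$ at boundary points of $\triangleY$ and of making the interiority issue explicit.
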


The proof is given in \cref{proof:consistency_bound_smoothness}, and leverages the characterization of the RLNS and Gibbs updates as Bregman projections given in \cref{prop:updates_as_projections}.

We now bound the consistency gap $\gap$ in the special case of perturbation-based regularizers.

\begin{proposition}[Consistency gap bound in the perturbation-based setting]
\label{prop:consistency_bound_perturbation}
    Let $\Omegae$ be a perturbation-based regularization function, as defined in \cref{def:perturbation_regularization}. Let $\argmaxoe(\st)$ denote the corresponding global regularized distribution.
    
    The local consistency gap $\gape$ is bounded by the maximum probability mass of the global regularized distribution $\argmaxoe(\st)$ falling outside a local neighborhood:
    \begin{align*}
        \gape \leq \max_{\yk, \Sk} \argmaxoe(\st)(\cY\setminus\Yk),
    \end{align*}
    where the maximum is taken over all feasible solutions $\yk \in \cY$ and subsets of coordinates $\Sk \subseteq [d]$, which define the neighborhood $\Yk$.
\end{proposition}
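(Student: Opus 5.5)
The plan is to couple the two distributions in the definition of $\gape$ through the same noise draw $Z\sim\cD$, and then use the standard mixture decomposition of the local sample's law.

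Fix $\yk$ and $\Sk$, and draw $Z\sim\cD$. Define the global perturbed maximizer $X\triangleq\argmax_{\y\in\cY}\langle\thetav+\varepsilon Z,\y\rangle$. Since $\cD$ has a positive density and $\cY$ is finite, ties occur with probability zero, so $X$ is almost surely well defined. Define the local perturbed maximizer $W\triangleq\argmax_{\y\in\Yk}\langle\thetav+\varepsilon Z,\y\rangle$, using the same $Z$; ties in $W$ are likewise a null event.

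The laws of these variables are as follows.
\begin{itemize}
\item By the perturb-and-MAP description of $\argmaxoe(\st)$ given after \cref{def:perturbation_regularization}, $X\sim\argmaxoe(\st)$.
\item By \cref{prop:perturbation_local}, $W\sim\argmaxoek(\stk)$.
\item Let $A\triangleq\{X\in\Yk\}$. The law of $X$ conditioned on $A$ is exactly $\argmaxoe(\st)(\cdot\mid\yk_{[\Sk]})$, since conditioning on $A$ means restricting to $\Yk$ and renormalizing.
\end{itemize}

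The key observation is that on $A$ we have $W=X$. Indeed, the global maximizer lies in $\Yk\subseteq\cY$, so it is also the maximizer over $\Yk$, and it is unique almost surely.

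Write $\mu\triangleq\mathrm{Law}(X\mid A)$ and $\rho\triangleq\mathrm{Law}(W\mid\overline{A})$. If $\PP(\overline{A})=0$ the two laws coincide and there is nothing to prove, so assume $\PP(A)\in(0,1]$. Then
\begin{align*}
\mathrm{Law}(W)=\PP(A)\,\mu+\PP(\overline{A})\,\rho .
\end{align*}
Hence $\mathrm{Law}(W)-\mu=\PP(\overline{A})\,(\rho-\mu)$, and therefore
\begin{align*}
\TV\left(\argmaxoe(\st)(\cdot\mid\yk_{[\Sk]}),\argmaxoek(\stk)\right)=\PP(\overline{A})\,\TV(\rho,\mu)\le\PP(\overline{A}).
\end{align*}
Finally, $\PP(\overline{A})=\argmaxoe(\st)(\cY\setminus\Yk)$. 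Taking the maximum over $\yk$ and $\Sk$ on both sides gives the claimed bound on $\gape$.

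The argument is short. The only delicate points are handling ties, which the positive-density assumption makes null, and the degenerate case $\PP(A)=0$. That case cannot actually arise, because $\yk\in\Yk$ and a positive density gives every feasible point positive perturb-and-MAP mass. I expect the coupling step, namely recognizing that $W=X$ on $A$, to be the crux; everything after it is a routine mixture computation.
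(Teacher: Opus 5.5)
Your proof is correct and follows the paper's argument essentially step for step: the shared-noise coupling of the global and local perturbed maximizers, the observation that they coincide on $\{X\in\Yk\}$, and the mixture decomposition giving $\TV\bigl(\mu,\mathrm{Law}(W)\bigr)=\PP(\overline{A})\,\TV(\rho,\mu)\le\PP(\overline{A})$. One small caveat concerns your side remark that every feasible point receives positive perturb-and-MAP mass: this holds when $\cY\subseteq\{0,1\}^d$, since all points are then vertices of $\conv(\cY)$, but it fails for an arbitrary finite $\cY\subset\RR^d$, where non-extreme points get zero mass; in that degenerate case $\PP(A)=0$ the conditional in the definition of the gap is itself undefined, so your bound is unaffected wherever it makes sense.
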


The proof is given in \cref{proof:consistency_bound_perturbation}.

\subsection{Constraint graph and cut-bounded bias}
\label{sec:constraint_graph}

In this section, we analyze the bias of perturbed LNS (i.e., RLNS with perturbation-based regularization as defined in \cref{sec:perturbed_lns}) through the lens of its variable selection scheme. We begin with an idealized scenario: when the feasible set perfectly factorizes into independent blocks of variables, perturbed LNS achieves zero bias.

\begin{proposition}[Exactness of Perturbed LNS on product spaces]
\label{prop:product_space_exact}
    Let the coordinates $[d]$ be partitioned into $M$ disjoint blocks $[d] = \bigsqcup_{m=1}^M B_m$, and assume the feasible set factorizes accordingly: $\cY = \prod_{m=1}^M \cY_m$, where $\cY_m \subset \RR^{|B_m|}$. 
    
    Assume that the noise distribution $\cD$ factorizes over this partition, meaning that if $Z \sim \cD$, the block subvectors $Z_{[B_m]}$ are mutually independent.
    
    If, at each step $k$, the frozen subset of coordinates $\Sk$ is a union of blocks (i.e., $\Sk = \bigsqcup_{m \in \Mk} B_m$ for some index set $\Mk \subset [M]$), then the local consistency gap is zero ($\gape = 0$), and \cref{algo:perturbed_lns} implements a block Gibbs sampler for the global regularized distribution.
    
    Consequently, its stationary distribution is exactly:
    \begin{align*}
        \piLNSe = \argmaxoe(\st).
    \end{align*}
\end{proposition}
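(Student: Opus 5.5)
The plan is to show directly that the local update distribution equals the conditional of the global perturb-and-MAP distribution, i.e.\ $\gape=0$. The conclusion then follows from the definition of a block Gibbs sampler, or equivalently from \cref{prop:bias_bound} with zero gap. I will rely on \cref{prop:perturbation_local}, which identifies $\argmaxoek(\stk)$ with the law of the perturbed local maximizer $\argmax_{\y\in\Yk}\langle\thetav+\varepsilon Z,\y\rangle$. It therefore suffices to compare the law of this local maximizer with the law of the global maximizer conditioned on $\y_{[\Sk]}=\yk_{[\Sk]}$.

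\textbf{Decomposing both problems by blocks.} Because $\cY=\prod_m\cY_m$ and the objective $\langle\thetav+\varepsilon Z,\y\rangle=\sum_m\langle(\thetav+\varepsilon Z)_{[B_m]},\y_{[B_m]}\rangle$ is separable, the global perturbed maximizer decomposes blockwise. Its block $m$ is $\hat\y_m(Z)\triangleq\argmax_{\y_m\in\cY_m}\langle(\thetav+\varepsilon Z)_{[B_m]},\y_m\rangle$, which depends only on $Z_{[B_m]}$. Uniqueness holds almost surely because $\cD$ has a density. The blocks $\hat\y_m(Z)$ are therefore mutually independent under $\cD$, so $\argmaxoe(\st)=\bigotimes_m\p_m$, where $\p_m$ is the law of $\hat\y_m$.

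Since $\Sk$ is a union of blocks, $\Yk=\{\yk_{[B_m]}\}_{m\in\Mk}\times\prod_{m\notin\Mk}\cY_m$. The local perturbed maximizer keeps the frozen blocks and sets each free block to $\hat\y_m(Z)$, for the same separability reason. Its law is thus $\bigotimes_{m\in\Mk}\delta_{\yk_{[B_m]}}\otimes\bigotimes_{m\notin\Mk}\p_m$.

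\textbf{Comparing with the conditional.} For the product measure, conditioning on $\y_{[\Sk]}=\yk_{[\Sk]}$, i.e.\ on the frozen blocks, fixes those blocks and leaves the free blocks with their independent marginals $\p_m$. The conditional is well defined because every $\p_m$ has full support on $\cY_m$, again by positivity of the density of $\cD$. This is exactly the local law computed above, so the TV distance in \cref{def:consistency_gap} vanishes for every $\yk,\Sk$, giving $\gape=0$. \cref{algo:perturbed_lns} therefore performs exact block Gibbs updates, each of which leaves $\argmaxoe(\st)$ invariant.

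\textbf{Stationarity.} Under the standing ergodicity assumptions (irreducible selection scheme and dense updates), the stationary distribution is unique, so $\piLNSe=\argmaxoe(\st)$. Alternatively, I can apply \cref{prop:bias_bound} with $\gape=0$.

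The main obstacle is the measure-theoretic care needed in the argmax steps. One must check that ties occur with probability zero, so that the argmax is almost surely a single point. One must also check that the conditional of the perturb-and-MAP law is well defined, which is where the full support of each $\p_m$ is used. Everything else is bookkeeping on product structure.
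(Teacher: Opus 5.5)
Your proposal is correct and follows the paper's proof essentially step for step. The global perturbed maximizer splits blockwise into independent components, and conditioning that product law on the frozen blocks leaves the free-block marginals. By \cref{prop:perturbation_local} the local update has exactly that law, so $\gape=0$ and the chain is a block Gibbs sampler.

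One side claim is inaccurate, though it concerns a point the paper's proof never addresses. For a general finite $\cY_m$, $\p_m$ need not have full support: a point that is not an extreme point of $\conv(\cY_m)$ is almost surely never the unique perturbed maximizer, so it gets zero mass. Your justification of the conditional's well-definedness therefore only holds when every point of $\cY_m$ is extreme, e.g.\ binary $\cY_m$. You can avoid needing it altogether by checking directly that the product measure is invariant under each update.
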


While perfect factorization guarantees zero bias, most combinatorial optimization problems feature interacting variables bound by shared constraints. To systematically analyze these interactions, we formalize the problem structure using a constraint graph. Assume the feasible set is defined by a set of linear constraints $C\triangleq\{1,\dots,m\}$ as:
\begin{align*}
    \cY \triangleq \{\y \in \{0,1\}^d \mid \A\y \leq \b\},
\end{align*}
where $\A\in\RR^{m\times d}$ and $\b\in\RR^m$.

\begin{definition}[Constraint hypergraph]
\label{def:constraint_graph}
    The \emph{constraint hypergraph} associated with the feasible set $\cY$ is a hypergraph $\cH_C = (\cV, \cE_C)$, where the vertices correspond to the decision variables $\cV = [d]$, and and each hyperedge $e_c \in \cE_C$ corresponds to a constraint $c \in C$, containing exactly the variables involved in that constraint:$$ \cE_C \triangleq \{e_c \mid c \in C\} \quad \text{where} \quad e_c \triangleq \{i \in [d] \mid A_{c, i} \neq 0\}. $$
\end{definition}

At any iteration $k$, the choice of frozen variables $\Sk \subseteq [d]$ and free variables $\Fk \triangleq [d] \setminus \Sk$ essentially partitions the vertices of this constraint hypergraph. This partition naturally divides the constraint indices $C$ (and thus the hyperedges) into three disjoint sets:
\begin{itemize}
    \item $C_F(\Sk)$: constraints exclusively involving free variables $\Fk$;
    \item $C_S(\Sk)$: constraints exclusively involving frozen variables $\Sk$;
    \item $C_\partial(\Sk)$: the \emph{cut constraints}, which involve at least one variable in $\Fk$ and one in $\Sk$ (corresponding to the hyperedges in the cut-set of $\cH_C$).
\end{itemize}

We can therefore decompose the constraints and write the cut constraints evaluated on a vector $\y \in \RR^d$ as $\A_{[C_\partial(\Sk), \Fk]} \y_{[\Fk]} + \A_{[C_\partial(\Sk), \Sk]} \y_{[\Sk]} \leq \b_{[C_\partial(\Sk)]}$.

To isolate the bias introduced specifically by these cut constraints, we define a relaxed space where they are dropped. This effectively forces the subproblem back into the idealized product-space setting described in \cref{prop:product_space_exact}.

\begin{definition}[Relaxed product space]
\label{def:relaxed_product_space}
    For a given partition defined by $\Sk$, we define the \emph{relaxed feasible set} $\cY_{\text{relax}}(\Sk) \triangleq \cY_F(\Sk) \times \cY_S(\Sk)$, where:
    \begin{align*}
        \cY_F(\Sk) &\triangleq \left\{\y_F \in \{0,1\}^{|\Fk|} \mid \A_{[C_F(\Sk), \Fk]} \y_F \leq \b_{[C_F(\Sk)]}\right\}, \\
        \cY_S(\Sk) &\triangleq \left\{\y_S \in \{0,1\}^{|\Sk|} \mid \A_{[C_S(\Sk), \Sk]} \y_S \leq \b_{[C_S(\Sk)]}\right\}.
    \end{align*}
\end{definition}

Notice that $\cY \subseteq \cY_{\text{relax}}(\Sk)$, and equality holds if and only if $C_\partial(\Sk) = \emptyset$. Thus, the relaxed space strictly encompasses the true feasible set, allowing us to bound the local consistency gap by measuring how often the unconstrained, relaxed updates violate the dropped cut constraints.

\begin{proposition}[Cut-bounded consistency gap]
\label{prop:cut_bound}
    For any feasible solution $\yk \in \cY$ and subset of frozen coordinates $\Sk$, let $V_c(\Sk, \yk) \subseteq \cY_F(\Sk)$ denote the set of assignments for the free variables that violate the cut constraint $c \in C_\partial(\Sk)$ when combined with $\yk_{[\Sk]}$:
    \begin{align*}
        V_c(\Sk, \yk) \triangleq \left\{ \y_F \in \cY_F(\Sk) \mid \A_{[\{c\}, \Fk]} \y_F + \A_{[\{c\}, \Sk]} \yk_{[\Sk]} > b_c \right\}.
    \end{align*}
    Moreover, define the relaxed perturbed maximizer as:
    \begin{align*}
        Y^*_{\text{relax}}(\Sk) \triangleq \argmax_{\y \in \cY_{\text{relax}}(\Sk)} \langle \thetav + \varepsilon Z, \y \rangle.
    \end{align*}
    The local consistency gap of the perturbed LNS is bounded by:
    \begin{align*}
        \gape \leq 2 \max_{\yk, \Sk} \PP\left( (Y^*_{\text{relax}}(\Sk))_{[\Fk]} \in \bigcup_{c \in C_\partial(\Sk)} V_c(\Sk, \yk) \right).
    \end{align*}
    Furthermore, applying a union bound and bounding by the worst-case cut constraint yields:
    \begin{align*}
        \gape \leq 2 \max_{\yk, \Sk} |C_\partial(\Sk)| \max_{c \in C_\partial(\Sk)} \PP\left( (Y^*_{\text{relax}}(\Sk))_{[\Fk]} \in V_c(\Sk, \yk) \right).
    \end{align*}
\end{proposition}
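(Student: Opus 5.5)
The plan is to couple both distributions in the definition of $\gape$ to a single realization of the noise $Z\sim\cD$, and to compare each of them with the relaxed perturbed maximizer $Y^*_{\text{relax}}(\Sk)$ through the triangle inequality. Fix $\yk$ and $\Sk$, and write $F=\Fk$. By \cref{prop:perturbation_local}, the RLNS update $\argmaxoek(\stk)$ is the law of the local perturbed maximizer $Y_{\text{loc}}\triangleq\argmax_{\y\in\Yk}\langle\thetav+\varepsilon Z,\y\rangle$. The global target $\argmaxoe(\st)$ is the law of $Y^*\triangleq\argmax_{\y\in\cY}\langle\thetav+\varepsilon Z,\y\rangle$, so its conditional is the law of $Y^*$ given the event $\{Y^*\in\Yk\}$. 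Since $\cD$ has a density, ties have probability zero and all argmaxes are a.s.\ unique. Define the relaxed local set $\cY_{\text{relax}}^{(k)}\triangleq\{\y\in\cY_{\text{relax}}(\Sk)\mid \y_{[\Sk]}=\yk_{[\Sk]}\}$. Because $\cY_{\text{relax}}(\Sk)=\cY_F(\Sk)\times\cY_S(\Sk)$ and the objective is separable across $F$ and $\Sk$, this set is exactly $\cY_F(\Sk)\times\{\yk_{[\Sk]}\}$. Let $E$ be the event that $(Y^*_{\text{relax}})_{[F]}\in\bigcup_{c\in C_\partial(\Sk)}V_c(\Sk,\yk)$.

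\textbf{Step 1 (local update versus relaxation).} The relaxed product problem decouples, so its $F$-block is $W\triangleq\argmax_{\y_F\in\cY_F(\Sk)}\langle(\thetav+\varepsilon Z)_{[F]},\y_F\rangle$. On $\overline{E}$, the vector $(W,\yk_{[\Sk]})$ satisfies every cut constraint. It also satisfies the $C_F$ constraints by construction and the $C_S$ constraints because $\yk\in\cY$. Hence it lies in $\Yk$. Since $\Yk\subseteq\cY_F(\Sk)\times\{\yk_{[\Sk]}\}$, this point is then also the maximizer over $\Yk$, so $Y_{\text{loc}}=(W,\yk_{[\Sk]})$ on $\overline{E}$. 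By the coupling inequality, $\TV(\mathrm{law}(Y_{\text{loc}}),\mathrm{law}((W,\yk_{[\Sk]})))\le\PP(E)$.

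\textbf{Step 2 (conditional target versus relaxation).} This is the main obstacle. Conditioning the global maximizer on $\{Y^*\in\Yk\}$ does not obviously produce the law of $W$, because the frozen block $Z_{[\Sk]}$ influences whether $Y^*$ lands in $\Yk$. I would use the factorization assumption of \cref{prop:product_space_exact}: take $\cD$ to factorize over $(F,\Sk)$, which I expect the paper implicitly assumes. Consider the relaxed global maximizer $(W,W_S)$ over $\cY_{\text{relax}}(\Sk)$, where $W$ and $W_S$ are independent. Conditioning on $\{W_S=\yk_{[\Sk]}\}$ therefore leaves the law of $W$ unchanged. This is exactly the content of \cref{prop:product_space_exact} applied to the relaxed product space. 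On the event that the relaxed global maximizer is feasible, it equals $Y^*$. I will then compare the conditional laws of $Y^*$ and of the relaxed maximizer given landing in $\Yk$, versus in $\cY_F(\Sk)\times\{\yk_{[\Sk]}\}$. To bound the discrepancy, I will use the elementary fact that if two random variables agree outside an event $B$, their conditionals on a common event $A$ differ in TV by at most $\PP(B\mid A)$. Independence lets me identify $\PP(B\mid W_S=\yk_{[\Sk]})$ with $\PP(E)$. If this route stalls, my first fallback is to check whether $Y^*\in\Yk$ forces the relevant behaviour, combined with \cref{prop:consistency_bound_perturbation}.

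\textbf{Step 3 (conclusion).} Apply the triangle inequality to the two bounds, each of size $\PP(E)$, which gives the factor $2$. Taking the maximum over $(\yk,\Sk)$ yields the first bound. The second bound follows from a union bound over $c\in C_\partial(\Sk)$, bounding each term by the worst-case cut constraint.
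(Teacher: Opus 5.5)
Your Step 1 is correct and coincides with the paper's Step 2. Your overall route is also the paper's: shared noise, comparison with the relaxed product maximizer, the triangle inequality for the factor $2$, and a union bound. You are right that factorized noise is implicitly required. The gap is Step 2, and it cannot be closed as planned, because the per-pair inequality it aims at is false.

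Write $\qh$, $\q^*$, $\q^*_{\text{relax}}$ for the laws of $Y_{\text{loc}}$, of $Y^*$ given $\{Y^*\in\Yk\}$, and of $(W,\yk_{[\Sk]})$. Let $B\triangleq\{Y^*_{\text{relax}}(\Sk)\notin\cY\}$. Your ``common event'' lemma does not apply. The law $\q^*$ conditions on $\{Y^*_{[\Sk]}=\yk_{[\Sk]}\}$, while $\q^*_{\text{relax}}$ conditions on $\{(Y^*_{\text{relax}})_{[\Sk]}=\yk_{[\Sk]}\}$, and these two events coincide only outside $B$. Splitting the masses correctly gives only $\TV(\q^*,\q^*_{\text{relax}})\le\max\{\PP(B\mid (Y^*_{\text{relax}})_{[\Sk]}=\yk_{[\Sk]}),\,\PP(B\mid Y^*_{[\Sk]}=\yk_{[\Sk]})\}$. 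Independence turns the first term into $\PP(E)$. The second term is exactly the dependence of $\{Y^*\in\Yk\}$ on $Z_{[\Fk]}$ that you flagged, and factorization does nothing about it.

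Concretely, take $\cY=\{\0,\e_1,\e_2\}$ (constraint $y_1+y_2\le 1$), $\Sk=\{2\}$, $\yk=\0$, $\thetav=\0$, $\varepsilon=1$ and $Z\sim\cN(0,I_2)$. Then $V_c=\emptyset$, so $\PP(E)=0$, and $\qh=\q^*_{\text{relax}}$ puts mass $1/2$ on $\e_1$. But $\PP(Y^*=\e_1)=3/8$ and $\PP(Y^*=\0)=1/4$, so $\q^*(\e_1)=3/5$ and $\TV(\qh,\q^*)=1/10>2\,\PP(E)$.

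The paper's proof has the same hole. Its Step 3 infers $\TV(\q^*,\q^*_{\text{relax}})\le\PP(\overline{A}\mid (Y^*_{\text{relax}}(\Sk))_{[\Sk]}=\yk_{[\Sk]})$, with $\overline{A}=B$, merely from $Y^*=Y^*_{\text{relax}}$ on $A$. The example above refutes this, so following the paper will not rescue your Step 2.

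What your splitting does establish is the corrected per-pair bound $\TV(\qh,\q^*)\le\PP(E)+\max\{\PP(E),\,\PP(B\mid Y^*_{[\Sk]}=\yk_{[\Sk]})\}$. Recovering the stated bound would need the outer maximum over $(\yk,\Sk)$ to absorb the extra term, and that is not automatic. Suppose the maximum ranges only over the frozen sets the scheme uses, say always $\Sk=\{2\}$. With the same noise and $\thetav=(-2,10)$, the pair $(\0,\{2\})$ has $\TV(\qh,\q^*)\approx 0.98$, while the right-hand side is $2\,\PP(Z_1>2)\approx 0.05$.

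Your fallback through \cref{prop:consistency_bound_perturbation} does not help either. It bounds the gap by $1-\argmaxoe(\st)(\Yk)$, which is not controlled by cut violations: it equals $3/8$ in the first example, where $\PP(E)=0$.
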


\paragraph{Motivation for min-cut variable selection.}
\cref{prop:cut_bound}, proved in \cref{proof:cut_bound}, provides a justification for constraint-graph heuristics in LNS. The upper bound on the bias of the local update scales with $|C_\partial(\Sk)|$, the number of cut constraints separating the frozen variables $\Sk$ from the free variables $\Fk$. This motivates the design of variable selection schemes that minimize $|C_\partial(\Sk)|$ in RLNS. In \cref{sec:mwis}, we empirically validate this theoretical on the maximum weight independent set problem.

\begin{figure}[h!]
    \centering
    \begin{subfigure}[b]{0.99\textwidth}
        \centering
        \includegraphics[width=\textwidth]{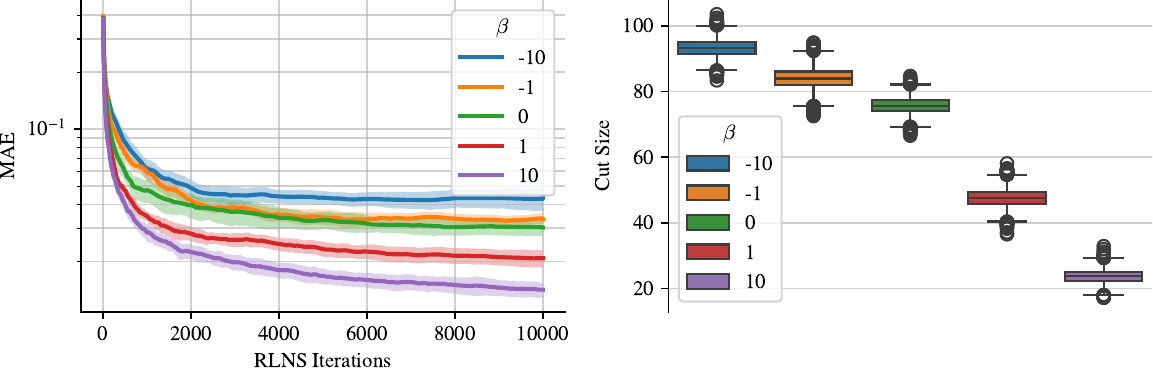}
    \end{subfigure}
    \vfill
    \begin{subfigure}[b]{0.99\textwidth}
        \centering
        \includegraphics[width=\textwidth]{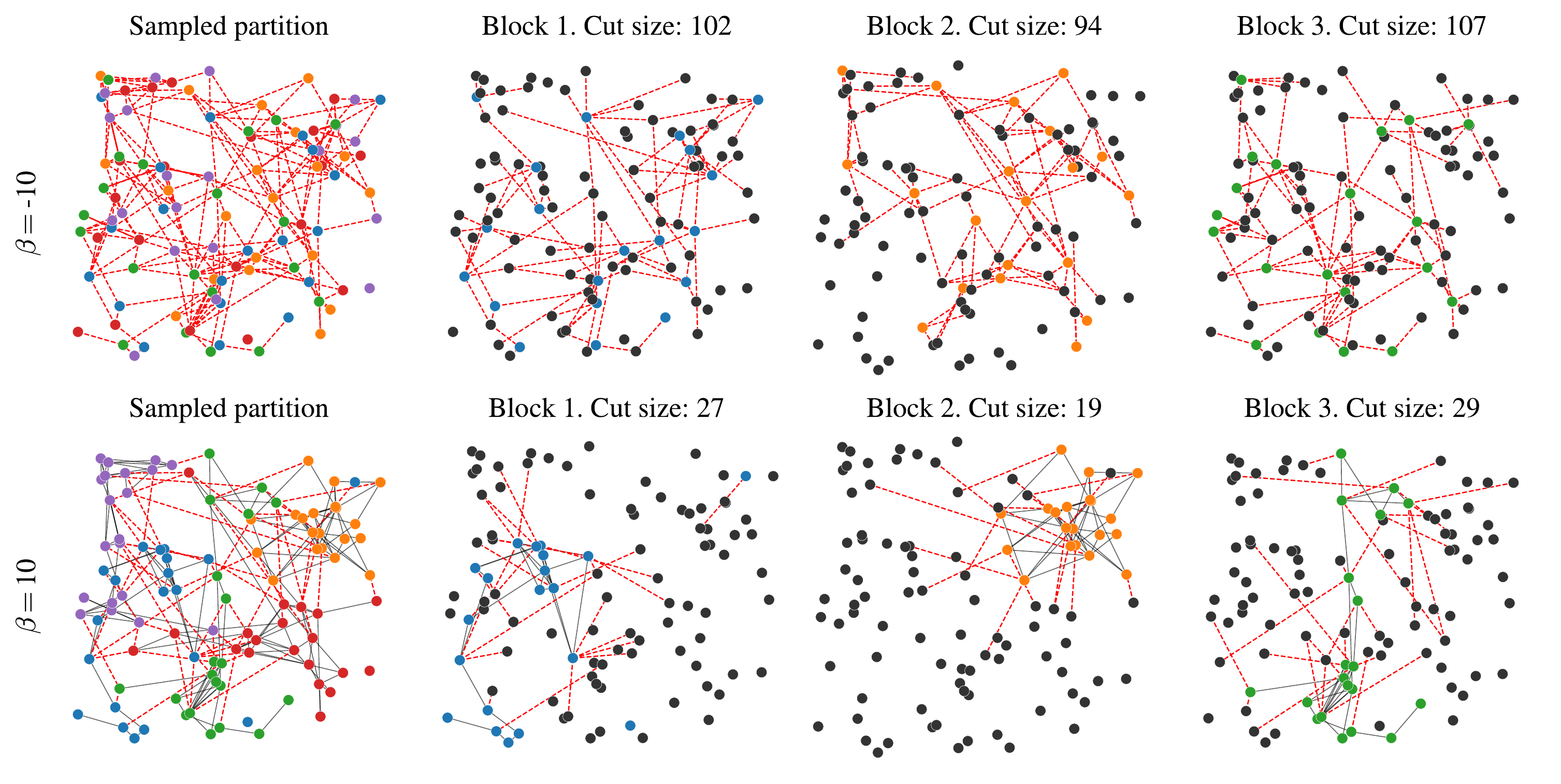}
    \end{subfigure}
    \caption{\textbf{Top left.} MAE between $K$-th RLNS estimate and true global expectation, computed as $\frac{1}{|\cN|}\|\sum_{k=1}^K\yk-\EE_{\argmaxoe(\st)}[Y]\|_1$. \textbf{Top right.} Average cut size of the selected variables with different values of $\beta$. \textbf{Bottom.} Visualization of a sampled partition for $\beta=-10$ and $\beta=10$. Internal edges are represented by black lines, and cut edges are represented by red dashed lines.}
    \label{fig:mwis}
\end{figure}

\subsection{Min-cut RLNS on the maximum weight independent set}
\label{sec:mwis}

To empirically validate the cut-size-based bias bound from \cref{sec:constraint_graph}, we evaluate RLNS with pertubation-based regularization on the maximum weight independent set (MWIS) polytope. Given an undirected graph $\cG=(\cN,\cE)$, the MWIS problem writes:

\begin{align*}
    \max_{\y\in\{0,1\}^{|\cN|}}&\sum_{n\in\cN}\theta_n y_n\\
    \text{s.t.} \quad & y_u+y_v\leq 1\quad\forall (u,v)\in\cE.
\end{align*}

\paragraph{Constraint graph.} For the MWIS, the constraint hypergraph $\cH_C$ exactly reduces to the standard problem graph $\cG$: since each constraint involves exactly two nodes, every hyperedge has size 2 and corresponds directly to an edge in $\cG$. To test the impact of the cut constraint size isolated in our theoretical bound, we design variable selection schemes with varying average cut sizes.

We report results averaged across 10 random seeds, where each seed controls the generation of a distinct random graph instance. Specifically, we generate soft random geometric graphs by randomly placing $|\cN|=100$ nodes in a 2D space and assigning edges with a probability that decays exponentially with the Euclidean distance $d(u,v)$ between nodes.

\paragraph{Variable selection scheme.} For each graph instance, we perform an offline phase where we generate a pool of $100$ node partitions, which all divide $\cN$ into 5 blocks of $|\cN|/5=20$ nodes each. These partitions are generated via $10^4$ steps of Metropolis-Hastings MCMC (where a proposal partition is obtained by exchanging the block labels of two random nodes) to be sampled approximately proportionally to $\exp(-\beta\cdot c)$, where the cut size $c$ is the total number of edges $(u,v)\in\cE$ such that $u$ and $v$ belong to different blocks in the partition. By adjusting the inverse temperature $\beta$, we can smoothly interpolate the variable selection scheme from favoring max-cuts ($\beta<0$) to min-cuts ($\beta>0$): see \cref{fig:mwis} (bottom) for a visualization. During the online RLNS procedure, at each step, we uniformly sample a partition from the offline pool and pick exactly one of its blocks uniformly at random to serve as the free variables, keeping the assignments of all other nodes frozen.

\paragraph{Results.} As illustrated in \cref{fig:mwis} (top), the results validate our theoretical insights from \cref{sec:constraint_graph}: variable selection schemes with minimal cut size $|C_\partial(\Sk)|$ in the constraint graph yield a better convergence to the exact global regularized distribution $\argmaxoe(\st)$.

\section{Proofs}

\subsection{Proof of \cref{prop:gibbs_mcmc} (\nameref{prop:gibbs_mcmc})}

\begin{proof}
\label{proof:gibbs_mcmc}

When $\Omega=-\gamma H^s$, we have, for any $D\in\NN,\,\x\in\RR^D$:
\begin{align*}
    \argmaxo(\x)=\left(\frac{\exp(x_i/\gamma)}{\sum_{j=1}^D\exp(x_j/\gamma)}\right)_{i=1}^D.
\end{align*}
Thus, on the one hand, the local regularized distribution writes, for all $\yk\in\cY$ and $\y\in\Yk$:
\begin{align*}
    \argmaxok\left(\stk\right)(\y) = \frac{\exp\left(\langle\thetav,\y\rangle/\gamma\right)}{\sum_{\y'\in\Yk}\exp(\langle\thetav,\y'\rangle/\gamma)}.
\end{align*}
On the other hand, the conditional probability of the global regularized distribution is given by:
\begin{align*}
    \argmaxo(\st)\left(\y\mid\yk_{[\Sk]}\right) &\triangleq \frac{\argmaxo(\st)(\y)}{\sum_{\substack{\y'\in\cY\\\text{s.t.}\y'_{[\Sk]}=\yk_{[\Sk]}}}\argmaxo(\st)(\y')}\\
    &= \frac{\argmaxo(\st)(\y)}{\sum_{\y'\in\Yk}\argmaxo(\st)(\y')}\\
    &= \frac{\exp(\langle\thetav,\y\rangle/\gamma)}{\sum_{\y''\in\cY}\exp(\langle\thetav,\y''\rangle/\gamma)}\times\left(\sum_{\y'\in\Yk}\frac{\exp(\langle\thetav,\y'\rangle/\gamma)}{\sum_{\y''\in\cY}\exp(\langle\thetav,\y''\rangle/\gamma)}\right)^{-1}\\
    &= \frac{\exp(\langle\thetav,\y\rangle/\gamma)}{\sum_{\y''\in\cY}\exp(\langle\thetav,\y''\rangle/\gamma)}\times\frac{\sum_{\y''\in\cY}\exp(\langle\thetav,\y''\rangle/\gamma)}{\sum_{\y'\in\Yk}\exp(\langle\thetav,\y'\rangle/\gamma)}\\
    &= \frac{\exp(\langle\thetav,\y\rangle/\gamma)}{\sum_{\y'\in\Yk}\exp(\langle\thetav,\y'\rangle/\gamma)},
\end{align*}
which therefore gives:
\begin{align*}
    \argmaxok\left(\stk\right)(\y)=\argmaxo(\st)\left(\y\mid\yk_{[\Sk]}\right).
\end{align*}

Thus, \cref{algo:rlns} implements a block Gibbs sampler for the target distribution $\pi \triangleq \argmaxo(\st)$. To check that detailed balance is satisfied, let $\nu(\Sk)$ denote the probability of selecting the frozen subset $\Sk$. The transition probability from $\yk$ to $\y$ is:
\begin{align*}
    P(\yk \to \y) = \sum_{\Sk \subseteq [d]} \nu(\Sk) \mathds{1}_{\{\y_{[\Sk]} = \yk_{[\Sk]}\}} \pi\left(\y \mid \yk_{[\Sk]}\right).
\end{align*}
Multiplying by the stationary distribution candidate $\pi(\yk)$, we get:
\begin{align*}
    \pi(\yk) P(\yk \to \y) &= \sum_{\Sk \subseteq [d]} \nu(\Sk) \mathds{1}_{\{\y_{[\Sk]} = \yk_{[\Sk]}\}} \pi(\yk) \frac{\pi(\y)}{\sum_{\z \in \Yk} \pi(\z)} \\
    &= \sum_{\Sk \subseteq [d]} \nu(\Sk) \mathds{1}_{\{\y_{[\Sk]} = \yk_{[\Sk]}\}} \frac{\pi(\yk) \pi(\y)}{\sum_{\z:\, \z_{[\Sk]} = \yk_{[\Sk]}} \pi(\z)}.
\end{align*}
Because the indicator function enforces $\y_{[\Sk]} = \yk_{[\Sk]}$, the denominator is strictly identical whether we condition on $\yk_{[\Sk]}$ or $\y_{[\Sk]}$. The expression inside the sum is therefore perfectly symmetric with respect to $\yk$ and $\y$, which immediately yields:
\begin{align*}
    \pi(\yk) P(\yk \to \y) = \pi(\y) P(\y \to \yk),
\end{align*}
so that $\argmaxo(\st)$ is a stationary distribution of the Markov chain. Finally, an irreducible variable selection scheme ensures the chain is ergodic (since the strictly positive entropic update distributions guarantee dense transitions and aperiodicity via self-loops). This gives that the stationary distribution is uniquely equal to $\piLNS=\argmaxo(\st)$, and that the chain converges to it.
\end{proof}

\subsection{Proof of \cref{prop:stationary_equivalence} (\nameref{prop:stationary_equivalence})}

\begin{proof}
\label{proof:stationary_equivalence}
The sense ($\impliedby$) is established by \cref{prop:gibbs_mcmc}, so we focus on the ($\implies$) implication. 

Assume that $\Omega:\triangleY\to\RR$ is separable, meaning $\Omega(\p) = \sum_{\y\in\cY} \omega(p_\y)$ for some strictly convex function $\omega$. Since $\Omega(\delta^\cY_\y)=0\;\;\forall\y\in\cY$, we assume standard simplex normalization $\omega(0)=\omega(1)=0$ without loss of generality. We assume that $\piLNS = \argmaxo(\st)$ for all score vectors $\st$, and we will show this forces $\Omega = -\gamma H^s$ for some $\gamma>0$.

\paragraph{Problem definition.}
Consider a state space of dimension $d=3$ with feasible set $\cY=\{\e_1,\e_2,\e_3\}$. Let $\thetav=(\theta_1,\theta_2,\theta_3)\in\RR^3$ be any maximization direction. Note that in this setting, we have $\st=\thetav$. We instantiate the following specific variable selection strategy for \cref{algo:rlns}: at each step $k$, given a current solution $\yk=\e_i$, the algorithm samples a frozen coordinate subset $\Sk \in \{\{1\}, \{3\}\}$ uniformly at random. We strictly prevent freezing coordinate $2$.

This choice restricts the possible transitions. For instance, if $\yk=\e_1$ (meaning $y^{(k)}_1=1$), freezing coordinate $3$ (where $y^{(k)}_3=0$) creates the local neighborhood $\Yk = \{\e_1, \e_2\}$. However, it is impossible to transition directly between $\e_1$ and $\e_3$ in a single step, as that would require freezing coordinate $2$, which our algorithm never does. The state transition graph on $\cY$ is therefore a simple line graph: $\e_1 \leftrightarrow \e_2 \leftrightarrow \e_3$.

\paragraph{Detailed balance.}
Because the transition graph is a tree (it contains no cycles), any stationary distribution on this graph must satisfy detailed balance. Since we assume the LNS stationary distribution matches the global target ($\piLNS = \argmaxo(\st)$), the target distribution $\argmaxo(\st)$ itself must satisfy detailed balance along the edges of the tree. Let $\piv\triangleq\argmaxo(\st)$, i.e., $\pi_i \triangleq \argmaxo(\st)(\e_i)$. The balance equation between $\e_1$ and $\e_2$ is:
\begin{align*}
    \pi_1 \cdot \PP(\e_1 \to \e_2) = \pi_2 \cdot \PP(\e_2 \to \e_1).
\end{align*}
The transition from $\e_1$ to $\e_2$ (resp. from $\e_2$ to $\e_1$) occurs by choosing to freeze coordinate $3$ (which occurs with probability $1/2$) and then sampling $\e_2$ (resp. $\e_1$) from the local distribution $q^{\{1,2\}}\triangleq \argmaxok(\stk)$ (where $\Yk=\{\e_1,\e_2\}$). To simplify notation, let $q_1 \triangleq q^{\{1,2\}}(\e_1)$ and $q_2 \triangleq q^{\{1,2\}}(\e_2)$. We have:
\begin{align*}
    \pi_1 \cdot\left( \frac{1}{2} q_2 \right) = \pi_2 \cdot\left( \frac{1}{2} q_1 \right).
\end{align*}
Since local probabilities sum to $1$, substituting $q_2= 1 - q_1$ immediately yields:
\begin{align}
\label{eq:local_global_ratio}
    q_1 = \frac{\pi_1}{\pi_1 + \pi_2} \quad \text{and} \quad q_2 = \frac{\pi_2}{\pi_1 + \pi_2}.
\end{align}

\paragraph{Optimality conditions.}
Because $\Omega(\p) = \sum_{i=1}^3 \omega(p_i)$ is strictly convex and differentiable on the interior of the simplex, the global distribution $\piv=\argmaxo(\st)$ is by definition the unique solution to the following constrained optimization problem:
\begin{align*}
    \min_{\p > 0} \quad & \sum_{i=1}^3 \big(\omega(p_i) - \theta_i p_i\big) \\
    \text{s.t.} \quad & \sum_{i=1}^3 p_i = 1.
\end{align*}
We introduce a Lagrange multiplier $\nu \in \RR$ for the simplex equality constraint. The Lagrangian is given by:
\begin{align*}
    \mathcal{L}(\p, \nu) = \sum_{i=1}^3 \big(\omega(p_i) - \theta_i p_i\big) - \nu \left( \sum_{i=1}^3 p_i - 1 \right).
\end{align*}
The Karush-Kuhn-Tucker (KKT) stationarity conditions require the partial derivatives of the Lagrangian with respect to each $p_i$ to be zero at optimality:
\begin{align*}
    \frac{\partial \mathcal{L}}{\partial \pi_i}(\piv,\nu^\star) = \omega'(\pi_i) - \theta_i - \nu^\star = 0 \implies \omega'(\pi_i) = \theta_i + \nu^\star.
\end{align*}
By subtracting the optimality condition of $i=2$ from $i=1$, the multiplier $\nu$ cancels out, yielding:
\begin{align}
\label{eq:kkt_global}
    \omega'(\pi_1) - \omega'(\pi_2) = \theta_1 - \theta_2.
\end{align}

Similarly, the local distribution $q^{\{1,2\}}$ solves by definition the local constrained problem:
\begin{align*}
    \min_{q_1, q_2 > 0} \quad & \big(\omega(q_1) - \theta_1 q_1\big) + \big(\omega(q_2) - \theta_2 q_2\big) \\
    \text{s.t.} \quad & q_1 + q_2 = 1.
\end{align*}
Note that the $\omega(0)$ term due to padding the local distribution $\q$ into a global one (see the definition of the local regularizer in \cref{def:local_reg}) vanishes since we assumed $\omega(0)=\omega(1)=0$.
Introducing a local Lagrange multiplier $\mu \in \RR$, the local Lagrangian is:
\begin{align*}
    \mathcal{L}_{\text{local}}(\q, \mu) = \big(\omega(q_1) - \theta_1 q_1\big) + \big(\omega(q_2) - \theta_2 q_2\big) - \mu (q_1 + q_2 - 1).
\end{align*}
The KKT stationarity conditions for the local problem similarly yield $\omega'(q_i) = \theta_i + \mu^*$. Subtracting the condition for $q_2$ from $q_1$ gives:
\begin{align}
\label{eq:kkt_local}
    \omega'(q_1) - \omega'(q_2) = \theta_1 - \theta_2.
\end{align}

Combining \cref{eq:kkt_global} and \cref{eq:kkt_local}, we get:
\begin{align*}
    \omega'(\pi_1) - \omega'(\pi_2) = \omega'(q_1) - \omega'(q_2).
\end{align*}
Finally, substituting the expressions of $q_1$ and $q_2$ derived in \cref{eq:local_global_ratio}, we get:
\begin{align}
\label{eq:derivative_eq}
    \omega'(\pi_1) - \omega'(\pi_2) = \omega'\left(\frac{\pi_1}{\pi_1+\pi_2}\right) - \omega'\left(\frac{\pi_2}{\pi_1+\pi_2}\right).
\end{align}

\paragraph{Functional equation.}
Let $y \triangleq \pi_1 + \pi_2$ and $x \triangleq \frac{\pi_1}{\pi_1+\pi_2}$, such that $\pi_1 = xy$ and $\pi_2 = (1-x)y$. Since the global scores $\theta_1, \theta_2, \theta_3$ are unconstrained in $\RR$ and $\argmaxo$ is surjective onto the relative interior $\ri(\triangle^3)$, we have that $\pi_1$ and $\pi_2$ independently span $(0,1)$, so that $x$ and $y$ also independently span $(0,1)$.

Substituting $\pi_1 = xy$ and $\pi_2 = (1-x)y$ into \cref{eq:derivative_eq} yields:
\begin{align*}
    \omega'(xy) - \omega'((1-x)y) = \omega'(x) - \omega'(1-x).
\end{align*}
We integrate this equation with respect to $x$ (holding $y$ constant):
\begin{align*}
    \int \left[ \omega'(xy) - \omega'((1-x)y) \right] dx &= \int \left[ \omega'(x) - \omega'(1-x) \right] dx \\
    \implies \frac{1}{y}\omega(xy) + \frac{1}{y}\omega((1-x)y) &= \omega(x) + \omega(1-x) + C(y).
\end{align*}
Multiplying by $y$, we get:
\begin{align}
\label{eq:almost_functional}
    \omega(xy) + \omega((1-x)y) = y(\omega(x) + \omega(1-x)) + y C(y).
\end{align}
To determine the integration term $y C(y)$, we evaluate the limit as $x \to 1$. Because we normalized $\omega(0)=\omega(1)=0$, taking $x \to 1$ yields:
\begin{align*}
    \omega(y) + \omega(0) = y(\omega(1) + \omega(0)) + y C(y) \implies \omega(y) = y C(y).
\end{align*}
Substituting $yC(y) = \omega(y)$ back into \cref{eq:almost_functional} yields the functional equation:
\begin{align*}
    \omega(xy) + \omega((1-x)y) - \omega(y) = y(\omega(x) + \omega(1-x)) \quad \forall\; x, y \in (0,1).
\end{align*}
As first established in \citet[Theorem 0]{horibe1988entropy} and extended in \citet{gselmann2011entropy}, all measurable solutions to this functional equation take the form $\omega(p)=\gamma p\log p+ cp$ for some constants $\gamma, c\in\RR$. Since we assumed $\omega(1)=0$, we have $c=0$. Moreover, since we assumed $\omega$ to be strictly convex, we get $\gamma > 0$, which concludes the proof.
\end{proof}

\subsection{Strict convexity of $\Omegae$}
\label{proof:strict_convexity}

In this section, we establish that the perturbation-based regularizer $\Omegae$ defined in \cref{def:perturbation_regularization} is strictly convex over the entire closed simplex $\triangleY$.

\begin{proposition}[Strict convexity of $\Omegae$]
\label{prop:strict_convexity_omegae}
Let $\Omegae : \triangleY \to \RR$ be the perturbation-based regularization function defined in \cref{def:perturbation_regularization}. If the noise distribution $\cD$ admits a strictly positive density on $\RR^d$, then $\Omegae$ is strictly convex on the entire closed simplex $\triangleY$.
\end{proposition}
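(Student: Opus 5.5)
The plan is to work through the conjugate representation $\Omegae = G^*$, where
\begin{align*}
G(\s)\triangleq \EE_{Z\sim\cD}\Big[\max_{\y\in\cY}\{s_\y+\varepsilon\langle Z,\y\rangle\}\Big],
\end{align*}
and to transfer smoothness of $G$ into strict convexity of $G^*$ via Legendre duality (Rockafellar, Thm.~26.3). Throughout I will assume that $\cD$ has a finite first moment, so that $G$ is finite. Because $G(\s+c\1)=G(\s)+c$, we get $\dom(G^*)\subseteq\triangleY$.

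\textbf{Step 1: $G$ is differentiable everywhere.} The elements of $\cY$ are distinct, so for $\y\neq\y'$ a tie between $\y$ and $\y'$ happens only when $Z$ lies in the affine hyperplane $\{\varepsilon\langle Z,\y-\y'\rangle=s_{\y'}-s_\y\}$. This hyperplane is Lebesgue-null, hence has zero probability under a density. So the maximizer is almost surely unique, and dominated convergence (each difference quotient is bounded by $1$) gives
\begin{align*}
\nabla G(\s)_\y=\PP\Big(\argmax_{\y'}\{s_{\y'}+\varepsilon\langle Z,\y'\rangle\}=\y\Big).
\end{align*}
A finite convex function that is differentiable everywhere is essentially smooth. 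Rockafellar's theorem then says $G^*$ is strictly convex on every convex subset of $\dom\partial G^*$.

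\textbf{Step 2: $\ri(\triangleY)\subseteq\dom\partial G^*$.} Fix $\p$ with full support. Jensen's inequality gives $G(\s)\ge\max_\y s_\y - C$ for some constant $C$. Hence
\begin{align*}
\langle\s,\p\rangle-G(\s)\le \langle\s,\p\rangle-\max_\y s_\y+C,
\end{align*}
and the right-hand side tends to $-\infty$ as $\s$ leaves the line $\RR\1$, because $\min_\y p_\y>0$. The concave objective is invariant along $\1$, so its supremum is attained at some $\s^*$. Then $\s^*\in\partial G^*(\p)$, i.e.\ $\p=\nabla G(\s^*)$. Combined with Step 1, $\Omegae$ is strictly convex on the relative interior of the simplex.

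\textbf{Step 3: extension to the closed simplex (main obstacle).} Suppose $\Omegae$ is affine on a segment $[\p,\q]$ with $\p\neq\q$. Let $\cY'$ be the union of the supports of $\p$ and $\q$. The open segment lies in the relative interior of the face $\triangle^{\cY'}$. My plan is to identify $\Omegae$ restricted to this face with the conjugate of a reduced function $G_{\cY'}$, defined like $G$ but with the maximum taken only over $\cY'$. Concretely:
\begin{itemize}
\item I would send $s_\y\to-\infty$ for $\y\notin\cY'$. By monotone convergence, $\langle\s,\q\rangle-G(\s)$ increases to the corresponding reduced expression.
\item Conversely, for any $\s$ one has $G(\s)\ge G_{\cY'}(\s_{[\cY']})$, because the maximum over a subset is smaller. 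Hence on the face, $G^*$ equals the conjugate of $G_{\cY'}$.
\end{itemize}
Applying Steps 1--2 to $G_{\cY'}$, with the same density argument over the finite set $\cY'$, shows that this conjugate is strictly convex on $\ri(\triangle^{\cY'})$. This contradicts the assumed affinity on the open segment, and hence on the whole segment.

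The delicate points are, first, checking this face identity carefully using monotone convergence and the bound $G\ge G_{\cY'}$. Second, one must note that $\cY'$ may contain points that are not vertices of $\conv(\cY)$. This is harmless: Step 2 only needs the coercivity bound, and that bound does not require vertices.
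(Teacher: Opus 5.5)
Your proposal is correct and follows essentially the same route as the paper. You restrict to the face $\triangle^{\cY'}$ spanned by $\supp(\p)\cup\supp(\q)$, identify $\Omegae$ there with the conjugate of the reduced perturbed maximum (sending the outside scores to $-\infty$ and using $G\ge G_{\cY'}$, as in the proof of \cref{prop:perturbation_local}), and then invoke essential smoothness with \citet[Theorem 26.3]{rockafellar_convex_1970} to contradict affinity on the relative interior of that face. The differences are that you prove directly the two ingredients the paper imports from \citet[Theorem 4]{bouvier_primal-dual_2025} (differentiability via Lebesgue-null tie hyperplanes, and $\ri(\triangle^{\cY'})\subseteq\dom\partial G_{\cY'}^*$ via the coercivity bound), and that you state explicitly the finite-first-moment assumption on $\cD$, which is needed for $G$ to be finite.
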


\begin{proof}
Let $\p, \q \in \triangleY$ be two distinct probability distributions, and let $\lambda \in (0, 1)$. We define their convex combination $\r \triangleq \lambda \p + (1-\lambda) \q$. The support of $\r$ is the union of the supports of $\p$ and $\q$, i.e., $\cZ \triangleq \supp(\r) = \supp(\p) \cup \supp(\q) \subseteq \cY$.

Consider the local simplex $\triangle^\cZ$ and the inclusion map $\iota_\cZ : \triangle^\cZ \lhook\joinrel\to \triangleY$ as defined in \cref{def:local_reg}. By construction, $\p$ and $\q$ belong to the face $\iota_\cZ(\triangle^\cZ)$, and $\r$ lies strictly in the relative interior of this face, meaning $\r \in \ri(\iota_\cZ(\triangle^\cZ))$.

To prove strict convexity, we must show that:
\begin{align*}
    \Omegae(\r) < \lambda \Omegae(\p) + (1-\lambda) \Omegae(\q).
\end{align*}
Assume for contradiction that equality holds. Since $\Omegae$ is a supremum of affine functions, it is convex and lower semi-continuous. The equality at an interior point of the segment $[\p, \q]$ implies that $\Omegae$ must be strictly affine on the entire segment $[\p, \q]$. Because $\r \in \ri(\iota_\cZ(\triangle^\cZ))$, there exists a small line segment centered at $\r$, contained entirely within $[\p, \q]$, that lies completely within the relative interior $\ri(\iota_\cZ(\triangle^\cZ))$. If $\Omegae$ is affine on $[\p, \q]$, it must be affine on this sub-segment.

We now analyze the restriction of $\Omegae$ to this face. As shown in \nameref{proof:perturbation_local}, the local regularizer $\Omegae^\cZ \triangleq \Omegae \circ \iota_\cZ$ is exactly the Fenchel conjugate of the local perturbed maximum function $F_{\varepsilon, \cZ} : \RR^{|\cZ|} \to \RR$:
\begin{align*}
    F_{\varepsilon, \cZ}(\u) \triangleq \EE_{Z\sim\cD}\left[ \max_{\y \in \cZ} \left\{ u_\y + \varepsilon \langle \y, Z \rangle \right\} \right].
\end{align*}

By \citet[Theorem 4]{bouvier_primal-dual_2025}, we have that:
\begin{enumerate}
    \item The domain of the conjugate of $F_{\varepsilon, \cZ}$ is $\triangle^\cZ$, i.e., $\dom(\Omegae^\cZ)=\triangle^\cZ$.
    \item $F_{\varepsilon, \cZ}$ is l.s.c., proper, convex and differentiable on $\RR^{|\cZ|}$, which makes it essentially smooth.
\end{enumerate}

\citet[Theorem 26.3]{rockafellar_convex_1970} then gives that its Fenchel conjugate $\Omegae^\cZ$ is essentially strictly convex. Therefore, it is strictly convex on the relative interior of its domain, $\ri(\dom(\Omegae^\cZ))=\ri(\triangle^\cZ)$.

This establishes that $\Omegae$ is strictly convex on $\ri(\iota_\cZ(\triangle^\cZ))$. However, we previously deduced that $\Omegae$ would be affine on a sub-segment passing through $\r$ within this relative interior, which is a contradiction. Therefore, the strict inequality must hold, concluding the proof that $\Omegae$ is strictly convex on the entire closed simplex $\triangleY$.
\end{proof}

\subsection{Proof of \cref{prop:perturbation_local} (\nameref{prop:perturbation_local})}
\label{proof:perturbation_local}

\begin{proof}
    Let $F_{\varepsilon, \Yk}$ denote the following local perturbed maximum function, defined on scores restricted to the neighborhood $\Yk$:
    \begin{align*}
        F_{\varepsilon, \Yk}: \RR^{|\Yk|} &\longrightarrow \RR \\
        \u &\longmapsto \EE_{Z\sim\cD}\left[ \max_{\y \in \Yk} \left\{ u_\y + \varepsilon\langle\y,Z\rangle \right\} \right].
    \end{align*}
    We first establish that the local regularization function $\Omegae^{\Yk} \triangleq \Omegae \circ \iotak$ is the Fenchel conjugate of $F_{\varepsilon, \Yk}$.
    Starting from the definition of $\Omegae$, we have for any local distribution $\q \in \trianglek$:
    \begin{align*}
        \Omegae^{\Yk}(\q) &= \Omegae(\iotak(\q)) \\
        &= \sup_{\s \in \RRY} \left\{\langle \s, \iotak(\q) \rangle - \EE_{Z\sim\cD}\left[ \max_{\y' \in \cY} \left\{s_{\y'} + \varepsilon \langle\y', Z\rangle\right\} \right] \right\}\\
        &= \sup_{\s \in \RRY} \left\{\left( \sum_{\y \in \Yk} s_\y q_\y\right) - \EE_{Z\sim\cD}\left[ \max_{\y' \in \cY} \left\{s_{\y'} + \varepsilon \langle\y', Z\rangle\right\} \right] \right\}.
    \end{align*}
    We decompose the global score vector $\s$ into components inside $\Yk$ (denoted $\s_{\mathrm{in}}$) and outside $\Yk$ (denoted $\s_{{\mathrm{out}}}$). To maximize the objective inside the supremum, we observe that $\s_{{\mathrm{out}}}$ only appears in the negative expectation term. Sending the coordinates of $\s_{{\mathrm{out}}}$ to $-\infty$ minimizes the expectation term (by eliminating $\y' \notin \Yk$ from the maximum) without affecting the linear term $\langle \s_{{\mathrm{in}}}, \q \rangle$. 
    
    Formally, we have:
    \begin{align*}
        \lim_{\substack{s_{\y'} \to -\infty\\ \forall \y'\in\cY\setminus\Yk}} \EE_{Z\sim\cD}\left[ \max_{\y \in \cY} \left\{s_{\y} + \varepsilon \langle\y,Z\rangle\right\} \right] &= \EE_{Z\sim\cD}\left[ \max_{\y \in \Yk} \left\{(\s_{\mathrm{in}})_{\y} + \varepsilon \langle\y,Z\rangle\right\} \right]\\
        &\leq \EE_{Z\sim\cD}\left[ \max_{\y \in \cY} \left\{s_{\y} + \varepsilon \langle\y, Z\rangle\right\} \right]\;\forall \s\in\RRY.
    \end{align*}
    Substituting this back into the conjugate expression:
    \begin{align*}
        \Omegae^{\Yk}(\q) &= \sup_{\s_{{\mathrm{in}}} \in \RR^{|\Yk|}} \left\{ \langle \s_{{\mathrm{in}}}, \q \rangle - \EE_{Z\sim\cD}\left[ \max_{\y \in \Yk} \left\{(\s_{{\mathrm{in}}})_{\y} + \varepsilon \langle\y,Z\rangle\right\} \right] \right\} \\
        &= \sup_{\u \in \RR^{|\Yk|}} \left\{ \langle \u, \q \rangle - F_{\varepsilon, \Yk}(\u) \right\} \\
        &= F_{\varepsilon, \Yk}^*(\q).
    \end{align*}
    Thus, the local regularizer is exactly the conjugate of the local perturbed maximum function. 
    
    Finally, we compute the local regularized distribution. By definition:
    \begin{align*}
        \argmaxoek\left(\stk\right) &\triangleq \argmax_{\q \in \trianglek} \left\{ \langle \stk, \q \rangle - \Omegae^{\Yk}(\q) \right\} \\
        &= \argmax_{\q \in \trianglek} \left\{ \langle \stk, \q \rangle - F_{\varepsilon, \Yk}^*(\q) \right\},
    \end{align*}
    so that Danskin's theorem gives:
    \begin{align*}
        \argmaxoek\left(\stk\right) = \nabla F_{\varepsilon, \Yk}^{**}\left(\stk\right).
    \end{align*}
    Since $F_{\varepsilon, \Yk}$ is closed proper convex, it is equal to its biconjugate by the Fenchel-Moreau theorem, and we have in fact:
    \begin{align*}
        \argmaxoek\left(\stk\right) = \nabla F_{\varepsilon, \Yk}\left(\stk\right).
    \end{align*}
    We then invert the differentiation and expectation signs (this application of the \emph{path-wise gradient estimator}, or the \emph{reparameterization trick}, is possible since the the expectation is taken with respect to $Z$ which does not depend on $\st$ \citep{blondel_elements_2025}), and further apply Danskin's theorem a second time:
    \begin{align*}
        \nabla F_{\varepsilon, \Yk}\left(\stk\right) &= \nabla_{\stk} \EE_{Z\sim\cD}\left[ \max_{\y \in \Yk} \left\{\left(\stk\right)_\y + \varepsilon \langle\y,Z\rangle \right\} \right] \\
        &= \EE_{Z\sim\cD}\left[ \nabla_{\stk} \max_{\y \in \Yk} \left\{ (\stk)_\y + \varepsilon \langle\y,Z\rangle \right\} \right] \\
        &= \EE_{Z\sim\cD}\left[ \delta^{\Yk}_{\argmax_{\y \in \Yk} \langle \thetav + \varepsilon Z, \y \rangle} \right].
    \end{align*}
\end{proof}

\subsection{Proof of \cref{prop:mean_space_bca} (\nameref{prop:mean_space_bca})}
\label{proof:mean_space_bca}

\begin{proof}
We first establish the strict convexity of the mean-space regularizer $\Omega_\cM$.
Let $\muv_1, \muv_2 \in \cM$ such that $\muv_1 \neq \muv_2$, and let $\lambda \in (0, 1)$. Because the simplex $\triangleY$ is compact and $\Omega$ is strictly convex, the minimum in the definition of $\Omega_\cM$ is uniquely attained for any $\muv \in \cM$. Let $\p_1$ and $\p_2$ be the unique minimizers corresponding to $\muv_1$ and $\muv_2$, respectively. Since $\muv_1 \neq \muv_2$, we must have $\p_1 \neq \p_2$.

Define the convex combinations $\muv_\lambda \triangleq \lambda \muv_1 + (1-\lambda)\muv_2$ and $\p_\lambda \triangleq \lambda \p_1 + (1-\lambda)\p_2$. By linearity of expectation, we have $\EE_{\p_\lambda}[Y] = \muv_\lambda$. Since $\p_\lambda \in \triangleY$, it is a valid candidate distribution for the constrained minimization problem defining $\Omega_\cM(\muv_\lambda)$. Therefore, we have $\Omega_\cM(\muv_\lambda) \leq \Omega(\p_\lambda)$. Because $\Omega$ is strictly convex and $\p_1 \neq \p_2$, we strictly bound the right-hand side:
\begin{align*}
    \Omega(\p_\lambda) < \lambda \Omega(\p_1) + (1-\lambda)\Omega(\p_2).
\end{align*}
Substituting the definitions of the marginal minima yields $\Omega_\cM(\muv_\lambda) < \lambda \Omega_\cM(\muv_1) + (1-\lambda)\Omega_\cM(\muv_2)$, which proves that $\Omega_\cM$ is strictly convex on $\cM$.

We now turn to the BCA step. Let $\Mk \triangleq \{ \muv \in \cM \mid \muv_{[\Sk]} = \yk_{[\Sk]} \}$ denote the feasible set of the mean-space BCA step. We must show that this slice of the marginal polytope strictly corresponds to the convex hull of the local neighborhood $\Yk \triangleq \{ \y \in \cY \mid \y_{[\Sk]} = \yk_{[\Sk]} \}$.

Clearly, $\conv(\Yk) \subseteq \Mk$. We must prove $\Mk \subseteq \conv(\Yk)$.
Let $\muv \in \Mk$. Because $\muv \in \cM$, there exists a distribution $\p \in \triangleY$ such that $\muv = \sum_{\y \in \cY} p_\y \y$.
For any coordinate $i \in \Sk$, we are given that $\cY \subseteq \{0,1\}^d$, so $y_i \in \{0,1\}$ for all $\y \in \cY$, and $y^{(k)}_i \in \{0,1\}$.
Since $\muv \in \Mk$, we have $\mu_i = y^{(k)}_i$. We distinguish two cases:
\begin{itemize}[topsep=0pt,itemsep=1.5pt,parsep=2pt,leftmargin=15pt]
    \item If $y^{(k)}_i = 0$, then $\mu_i = 0$, giving $\sum_{\y \in \cY} p_\y y_i = 0$. Since $p_\y \geq 0$ and $y_i \in \{0,1\}$, all terms in the sum are non-negative. This implies that for any $\y \in \cY$, $p_\y > 0 \implies y_i = 0$.
    \item If $y^{(k)}_i = 1$, then $\mu_i = 1$, giving $\sum_{\y \in \cY} p_\y y_i = 1$. Since $\sum_{\y \in \cY} p_\y = 1$, we can rewrite this as $\sum_{\y \in \cY} p_\y (1 - y_i) = 0$. Since $p_\y \geq 0$ and $1 - y_i \geq 0$, we similarly conclude that $p_\y > 0 \implies y_i = 1$.
\end{itemize}
In both cases, any solution $\y$ with non-zero probability mass ($p_\y > 0$) must strictly match $\yk$ on all coordinates $i \in \Sk$. Thus, $\supp(\p) \subseteq \Yk$, which proves that $\muv \in \conv(\Yk)$, and therefore $\Mk = \conv(\Yk)$.

Substituting the definition of the mean-space regularizer $\Omega_\cM$ into the exact BCA update, the next mean-space iterate $\muv^{(k+1)}$ is given by:
\begin{align*}
    \muv^{(k+1)} = \argmax_{\muv \in \Mk} \left[ \langle \thetav, \muv \rangle - \min_{\p \in \triangleY \text{ s.t. } \EE_\p[Y] = \muv} \Omega(\p) \right].
\end{align*}
Distributing the minus sign turns the inner minimum into a maximum:
\begin{align*}
    \muv^{(k+1)} = \argmax_{\muv \in \Mk} \left[ \max_{\p \in \triangleY \text{ s.t. } \EE_\p[Y] = \muv} \left( \langle \thetav, \muv \rangle - \Omega(\p) \right) \right].
\end{align*}
Using the fact that $\langle \thetav, \muv \rangle = \langle \thetav, \EE_\p[Y] \rangle = \langle \st, \p \rangle$ via the definition of the score vector $\st \triangleq (\langle \thetav, \y \rangle)_{\y \in \cY}$, we can absorb the maximization over $\muv$ and directly optimize over the distribution $\p$:
\begin{align*}
    \muv^{(k+1)} &= \EE_{\p^*}[Y], \quad \text{where} \quad \p^* = \argmax_{\substack{\p \in \triangleY \\ \EE_\p[Y]_{[\Sk]} = \yk_{[\Sk]}}} \left[ \langle \st, \p \rangle - \Omega(\p) \right].
\end{align*}
As established earlier, the constraint $\EE_\p[Y]_{[\Sk]} = \yk_{[\Sk]}$ strictly confines the support of $\p$ to $\Yk$. We can therefore reparameterize $\p$ using the local inclusion map $\iotak : \trianglek \hookrightarrow \triangleY$. Setting $\p = \iotak(\q)$ for $\q \in \trianglek$, the optimization problem restricts to the local simplex:
\begin{align*}
    \q^* = \argmax_{\q \in \trianglek} \left[ \langle \st, \iotak(\q) \rangle - \Omega(\iotak(\q)) \right].
\end{align*}
By the definition of the local regularizer $\Omegak \triangleq \Omega \circ \iotak$, and since $\langle \st, \iotak(\q) \rangle = \langle \stk, \q \rangle$, this simplifies exactly to the definition of the local smoothed maximizer:
\begin{align*}
    \q^* &= \argmax_{\q \in \trianglek} \left[ \langle \stk, \q \rangle - \Omegak(\q) \right] \\
    &= \argmaxok\left(\stk\right).
\end{align*}
Since $\muv^{(k+1)} = \EE_{\p^*}[Y] = \EE_{\q^*}[Y]$, we conclude that the exact mean-space BCA update yields exactly the expected value of the distribution sampled by the RLNS algorithm.
\end{proof}

\subsection{Proof of \cref{prop:updates_as_projections}  (\nameref{prop:updates_as_projections})}
\label{proof:updates_as_projections}

First, we prove that the local RLNS update distributions match the $B_\Omega$-projection of $\argmaxo(\st)$ onto the local simplex $\trianglek$.

\begin{proof}
    \textbf{Global optimization problem.}
    By definition, the global smoothed maximizer $\argmaxo(\st)$ is the unique solution to the convex optimization problem:
    \begin{align}
    \label{eq:global_primal}
        \argmaxo(\st) = \argmin_{\p \in \triangleY} \left\{ \Omega(\p) - \langle \st, \p \rangle \right\}.
    \end{align}
    We introduce a Lagrange multiplier $\nu \in \RR$ for the simplex equality constraint $\sum_{\y \in \cY} p_\y = 1$. The Lagrangian is $\cL(\p, \nu) = \Omega(\p) - \langle \st, \p \rangle + \nu\cdot (\langle \p ,\1\rangle - 1)$. The first-order optimality condition gives:
    \begin{align}
    \label{eq:global_foc}
        \nabla_\p\cL(\argmaxo(\st),\nu^\star)&=\0\notag\\
        \iff \nabla \Omega(\argmaxo(\st)) &= \st - \nu^\star \1.
    \end{align}
    
    \textbf{Local optimization problem.}
    By definition of the local regularized distribution, $\argmaxok(\stk)$ is the solution to:
    \begin{align*}
        \argmaxok(\stk) &\triangleq \argmin_{\q \in \trianglek} \left\{ \Omegak(\q) - \langle \stk, \q \rangle \right\} \\
        &= \argmin_{\q \in \trianglek} \left\{ \Omega(\iotak(\q)) - \langle \st, \iotak(\q) \rangle \right\}.
    \end{align*}
    
    \textbf{Equivalence to projection.}
    We now analyze the projection problem. We minimize the Bregman divergence between the global solution $\argmaxo(\st)$ and a distribution $\q\in\trianglek$ restricted to the local simplex:
    \begin{align*}
        B_\Omega\left(\iotak(\q)\| \argmaxo(\st)\right) \triangleq \;&\Omega(\iotak(\q)) - \Omega(\argmaxo(\st))\\
        &- \langle \nabla \Omega(\argmaxo(\st)), \iotak(\q) - \argmaxo(\st) \rangle.
    \end{align*}
    Substituting the global optimality condition from \cref{eq:global_foc} into the linear term:
    \begin{align*}
        \langle \nabla \Omega(\argmaxo(\st)), \iotak(\q) \rangle &= \langle \st - \nu^\star \bm{1}, \iotak(\q) \rangle \\
        &= \langle \st, \iotak(\q) \rangle - \nu^\star \sum_{\y \in \Yk} q_\y \\
        &= \langle \st, \iotak(\q) \rangle - \nu^\star \quad (\text{since } \q \in \trianglek).
    \end{align*}
    Thus, we have:
    \begin{align*}
        B_\Omega\left(\iotak(\q)\| \argmaxo(\st)\right) = \; &\Omega(\iotak(\q)) - \Omega(\argmaxo(\st))\\
        &- \langle \st, \iotak(\q) \rangle + \nu^\star\\
        &+ \langle \nabla \Omega(\argmaxo(\st)), \argmaxo(\st) \rangle.
    \end{align*}

    Removing terms that are constant with respect to $\q$, the projection problem reduces to:
    \begin{align*}
        \argmin_{\q \in \trianglek} B_\Omega\left(\iotak(\q)\| \argmaxo(\st)\right) = \argmin_{\q \in \trianglek} \left\{ \Omega(\iotak(\q)) - \langle \st, \iotak(\q) \rangle \right\}.
    \end{align*}
    This objective is identical to the optimization problem defining $\argmaxok(\stk)$, proving the result.
\end{proof}

We now turn to the proof that the Gibbs update distributions match the KL-projection of the target onto the local simplex.

\begin{proof}
    \textbf{Conditional distribution.}
    The conditional distribution is defined by renormalizing the global mass restricted to $\Yk$:
    \begin{align}
    \label{eq:cond_def}
        \argmaxo(\st)\left(\y \mid \yk_{[\Sk]}\right) = \frac{\argmaxo(\st)(\y)}{\sum_{\y' \in \Yk} \argmaxo(\st)(\y')} \quad \forall \y \in \Yk.
    \end{align}
    
    \textbf{KL projection.}
    Let $\q\in\trianglek$. We have:
    \begin{align*}
        \KL(\iotak(\q) \| \piv) = \sum_{\y\in\Yk}q_\y\log\left(\frac{q_\y}{\pi_\y}\right).
    \end{align*}
    We define the Lagrangian with a multiplier $\lambda \in \RR$ for the constraint $\sum_{\y \in \Yk} q_\y = 1$:
    \begin{align*}
        \cL(\q, \lambda) = \sum_{\y \in \Yk} q_\y \log (q_\y) - \sum_{\y \in \Yk} q_\y \log (\pi_\y) + \lambda \left( \sum_{\y \in \Yk} q_\y - 1 \right).
    \end{align*}
    
    \textbf{Optimality conditions.}
    The first-order condition with respect to any coordinate $q_\y$ is:
    \begin{align*}
        \frac{\partial \cL}{\partial q_\y}(\q^\star,\lambda^\star) = 1 + \log q_\y^\star - \log (\pi_\y) + \lambda^\star = 0.
    \end{align*}
    Solving for $q_\y^\star$:
    \begin{align*}
        q_\y^\star = \pi_\y \exp(-(1+\lambda^\star)).
    \end{align*}
    Summing over $\y \in \Yk$ to satisfy the simplex constraint:
    \begin{align*}
        \sum_{\y \in \Yk} q_\y^\star = \exp(-(1+\lambda^\star)) \sum_{\y \in \Yk} \pi_\y = 1.
    \end{align*}
    This determines the Lagrange multiplier term: $\exp(-(1+\lambda^\star)) = (\sum_{\y' \in \Yk} \pi_{\y'})^{-1}$. Substituting this back into the expression for $q_\y^\star$, we obtain:
    \begin{align*}
        q_\y^\star = \frac{\pi_\y}{\sum_{\y' \in \Yk} \pi_{\y'}}.
    \end{align*}
    This matches the definition of the conditional distribution in \cref{eq:cond_def} exactly.
\end{proof}

\subsection{Proof of \cref{prop:one_step_grad} (\nameref{prop:one_step_grad})}
\label{proof:one_step_grad}

\begin{proof}
    We first characterize the domain of $\Omega_\y$. We can write $F_\y$ as a convex combination of local maximum functions: $F_\y(\s) = \sum_{S \in \supp(\nu)} \nu(S) f_S(\s)$, where $f_S(\s) \triangleq f_{\Omega^{\cY_S}}(\s_{[\cY_S]})$.

    To find the domain of the conjugate $f_S^*$, let $h_S(\u) \triangleq f_{\Omega^{\cY_S}}(\u)$ for $\u \in \RR^{|\cY_S|}$. By definition of the smoothed maximum operator, we have $h_S=(\Omega^{\cY_S})^*$. Since $\Omega^{\cY_S}$ is strictly convex, proper, and lower semi-continuous on the compact set $\triangle^{\cY_S}$, the Fenchel-Moreau theorem yields $h_S^*=(\Omega^{\cY_S})^{**}=\Omega^{\cY_S}$. Thus, $\dom(h_S^*)=\dom(\Omega^{\cY_S}) = \triangle^{\cY_S}$.

    For any global score vector $\s \in \RRY$, we decompose it into components inside the local neighborhood, $\s_{\mathrm{in}} \triangleq \s_{[\cY_S]}$, and components outside, $\s_{\mathrm{out}} \triangleq \s_{[\cY \setminus \cY_S]}$. We can then write $f_S(\s) = h_S(\s_{\mathrm{in}})$. By definition of the Fenchel conjugate:
    \begin{align*}
        f_S^*(\p) &= \sup_{\s_{\mathrm{in}}, \s_{\mathrm{out}}} \left\{ \langle \p_{\mathrm{in}}, \s_{\mathrm{in}} \rangle + \langle \p_{\mathrm{out}}, \s_{\mathrm{out}} \rangle - h_S(\s_{\mathrm{in}}) \right\} \\
        &= \sup_{\s_{\mathrm{in}} \in \RR^{|\cY_S|}} \left\{ \langle \p_{\mathrm{in}}, \s_{\mathrm{in}} \rangle - h_S(\s_{\mathrm{in}}) \right\} + \sup_{\s_{\mathrm{out}} \in \RR^{|\cY \setminus \cY_S|}} \left\{ \langle \p_{\mathrm{out}}, \s_{\mathrm{out}} \rangle \right\} \\
        &= h_S^*(\p_{\mathrm{in}}) + I_{\{\0\}}(\p_{\mathrm{out}}).
    \end{align*}
    For $f_S^*(\p)$ to be finite, we require $\p_{\mathrm{out}} = \0$ and $\p_{\mathrm{in}} \in \dom(h_S^*) = \triangle^{\cY_S}$. This means $\p$ must be a zero-padded distribution from the local simplex, so $\dom(f_S^*) = \iota^{\cY_S}(\triangle^{\cY_S})$.

    We now determine the domain of $\Omega_\y = (F_\y)^*$. Recall that $F_\y$ is a sum of scaled functions:
    $$ F_\y(\s) = \sum_{S \in \supp(\nu)} \nu(S) f_S(\s). $$

    Let $g_S(\s) \triangleq \nu(S) f_S(\s)$. A known property of the Fenchel conjugate then gives that for any $\nu(S) > 0$, we have:
    $$ g_S^*(\p) = \nu(S) f_S^*\left(\frac{\p}{\nu(S)}\right). $$

    Consequently, the effective domain of the scaled conjugate is scaled by the same factor:
    $$ \p \in \dom(g_S^*) \iff \frac{\p}{\nu(S)} \in \dom(f_S^*) \iff \p \in \nu(S) \dom(f_S^*). $$
    Thus, $\dom(g_S^*) = \nu(S) \dom(f_S^*)$.

    The Fenchel conjugate of a sum of functions is the inf-convolution (denoted by $\square$) of their individual conjugates \citep{rockafellar_convex_1970}:
    $$ \Omega_\y = (F_\y)^* = \square_{S \in \supp(\nu)} g_S^*. $$

    Since the effective domain of the inf-convolution is the Minkowski sum of the effective domains, we then have:
    \begin{align*}
        \dom(\Omega_\y) &= \sum_{S \in \supp(\nu)} \dom(g_S^*) \\
        &= \sum_{S \in \supp(\nu)} \nu(S) \dom(f_S^*) \\
        &= \sum_{S \in \supp(\nu)} \nu(S) \iota^{\cY_S}(\triangle^{\cY_S}).
    \end{align*}

    Because $\y$ is the reference solution defining the local neighborhoods, it satisfies $\y \in \cY_S$ for every subset $S$. Thus, we have $\delta_\y^{\cY_S}\in\triangle^{\cY_S}$, and by padding, we get $\delta^\cY_\y=\iota^{\cY_S}(\delta_\y^{\cY_S})\in\iota^{\cY_S}(\triangle^{\cY_S})$. Picking this specific point from each set in the Minkowski sum yields:
    \begin{align*}
        \sum_{S \in \supp(\nu)} \nu(S) \delta^\cY_\y = \left( \sum_{S \in \supp(\nu)} \nu(S) \right)\cdot \delta^\cY_\y =\delta^\cY_\y.
    \end{align*}
    This proves that $\delta^\cY_\y \in \dom(\Omega_\y)$, ensuring the target-dependent Fenchel-Young loss is well-defined and finite at the ground-truth distribution.

    Now, assuming that $\nu$ does not depend on $\s$, we can exchange the expectation and the gradient signs. The gradient of $F_\y$ is then given by Danskin's theorem and the chain rule:
    \begin{align*}
        \nabla_\s F_\y(\s) = \EE_{S\sim\nu}\left[ \iota^{\cY_S}\left(\p_{\Omega^{\cY_S}} \left(\s_{\left[\cY_S\right]}\right)\right)\right].
    \end{align*}

    The gradient of the target-dependent loss $L_{\Omega_\y}$ with respect to $\thetav$ at $\p = \delta^\cY_\y$ is:
    \begin{align*}
        \nabla_\thetav L_{\Omega_\y}(\thetav\,;\delta^\cY_\y) &= \nabla_\thetav F_\y(\st) - \y\\
        &= \EE_{S\sim\nu}\left[ \EE_{Y\sim\p_{\Omega^{\cY_S}} \left((\st)_{\left[\cY_S\right]}\right)}[Y]\right] - \y.
    \end{align*}

    This exactly matches the expected update of RLNS:
    \begin{align*}
        S\sim\nu\quad\text{and}\quad Y\mid S\sim\p_{\Omega^{\cY_S}}\left((\st)_{\left[\cY_S\right]}\right)\implies \EE_{S}[\EE[Y\mid S]] - \y = \nabla_\thetav L_{\Omega_\y}(\thetav\,;\delta^\cY_\y).
    \end{align*}
    Thus, the sampling procedure initialized at $\y$ and run for $1$ step produces an unbiased stochastic gradient.
\end{proof}

\subsection{Proof of \cref{prop:composite_likelihood} (\nameref{prop:composite_likelihood})}
\label{proof:composite_likelihood}

\begin{proof}
    In the Shannon entropy-regularized setting $\Omega=-\gamma H^s$, the smoothed local maximum evaluates to the log-sum-exp function:
    \begin{align*}
        f_{\Omega^{\cY_S}}\left(\s_{\left[\cY_S\right]}\right) &= \max_{\q\in\triangle^{\cY_S}}\langle \q,\s_{\left[\cY_S\right]}\rangle- \Omega\circ\iota^{\cY_S}(\q) \\
        &= \max_{\q\in\triangle^{\cY_S}}\langle \q,\s_{\left[\cY_S\right]}\rangle + \gamma H^s_{\cY_S}(\q) \quad \text{where } H^s_{\cY_S}(\q)\triangleq-\sum_{\y'\in\cY_S}q_{\y'}\log(q_{\y'})\\
        &= \gamma \log \sum_{\y'\in\cY_S}\exp\left(s_{\y'}/\gamma \right).
    \end{align*}
    
    We first determine the value of the additive constant $\Omega_\y(\delta_\y^\cY)$. By definition of the convex conjugate, we have:
    $$ \Omega_\y(\delta_\y^\cY) = \sup_{\s\in\RR^{|\cY|}} \left\{ \langle \s, \delta_\y^\cY \rangle - F_\y(\s) \right\} = \sup_{\s\in\RR^{|\cY|}} \left\{ s_\y - \EE_{S\sim\nu}\left[ \gamma\log\sum_{\y'\in\cY_S}\exp(s_{\y'}/\gamma) \right] \right\}. $$

    Notice that for any subset $S$, the ground-truth solution $\y$ always belongs to its own local neighborhood, meaning $\y \in \cY_S$. Because all terms in the exponential sum are strictly positive, the sum is lower-bounded by the single term corresponding to $\y$:
    $$ \sum_{\y'\in\cY_S}\exp(s_{\y'}/\gamma) \geq \exp(s_\y/\gamma). $$

    Taking the logarithm and multiplying by $\gamma > 0$, we get:
    $$ \gamma\log\sum_{\y'\in\cY_S}\exp(s_{\y'}/\gamma) \geq \gamma\log(\exp(s_\y/\gamma)) = s_\y. $$

    Therefore, for any score vector $\s \in \RR^{|\cY|}$ and any subset $S$, the difference inside the expectation is non-positive:
    $$ s_\y - \gamma\log\sum_{\y'\in\cY_S}\exp(s_{\y'}/\gamma) \leq 0. $$

    Taking the expectation over $S \sim \nu$ preserves this inequality, meaning $\Omega_\y(\delta_\y^\cY) \leq 0$. To show that the supremum is exactly $0$, we can construct a sequence of score vectors that asymptotically achieves it. Let $s_\y = 0$, and for all $\y' \neq \y$, let $s_{\y'} \to -\infty$. Under this limit, for any subset $S$:
    $$ \lim_{s_{\y'\neq \y} \to -\infty} \gamma\log\sum_{\y'\in\cY_S}\exp(s_{\y'}/\gamma) = \gamma\log(\exp(0) + 0) = 0. $$

    Consequently, the objective value tends to $0 - 0 = 0$. Since the supremum is bounded above by $0$ and can be asymptotically approached, we conclude that $\Omega_\y(\delta_\y^\cY) = 0$.

    Now, substituting $\st = (\langle\thetav,\y'\rangle)_{\y'\in\cY}$ and $\Omega_\y(\delta_\y^\cY) = 0$ into the loss function:
    \begin{align*}
        L_{\Omega_\y}(\thetav\,;\delta^\cY_\y) &= F_\y(\st) + \Omega_\y(\delta^\cY_\y) - \langle\thetav,\y\rangle\\
        &= \EE_{S\sim\nu}\left[ f_{\Omega^{\cY_S}}\left((\st)_{[\cY_S]}\right) \right] + 0 - \langle\thetav,\y\rangle\\
        &= \EE_{S\sim\nu}\left[ \gamma\log\sum_{\y'\in\cY_S}\exp(\langle\thetav,\y'\rangle/\gamma)  - \langle\thetav,\y\rangle\right].
    \end{align*}
    
    Moreover, denoting $\pi_\thetav\triangleq\argmaxo(\st)$, we know that $\pi_\thetav(\y)\propto\exp(\langle\thetav,\y\rangle/\gamma)$ in this setting. Thus, the conditional probability is:
    \begin{align*}
        \pi_\thetav(\y\mid \y_{[S]}) &= \frac{\pi_\thetav(\y)}{\sum_{\y'\in\cY_S}\pi_\thetav(\y')}\\
        &= \frac{\exp(\langle\thetav,\y\rangle/\gamma)}{\sum_{\y''\in\cY}\exp(\langle\thetav,\y''\rangle/\gamma)}\cdot\left( \frac{\sum_{\y'\in\cY_S}\exp(\langle\thetav,\y'\rangle/\gamma)}{\sum_{\y''\in\cY}\exp(\langle\thetav,\y''\rangle/\gamma)} \right)^{-1}\\
        &= \frac{\exp(\langle\thetav,\y\rangle/\gamma)}{\sum_{\y'\in\cY_S}\exp(\langle\thetav,\y'\rangle/\gamma)}.
    \end{align*}
    
    Taking the negative logarithm and multiplying by $\gamma$:
    $$ -\gamma \log \pi_\thetav(\y\mid \y_{[S]}) = \gamma\log\sum_{\y'\in\cY_S}\exp(\langle\thetav,\y'\rangle/\gamma) - \langle\thetav,\y\rangle. $$

    Substituting this back into the loss expression gives the final result:
    $$ L_{\Omega_\y}(\thetav\,;\delta^\cY_\y) = \EE_{S\sim\nu}\left[ -\gamma\log \pi_\thetav(\y\mid\y_{[S]})\right]. $$
\end{proof}

\subsection{Proof of \cref{prop:bias_bound} (\nameref{prop:bias_bound})}
\label{proof:bias_bound}

\begin{proof}
    This result follows from the perturbation bounds for uniformly ergodic Markov chains established by \citet{mitrophanov_sensitivity_2005}. Specifically, we apply Theorem 3.2 and Corollary 3.2 from that work.
    
    Let $\pi^*$ be the stationary distribution of $P^*$. \citet{mitrophanov_sensitivity_2005} show that if $\tau_m(P^*) < 1$, then:
    \begin{align*}
        \TV\left( \piLNS\,, \pi^* \right) \leq \frac{\max_{\y \in \cY} \TV\left( P^m(\y\,, \cdot)\,, (P^*)^m(\y\,, \cdot) \right)}{1 - \tau_m(P^*)}
    \end{align*}
    (note that the original theorem is stated in terms of operator norms, which corresponds to exactly twice the TV distance for both the numerator and the left-hand side: the factors of $2$ cancel out).
    
    Furthermore, as shown in the proof of Corollary 3.2 in \citet{mitrophanov_sensitivity_2005}, the error over $m$ steps is bounded by $m$ times the single-step error:
    \begin{align*}
        \max_{\y \in \cY} \TV\left( P^m(\y\,,\cdot)\,,(P^*)^m(\y\,, \cdot)\right) \leq m \cdot\max_{\y \in \cY} \TV\left( P(\y\,, \cdot)\,, P^*(\y\,, \cdot) \right).
    \end{align*}
    
    We now bound the distance between the transition kernels. Let $\mu$ denote the probability distribution used to select the subset of coordinates $\Sk$ in Step 3 of \cref{algo:rlns} (and identically for the ideal Gibbs sampler). For a fixed current state $\yk \in \cY$, the transition kernel $P(\yk,\cdot)$ is a mixture of the local update distributions over all possible choices of $\Sk$. Using the inclusion map $\iota_{\Yk}$ from \cref{def:local_reg} to map local distributions to $\triangleY$, we denote:
    \begin{align*}
        \qlns &\triangleq \iota_{\Yk}\left(\argmaxok\left(\stk\right) \right)\in\triangleY,\\
        \qgibbs &\triangleq \iota_{\Yk}\left(\argmaxo\left(\st\right)\left(\cdot \mid \yk_{[\Sk]}\right)\right)\in\triangleY.
    \end{align*}
     Then, we have:
    \begin{align*}
        P(\yk,\cdot) = \sum_{\Sk \subseteq [d]} \nu(\Sk) \, \qlns.
    \end{align*}
    Similarly, the ideal Gibbs kernel $P^*(\yk\,, \cdot)$ uses the conditional distributions:
    \begin{align*}
        P^*(\yk,\cdot) = \sum_{\Sk \subseteq [d]} \nu(\Sk) \, \qgibbs.
    \end{align*}
    
    We compute the total variation distance between these two mixtures. Using on the convexity of the TV distance, we have:
    \begin{align*}
        \TV\left( P(\yk,\cdot), P^*(\yk,\cdot) \right) &= \TV\left( \sum_{\Sk} \nu(\Sk) \qlns,\; \sum_{\Sk} \nu(\Sk) \qgibbs\right) \\
        &\leq \sum_{\Sk \subseteq [d]} \nu(\Sk) \, \TV\left(\qlns, \qgibbs\right).
    \end{align*}
    Since we have $\TV(\p,\p')=\TV(\iotak(\p),\iotak(\p'))$ for any $\p,\p'\in\trianglek$, \cref{def:consistency_gap} then gives that this local distance is bounded by $\gap$ for every $\yk$ and $\Sk$:
    \begin{align*}
        \TV\left(\qlns, \qgibbs \right) &\leq \max_{\y^{(k')},S^{(k')}}\TV\left(\q^{\mathrm{LNS}}(\y^{(k')}, S^{(k')}), \q^{\mathrm{Gibbs}}(\y^{(k')}, S^{(k')})\right)\\
        &= \gap.
    \end{align*}
    Substituting this into the sum:
    \begin{align*}
        \TV\left(P(\yk,\cdot), P^*(\yk, \cdot) \right) &\leq \sum_{\Sk \subseteq [d]} \nu(\Sk) \, \gap \\
        &= \gap.
    \end{align*}
    Taking the maximum over $\yk \in \cY$ completes the proof.
\end{proof}

\subsection{Proof of \cref{prop:consistency_bound_smoothness} (\nameref{prop:consistency_bound_smoothness})}
\label{proof:consistency_bound_smoothness}

\begin{proof}
    Let $\piv \triangleq \argmaxo(\st)$ be the global regularized distribution. Fix a neighborhood $\Yk$ defined by some $\yk\in\cY$, and $\Sk\subseteq[d]$.
    Let $\qh,\q^* \in \trianglek$ denote the RLNS update and the ideal Gibbs update distributions (the projections of $\piv$ onto $\trianglek$ via $B_\Omega$ and $B_{-\gamma H^s}$, per \cref{prop:updates_as_projections}).

    \paragraph{Optimality conditions of the projection problems.}
    Both $\qh$ and $\q^*$ satisfy the first-order optimality conditions for their respective projection problems. For any $\bm{z} \in \trianglek$:
    \begin{align*}
        \langle \nabla_{\qh} B_\Omega\left(\iotak(\qh)\|\piv\right), \z-\qh \rangle &\geq 0,\\
        \langle \nabla_{\q^*} B_{-\gamma H^s}\left(\iotak(\q^*)\| \piv\right), \z-\q^* \rangle &\geq 0.
    \end{align*}
    Since for any $f$ we have $\nabla_1 B_f(\u\|\v) = \nabla f(\u) - \nabla f(\v)$, applying the chain rule gives:
    \begin{align*}
        \left\langle [\nabla\iotak (\qh)]^\top \left( \nabla\Omega(\iotak(\qh)) - \nabla\Omega(\piv)\right), \z-\qh \right)\rangle &\geq 0,\\
        \left\langle [\nabla\iotak (\q^*)]^\top \left(\nabla(-\gamma H^s)(\iotak(\q^*)) - \nabla(-\gamma H^s)(\piv)\right), \z-\q^* \right\rangle &\geq 0.
    \end{align*}
    Since $\iotak$ is a linear map, it is its own Jacobian. Using the identity $\langle \A^\top \x, \y \rangle = \langle \x, \A\y \rangle$, we get:
    \begin{align}
        \label{eq:var_q_hat}
        \langle \nabla\Omega(\iotak(\qh)) - \nabla\Omega(\piv), \iotak(\bm{z} - \qh) \rangle &\geq 0,\\
        \label{eq:var_q_star}
        \langle \nabla(-\gamma H^s)(\iotak(\q^*)) - \nabla(-\gamma H^s)(\piv), \iotak(\bm{z} - \q^*) \rangle &\geq 0.
    \end{align}

    \paragraph{Decomposition.}
    We now decompose the gradient of $\Omega$ as $\nabla \Omega = \nabla(-\gamma H^s) + \nabla h_\gamma$. Substituting this into \cref{eq:var_q_hat} and applying it to $\bm{z}=\q^*\in\trianglek$, we get:
    \begin{align}
    \label{eq:helper_1}
        \langle \nabla(-\gamma H^s)(\iotak(\qh)) + \nabla h_\gamma(\iotak(\qh)) - \nabla(-\gamma H^s)(\piv) - \nabla h_\gamma(\piv), \iotak(\q^* - \qh) \rangle \ge 0.
    \end{align}
    Next, we apply \cref{eq:var_q_star} to $\bm{z}=\qh\in\trianglek$:
    \begin{align}
    \label{eq:helper_2}
        \langle \nabla(-\gamma H^s)(\iotak(\q^*)) - \nabla(-\gamma H^s)(\piv), \iotak(\qh - \q^*) \rangle \ge 0.
    \end{align}
    Summing \cref{eq:helper_1,eq:helper_2} and recognizing that $\iotak(\q^* - \qh) = - \iotak(\qh - \q^*)$, the $\nabla(-\gamma H^s)(\piv)$ terms cancel out, and we obtain:
    \begin{align*}
        \langle \nabla(-\gamma H^s)(\iotak(\qh)) - \nabla(-\gamma H^s)(\iotak(\q^*)), \iotak(\q^* - \qh) \rangle & \\
        + \langle \nabla h_\gamma(\iotak(\qh)) - \nabla h_\gamma(\piv), \iotak(\q^* - \qh) \rangle &\geq 0.
    \end{align*}
    Rearranging the terms and using the linearity of $\iotak$ yields:
    \begin{align}
    \label{eq:helper_4}
        &\left\langle \nabla h_\gamma(\iotak(\qh)) - \nabla h_\gamma(\piv), \iotak(\q^* - \qh) \right\rangle\notag\\
        \geq & \left\langle \nabla(-\gamma H^s)(\iotak(\qh)) - \nabla(-\gamma H^s)(\iotak(\q^*)), \iotak(\qh) - \iotak(\q^*) \right\rangle.
    \end{align}
    \paragraph{Strong convexity of $-\gamma H^s$.}
    By Pinsker's inequality, $-H^s$ is $1$-strongly convex with respect to the $\ell_1$ norm, which implies that $-\gamma H^s$ is $\gamma$-strongly convex with respect to the $\ell_1$ norm. Thus, for any $\u, \v \in \triangleY$, we have:
    \begin{align}
    \label{eq:cocoercivity}
         \langle \nabla(-\gamma H^s)(\u) - \nabla(-\gamma H^s)(\v)\,,\, \u - \v \rangle
         &\geq \gamma \|\u - \v\|_1^2.
    \end{align}
    The right-hand side of \cref{eq:helper_4} is now bounded by the strong convexity of $-\gamma H^s$, and \cref{eq:cocoercivity} gives:
    \begin{align*}
        \langle \nabla(-\gamma H^s)(\iotak(\qh)) - \nabla(-\gamma H^s)(\iotak(\q^*)), \iotak(\qh) - \iotak(\q^*) \rangle
        &\geq \gamma \|\iotak(\qh) - \iotak(\q^*)\|_1^2\\&\geq \gamma \|\qh - \q^*\|_1^2.\\
    \end{align*}
    Combining this with \cref{eq:helper_4}, we get:
    \begin{align}
    \label{eq:helper_3}
        \gamma \|\qh - \q^*\|_1^2 \leq \langle \nabla h_\gamma(\iotak(\qh)) - \nabla h_\gamma(\piv), \iotak(\q^* - \qh) \rangle.
    \end{align}
    
    \paragraph{Smoothness of $h_\gamma$.}
    Applying Hölder's inequality to the right-hand side of \cref{eq:helper_3}:
    \begin{align*}
        \gamma \|\qh - \q^*\|_1^2 &\le \| \nabla h_\gamma(\iotak(\qh)) - \nabla h_\gamma(\piv) \|_\infty \cdot \|\iotak(\q^* - \qh)\|_1 \\
        &\leq \| \nabla h_\gamma(\iotak(\qh)) - \nabla h_\gamma(\piv) \|_\infty \cdot \|\qh - \q^*\|_1.
    \end{align*}
    Assuming $\qh \neq \q^*$, dividing by $\|\qh - \q^*\|_1$, and using the definition $\TV(\p,\q) = \frac{1}{2}\|\p-\q\|_1$ yields:
    
    \begin{align*}
        \TV(\qh\,, \q^*) \leq \frac{1}{2\gamma} \| \nabla h_\gamma(\iotak(\qh)) - \nabla h_\gamma(\piv) \|_\infty .
    \end{align*}

    Then, from the $L_{\Omega,\gamma}$-smoothness of $h_\gamma$ for the $\ell_1$ norm (whose dual is the $\ell_\infty$ norm), we get:

    \begin{align*}
        \TV(\qh\,, \q^*) \leq \frac{L_{\Omega,\gamma}}{2\gamma} \| \iotak(\qh) - \piv \|_1 .
    \end{align*}
    Since $\piv,\iotak(\qh)\in\triangleY$, we conclude by using the fact that the diameter of $\triangleY$ in $\ell_1$ norm is $2$, , which finally gives:
    \begin{align*}
        \TV(\qh\,, \q^*) \leq \frac{L_{\Omega,\gamma}}{\gamma}.
    \end{align*}
    Since this bound is independent of the chosen $\yk$ and $\Sk$ (which determine $\qh$ and $\q^*$ by determining $\Yk$), we finally get:
    \begin{align*}
        \gap \triangleq \max_{\yk,\Sk}\TV(\qh,\q^*) \leq \frac{L_{\Omega,\gamma}}{\gamma}.
    \end{align*}
\end{proof}

\subsection{Proof of \cref{prop:consistency_bound_perturbation} (\nameref{prop:consistency_bound_perturbation})}
\label{proof:consistency_bound_perturbation}

\begin{proof}
    Consider a fixed current solution $\yk \in \cY$ and a subset of coordinates $\Sk \subseteq [d]$, defining the local neighborhood $\Yk$.
    
    Let $\qh \triangleq \argmaxoek(\stk)$ denote the local regularized distribution, $\pi^* \triangleq \argmaxoe(\st)$ denote the global regularized distribution, and $\q^*\triangleq \argmaxoe(\st)(\cdot |\yk_{[\Sk]})$ denote the global regularized distribution conditioned on $\Yk$. We rely on the coupling induced by the shared noise variable $Z$.
    
    Let $Z \sim \cD$ be a random noise vector. We define two coupled random variables taking values in $\cY$ and $\Yk$ respectively:
    \begin{align*}
        Y^* &\triangleq \argmax_{\y \in \cY} \{ \langle \thetav + \varepsilon Z, \y \rangle \}, \\
        \hat{Y} &\triangleq \argmax_{\y \in \Yk} \{ \langle \thetav + \varepsilon Z, \y \rangle \}.
    \end{align*}
    By definition of the perturbation-based regularizer $\Omegae$ (\cref{def:perturbation_regularization}) and \cref{prop:perturbation_local}, the distributions of these variables correspond exactly to the global and local regularized distributions:
    \begin{align*}
        Y^* \sim \pi^* \quad \text{and} \quad \hat{Y} \sim \qh.
    \end{align*}
    
    \paragraph{Coupling event.}
    Let $E$ be the event that the global perturbed maximizer falls within the local neighborhood:
    \begin{align*}
        E \triangleq \{ Y^* \in \Yk \}.
    \end{align*}
    Since $\Yk \subseteq \cY$, if the maximum over the larger set $\cY$ belongs to the subset $\Yk$, it must strictly coincide with the maximum over the subset $\Yk$ (assuming unique maximizers almost surely, which holds if $\cD$ has a density). Therefore:
    \begin{align}
    \label{eq:coupling_implication}
        \forall \omega \in E, \quad Y^*(\omega) = \hat{Y}(\omega).
    \end{align}
    We denote the probability of this event by $\alpha \triangleq \PP(E) = \pi^*(\Yk)$.
    
    \paragraph{Decomposition of the local distribution.}
    The ideal conditional distribution $\q^*$ corresponds to the distribution of $Y^*$ conditioned on $E$:
    \begin{align*}
        \forall \y \in \Yk, \quad q^*_\y = \PP(Y^* = \y \mid E).
    \end{align*}
    We decompose the probability mass function of $\hat{Y}$ (which is $\hat{q}$) by conditioning on $E$:
    \begin{align*}
        \hat{q}_\y &= \PP(\hat{Y} = \y) \\
        &= \PP(\hat{Y} = \y \mid E)\PP(E) + \PP(\hat{Y} = \y \mid \overline{E})\PP(\overline{E}).
    \end{align*}
    Using the coupling implication from \cref{eq:coupling_implication}, we have $\hat{Y} = Y^*$ on $E$. Thus:
    \begin{align*}
        \PP(\hat{Y} = \y \mid E) = \PP(Y^* = \y \mid E) = q^*_\y.
    \end{align*}
    Substituting this back, and letting $\r \in \trianglek$ be defined by $r_\y \triangleq \PP(\hat{Y} = \y \mid \overline{E})$, we obtain the mixture:
    \begin{align*}
        \hat{q}_\y = \alpha q^*_\y + (1-\alpha) r_\y.
    \end{align*}
    
    \paragraph{Total variation bound.}
    We now compute the total variation distance between $\qh$ and $\q^*$:
    \begin{align*}
        \TV(\qh, \q^*) &= \frac{1}{2} \sum_{\y \in \Yk} | \hat{q}_\y - q^*_\y | \\
        &= \frac{1}{2} \sum_{\y \in \Yk} | \alpha q^*_\y + (1-\alpha) r_\y - q^*_\y | \\
        &= \frac{1}{2} \sum_{\y \in \Yk} | (1-\alpha)(r_\y - q^*_\y) | \\
        &= (1-\alpha) \left( \frac{1}{2} \sum_{\y \in \Yk} | r_\y - q^*_\y | \right)\\
        &= (1-\alpha)\cdot \TV(\r, \q^*).
    \end{align*}
    Since the total variation distance is bounded by $1$, we then have:
    \begin{align*}
        \TV(\qh, \q^*) \leq 1 - \alpha = 1 - \pi^*(\Yk).
    \end{align*}
    
    \paragraph{Conclusion.}
    The local consistency gap $\gap$ is defined as the maximum of this total variation distance over all choices of $\yk$ and $\Sk$ (\cref{def:consistency_gap}). Taking the maximum yields:
    \begin{align*}
        \gap \leq 1 -  \min_{\yk, \Sk} \pi^*(\Yk).
    \end{align*}
\end{proof}

\subsection{Proof of \cref{prop:product_space_exact} (\nameref{prop:product_space_exact})}
\label{proof:product_space_exact}

\begin{proof}
    \textbf{Factorization of the global distribution.} 
    Recall that the global regularized distribution is defined by the law of the perturbed maximizer $Y^* \triangleq \argmax_{\y\in\cY} \langle\thetav + \varepsilon Z, \y\rangle$ with $Z \sim \cD$.
    Because the feasible set is a product space $\cY = \prod_{m=1}^M \cY_m$, the dot product additively decomposes over the blocks $B_m$. The global maximization therefore separates into $M$ independent local maximizations:
    \begin{align*}
        Y^* = \left( \argmax_{\y_m \in \cY_m} \langle \thetav_{[B_m]} + \varepsilon Z_{[B_m]}, \y_m \rangle \right)_{m=1}^M.
    \end{align*}
    Since the noise blocks $(Z_{[B_m]})_{m=1}^M$ are mutually independent, the components $(Y^*_{[B_m]})_{m=1}^M$ are independent random variables. Thus, the global regularized distribution $\pi^* \triangleq \argmaxoe(\st)$ factorizes over the blocks:
    \begin{align*}
        \pi^*(\y) = \prod_{m=1}^M \pi^*_m(\y_{[B_m]}), \quad \text{where} \quad \pi^*_m(\y_m) \triangleq \PP\left( \argmax_{\y'_{m} \in \cY_m} \langle \thetav_{[B_m]} + \varepsilon Z_{[B_m]}, \y'_{m} \rangle = \y_m \right).
    \end{align*}

    \textbf{Conditional distribution.}
    Let $\yk \in \cY$ be a current solution, and $\Sk$ be a union of blocks defining the frozen variables. Let $\Fk \triangleq [d] \setminus \Sk$ be the complement (the free variables), which is also a union of blocks. Due to the independence of the blocks, conditioning the global distribution $\pi^*$ on the frozen variables $\yk_{[\Sk]}$ simply restricts the product to the free blocks:
    \begin{align}
    \label{eq:product_conditional}
        \pi^*\left( \y \mid \yk_{[\Sk]} \right) = \prod_{B_m \subseteq \Fk} \pi^*_m(\y_{[B_m]}) \quad \text{for any } \y \in \Yk,
    \end{align}
    where $\Yk = \{ \y \in \cY \mid \y_{[\Sk]} = \yk_{[\Sk]} \}$.

    \textbf{Local RLNS update.}
    By \cref{prop:perturbation_local}, the local perturbed RLNS update samples from $\argmaxoek(\stk)$ by computing $\hat{Y} \triangleq \argmax_{\y \in \Yk} \langle \thetav + \varepsilon Z, \y \rangle$. 
    Because $\y_{[\Sk]}$ is fixed to $\yk_{[\Sk]}$ for all $\y \in \Yk$, the maximization over $\Yk$ reduces to maximizing over the free blocks:
    \begin{align*}
        \hat{Y}_{[\Fk]} = \argmax_{\y_{\text{free}} \in \prod_{B_m \subseteq \Fk} \cY_m} \sum_{B_m \subseteq \Fk} \langle \thetav_{[B_m]} + \varepsilon Z_{[B_m]}, \y_{[B_m]} \rangle.
    \end{align*}
    This subproblem is structurally identical to the one defining the free components of the global maximizer $Y^*$. Since $Z\sim\cD$, the distribution of $\hat{Y}_{[\Fk]}$ is exactly $\prod_{B_m \subseteq \Fk} \pi^*_m$.
    
    Thus, the local regularized distribution exactly matches the conditional global distribution from \cref{eq:product_conditional}:
    \begin{align*}
        \argmaxoek(\stk)(\y) = \pi^*\left( \y \mid \yk_{[\Sk]} \right).
    \end{align*}
    By \cref{def:consistency_gap}, this exact match implies that the local consistency gap is $\gape = 0$.
    Since the local updates are equivalent to conditional sampling from $\pi^*$, \cref{algo:perturbed_lns} precisely operates as a block Gibbs sampler targeting $\pi^*$, and its stationary distribution is exactly:
    \begin{align*}
        \piLNSe = \argmaxoe(\st).
    \end{align*}
\end{proof}

\subsection{Proof of \cref{prop:cut_bound} (\nameref{prop:cut_bound})}
\label{proof:cut_bound}

\begin{proof}
    Consider a fixed current solution $\yk \in \cY$ and a subset of frozen coordinates $\Sk \subseteq [d]$. Let $\Fk = [d] \setminus \Sk$ be the free coordinates.
    Let $\qh \triangleq \argmaxoek(\stk)$ denote the local RLNS update distribution, and $\q^* \triangleq \argmaxoe(\st)(\cdot \mid \yk_{[\Sk]})$ denote the ideal global conditional distribution.

    \textbf{1. Relaxed local update.}
    Define the relaxed local neighborhood $\Yk_{\text{relax}} \triangleq \cY_F(\Sk) \times \{\yk_{[\Sk]}\}$, as well as the relaxed maximizer over this set:
    \begin{align*}
        \hat{Y}_{\text{relax}} \triangleq \argmax_{\y \in \Yk_{\text{relax}}} \langle \thetav + \varepsilon Z, \y \rangle.
    \end{align*}
    Since $\Yk_{\text{relax}}$ places no constraints between the free variables and the frozen variables $\yk_{[\Sk]}$, the optimization over $\Fk$ is identical to the one defining the free block of the global relaxed maximizer $Y^*_{\text{relax}}(\Sk)$. Therefore, the distribution of $(\hat{Y}_{\text{relax}})_{[\Fk]}$ is exactly the distribution of $(Y^*_{\text{relax}}(\Sk))_{[\Fk]}$.
    Let $\q^*_{\text{relax}}$ denote the law of $\hat{Y}_{\text{relax}}$. By \cref{prop:product_space_exact}, $\q^*_{\text{relax}}$ is also exactly the conditional distribution of the relaxed global maximizer:
    \begin{align*}
        \q^*_{\text{relax}}(\cdot) = \PP( Y^*_{\text{relax}} = \cdot \mid (Y^*_{\text{relax}}(\Sk))_{[\Sk]} = \yk_{[\Sk]} ).
    \end{align*}
    \textbf{2. Bounding the distance to the local RLNS update.}
    The true local RLNS update $\hat{Y}$ optimizes over $\Yk = \{ \y \in \cY \mid \y_{[\Sk]} = \yk_{[\Sk]} \}$. Notice that $\Yk \subseteq \Yk_{\text{relax}}$.
    
    Let $E_\partial$ be the event that $\hat{Y}_{\text{relax}} \notin \Yk$. This event occurs if and only if the relaxed local update violates at least one cut constraint, meaning $(\hat{Y}_{\text{relax}})_{[\Fk]} \in \bigcup_{c \in C_\partial(\Sk)} V_c(\Sk, \yk)$ for some $c\in C_\partial$.
    If the complement event $\overline{E_\partial}$ occurs, then $\hat{Y}_{\text{relax}} \in \Yk$. Because $\Yk \subseteq \Yk_{\text{relax}}$ and the maximizer over the larger set belongs to the smaller set, it must be that $\hat{Y} = \hat{Y}_{\text{relax}}$.
    By the standard coupling inequality, the total variation distance between their laws is bounded by the probability of them differing:
    \begin{align}
    \label{eq:tv_local}
        \TV(\qh, \q^*_{\text{relax}}) \leq \PP(\hat{Y} \neq \hat{Y}_{\text{relax}}) \leq \PP(E_\partial).
    \end{align}

    \textbf{3. Bounding the distance to the ideal conditional.}
    Let $A$ be the event that the global relaxed maximizer is globally feasible: $A \triangleq \{ Y^*_{\text{relax}} \in \cY \}$. On the event $A$, we have $Y^*_{\text{relax}}(\Sk) = Y^*$, where $Y^*$ is the true global maximizer drawn from $\pi^*$.
    The ideal conditional $\q^*$ is the law of $Y^*$ conditioned on $Y^*_{[\Sk]} = \yk_{[\Sk]}$. Using the fact that $Y^* = Y^*_{\text{relax}}$ on $A$, we can bound the total variation distance between the conditional distributions $\q^*$ and $\q^*_{\text{relax}}$ by the probability of $\overline{A}$ conditioned on the frozen variables:
    \begin{align}
    \label{eq:tv_conditional}
        \TV(\q^*, \q^*_{\text{relax}}) \leq \PP(\overline{A} \mid (Y^*_{\text{relax}}(\Sk))_{[\Sk]} = \yk_{[\Sk]}).
    \end{align}
    Because $(Y^*_{\text{relax}}(\Sk))_{[\Fk]}$ and $(Y^*_{\text{relax}}(\Sk))_{[\Sk]}$ are independent, conditioning on the frozen variables simply fixes their value to $\yk_{[\Sk]}$. The event $\overline{A}$ conditioned on this fixed assignment is exactly the event that $(Y^*_{\text{relax}}(\Sk))_{[\Fk]}$ violates at least one cut constraint in $C_\partial(\Sk)$. This is identical to the probability of the local boundary event $E_\partial$ defined earlier. Thus:
    \begin{align*}
        \TV(\q^*, \q^*_{\text{relax}}) \leq \PP(E_\partial).
    \end{align*}

    \textbf{4. Triangle inequality and union bound.}
    Applying the triangle inequality to the results of Step 2 and Step 3 yields:
    \begin{align*}
        \TV(\qh, \q^*) &\leq \TV(\qh, \q^*_{\text{relax}}) + \TV(\q^*_{\text{relax}}, \q^*) \\
        &\leq 2 \PP(E_\partial).
    \end{align*}
    Taking the maximum over all feasible $\yk \in \cY$ and subsets $\Sk$ yields the first bound on the local consistency gap $\gape$:
    \begin{align*}
        \gape \leq 2 \max_{\yk, \Sk} \PP\left( (Y^*_{\text{relax}}(\Sk))_{[\Fk]} \in \bigcup_{c \in C_\partial(\Sk)} V_c(\Sk, \yk) \right).
    \end{align*}
    Finally, applying a union bound over the individual cut constraints $c \in C_\partial(\Sk)$ and extracting the maximum probability gives:
    \begin{align*}
        \PP(E_\partial) &\leq \sum_{c \in C_\partial(\Sk)} \PP\left( (\hat{Y}_{\text{relax}})_{[\Fk]} \in V_c(\Sk, \yk) \right) \\
        &\leq |C_\partial(\Sk)| \max_{c \in C_\partial(\Sk)} \PP\left( (\hat{Y}_{\text{relax}})_{[\Fk]} \in V_c(\Sk, \yk) \right),
    \end{align*}
    which completes the proof.
\end{proof}

\section{Experimental details for $\kappa$-hot vector estimation}
\label{sec:appendix_k_hot_details}

This section details the experimental setup for the numerical results presented in \cref{sec:k-hot}. 

\paragraph{Problem formulation.} 
We consider the set of $\kappa$-hot vectors, $\cY\triangleq \{\y\in\{0,1\}^d\mid \sum_{i=1}^dy_i=\kappa\}$, with $d=1000$ and $\kappa=500$. The score vectors $\thetav \in \RR^d$ are sampled from a standard Normal distribution $\mathcal{N}(0, I_d)$. We run all algorithms for $10^5$ iterations and average the metrics over $128$ random seeds (controlling randomness of the generation of $\thetav$ as well as in the MCMC and Monte-Carlo processes) for statistical significance. To measure convergence, we compute the following mean absolute error (MAE):
\begin{align*}
    \text{MAE}^{(K)} = \frac{1}{d}\sum_{i=1}^d\left|\hat{\mu}^{(K)}_i-\mu_i\right|,
\end{align*}
where $\hat{\muv}^{(K)}=\frac{1}{K}\sum_{k=1}^K\yk$ is the $K$-th RLNS estimate of $\muv\triangleq\EE_{\argmaxo(\st)}[Y]$.
\paragraph{Variable selection.} 
For the neighborhood system in RLNS, at each step $k$, we uniformly sample a random mask to freeze a fixed fraction of the variables (e.g., $70\%$ frozen, $30\%$ free). The local subproblem then amounts to placing exactly $\kappa - l$ ones among the free variables, where $l$ is the number of ones already assigned in the frozen subset.

\paragraph{Regularization and Oracles.} 
We evaluate the methods under two regularization regimes:
\begin{itemize}[topsep=0pt,itemsep=1.5pt,parsep=2pt,leftmargin=15pt]
    \item \textbf{Shannon Regularization:} We set the temperature $\gamma=1.0$. The local Gibbs sampling $\argmaxok(\stk)$ is performed exactly using regularized dynamic programming on the free variables. The global target $\muv$ is analytically computed using the global regularized dynamic programming formulation \citep{vivier-ardisson_differentiable_2026}.
    \item \textbf{Perturbation-based Regularization:} We use standard Gaussian noise $\mathcal{N}(0, I_d)$ with scaling $\varepsilon=1.0$. Local sampling is performed via perturbed local maximization, which trivially reduces to identifying the top-$(\kappa-l)$ perturbed scores among the free variables. The global target $\muv$ is estimated to high precision via independent Monte-Carlo samples of the global perturbed MAP. 
\end{itemize}

\paragraph{Baselines.}
We compare RLNS against i.i.d. Monte-Carlo sampling, which repeatedly calls the global exact regularized oracles (i.e., global perturb-and-MAP \citep{papandreou_perturb-and-map_2011,berthet_learning_2020} for perturbation-based regularization and sampling from the global Gibbs distribution $\argmaxo(\st)(\y)\propto\exp(\langle\thetav,\y\rangle/\gamma)$ for Shannon regularization) and Local Search MCMC \citep{vivier-ardisson_learning_2025}. For Local Search MCMC, the proposal mechanism uses the same neighborhood system as RLNS, also uniformly sampling a random mask to freeze the same fixed fraction of variables. It then uniformly samples a new valid $\kappa$-hot vector (by sampling a $(\kappa-l)$-hot vector on the free coordinates), which is then accepted or rejected via the standard Metropolis-Hastings ratio.

Every Monte-Carlo and MCMC run is performed on a CPU with $64$GB of RAM and $16$ cores.

\section{Experimental details for the generalized assignment problem}
\label{sec:appendix_gap_details}

This section presents the experimental details for the numerical results presented in \cref{sec:gap}.

\paragraph{Instance generation.}
We consider GAP instances with $n=50$ items and $m=5$ bins. For each instance, item-bin weights $w_{ij}$ are sampled uniformly from the discrete set $\{1, \dots, 10\}$. To create tight constraints that force non-trivial combinatorial trade-offs, the capacity $C_j$ for all bins in a given instance is set identically to $C = \lfloor \frac{1}{2}\sum_{i=1}^n \sum_{j=1}^m w_{ij} / m^2 \rfloor$.

\paragraph{Dataset generation and expert policy.}
We generate a dataset of $1000$ independent GAP instances. For each instance, we generate latent profile vectors to construct relational features:
\begin{itemize}
    \item \textbf{Item features:} $\v_i \sim \mathcal{N}(0, I_{10})$ for $i \in [n]$
    \item \textbf{Bin features:} $\u_j \sim \mathcal{N}(0, I_{10})$ for $j \in [m]$
\end{itemize}
The edge-wise feature vector for an item-bin pair is the concatenation of their respective profiles, $\x_{ij} = [\v_i, \u_j] \in \RR^{20}$. To generate the ground-truth assignment labels $\y \in \{0, 1\}^{n \times m}$, a randomly initialized target neural network processes the features to generate true (unseen by the model) underlying scores $\theta^*_{ij}$, and the target assignment $\y=\hat{\y}(\thetav^*)$ is found using the \texttt{scipy.optimize.milp} global solver. A time limit of 60 seconds is enforced per instance during dataset generation.

\paragraph{Model architecture.}
We learn a mapping from the relational features $\x_{ij}$ to the predicted scores $\theta_{ij}$ using a shared multi-layer perceptron. The network operates independently on each item-bin pair and shares weights across all edges. The architecture consists of two hidden layers of size 32 with ReLU activations, followed by a linear projection to a scalar cost.

\paragraph{Local oracles and LNS subproblem.}
To scale learning, RLNS utilizes a randomized neighborhood system. At each iteration $k$, the assignment of all bins except one randomly selected free bin $j_k \in [m]$ are frozen. The LNS subproblem can be written as the following $0/1$ Knapsack problem:
\begin{align*}
    \max_{\y_{[\colon,j_k]}\in\{0,1\}^{n}}\sum_{i=1}^n \theta_{ij_k}y_{ij_k} \quad \text{s.t.} \quad
    \sum_{i=1}^n \tilde{w}_{ij_k}y_{ij_k}\leq C,
\end{align*}
where the weight $\tilde{w}_{ij_k}$ is set to $(C+1)$ if item $i$ is already assigned to another bin to prevent from assigning it twice, and to $w_{ij_k}$ else.

This subproblem admits an exact dynamic programming solution with $\cO(nC)$ complexity \citep{kellerer_knapsack_2004}. Thus, we can easily sample from $\argmaxoek(\stk)$ when using perturbation-based regularization by local perturbed maximization. When using Shannon regularization, one can sample from the local Gibbs distribution $\argmaxok(\stk)$ via regularized dynamic programming with the same complexity using \citet[Algorithm 3]{vivier-ardisson_differentiable_2026}.

\paragraph{Training details.} For \cref{fig:gap_iterations}, all models are trained for $10,000$ steps using the Adam optimizer with a learning rate of $2\times10^{-3}$ and a batch size of $32$, on a CPU with $64$GB of RAM and $16$ cores.

\paragraph{Gradient variance analysis.}
To robustly evaluate the stability of the RLNS stochastic gradients in \cref{fig:gap_time_variance} (top), we measure the trace of the gradient covariance matrix using repeated sampling. For these experiments, we fix a single GAP instance and sample a batch of $64$ logit vectors $\thetav$ from a standard Normal distribution. For each score vector, we draw $50$ independent gradient samples to compute the per-instance covariance trace. The reported variance metrics represent the average across this batch of $64$ queried score vectors. All error bars indicate the $95\%$ confidence intervals of this mean across the batch, calculated using the standard normal multiplier ($1.96$). Explicit random seeds are used to strictly control the generation of the underlying polytope instance weights, the queried score vectors, and the internal stochasticity of the MCMC and perturbation sampling mechanisms.

\paragraph{Oracle execution time benchmarking.}
To ensure statistically significant time comparisons between the local dynamic programming oracle and the global MILP solver in \cref{fig:gap_time_variance} (bottom), we benchmark execution times across $100$ independent trials for each problem scale ($(n,m)=(100,10)$, $(n,m)=(200,20)$, and $(n,m)=(400,40)$). Within each scale, the underlying item weights and bin capacities are generated using a fixed random seed (following the same generation process as explained above). For each of the $100$ trials, we sample a new logit vector $\thetav$ and, for the local oracle, uniformly sample a new free bin to re-optimize. The reported execution times are the arithmetic means over the $100$ trials, with error bars representing the $95\%$ confidence intervals derived from a Student's $t$-distribution.

\section{Experimental details for stochastic vehicle scheduling}
\label{sec:appendix_svsp_details}

This section details the complete experimental setup, dataset generation procedure, model architecture, and hyperparameters for the stochastic vehicle scheduling problem (SVSP) experiments presented in \cref{sec:svsp}. While the problem formulation and dataset generation process are standard and identical to, e.g., \citet{parmentier_learning_2022,dalle_learning_2022,hoppe_structured_2025}, we detail them here for completeness.

\paragraph{Problem formulation} The problem is modeled on a directed acyclic graph $\cG=(\cV,\cE)$ with $|\cV|=n+2$ nodes (a source depot, $n$ job nodes, and a sink depot) and $|\cE|$ edges. An edge $(i,j)\in\cE$ exists from source to every job, from every job to sink, and from job $i$ to job $j$ if the deterministic end time of task $i$ plus travel time allows reaching task $j$ before its start. The number of edges varies across instances, as it depends on the spatial and temporal configuration of the tasks.

A feasible routing $\y\in\{0,1\}^{|\cE|}$ satisfies flow conservation and unit demand constraints:
\begin{align*}
    \sum_{e\in\delta^-(v)} y_e =  1 \quad \text{and} \quad \sum_{e\in\delta^+(v)} y_e = 1 \quad \forall v\in\cV,
\end{align*}
where $\delta^-(v)$ and $\delta^+(v)$ denote the sets of incoming and outgoing edges of $v$. Each vehicle corresponds to a path from source to sink, so the number of vehicles is $\sum_{e\in\delta^+(0)} y_e$.

Given a routing $\y$ and a scenario $\omega\in[S]$, delays propagate along each vehicle path via the recursion $R_{v}^{(\omega)}=\varepsilon_v^{(\omega)}+\max(R_u^{(\omega)}-\sigma_{uv}^{(\omega)},\,0)$, where $\varepsilon_v^{(\omega)}$ is the intrinsic delay at node $v$, $\sigma_{uv}^{(\omega)}$ is the slack (time margin) on edge $(u,v)$, and $(u,v)$ is the predecessor edge of $v$ in the routing. The total cost of routing $\y$ is the sum of the vehicle cost and the average total delay cost over the scenarios:
\begin{align*}
    c(\y) = c_{\mathrm{vehicle}}\cdot\sum_{e\in\delta^+(0)} y_e + c_{\mathrm{delay}}\cdot\frac{1}{S}\sum_{\omega=1}^S\sum_{v=1}^n R_v^{(\omega)}.
\end{align*}

\subsection{Dataset generation and expert policy}
We generate a dataset of $1000$ independent SVSP instances. For each instance, we sample a random city layout simulating spatial areas and transit delays. 
\begin{itemize}
    \item \textbf{Stochasticity:} Task locations, start times, and travel times are sampled randomly. Intrinsic task delays ($\varepsilon_v^{(\omega)}$) and regional inter-area travel delays follow LogNormal distributions, generating the $10$ distinct delay scenarios per instance.
    \item \textbf{Expert policy:} The ground-truth solution $\y$ for each instance is computed by solving a two-stage stochastic MILP that minimizes the expected total cost $c(\y)$ with full knowledge of all $S$ scenarios. To handle the bilinear delay-propagation term $y_{uv} R_u^{(\omega)}$ without requiring a big-M parameter, we use exact McCormick envelope linearizations. The MILP is solved using SciPy with a time limit of $120$ seconds per instance.
    \item \textbf{Size:} Each instance features $n=50$ job tasks and $S=10$ stochastic scenarios, which is the largest problem size handled by the expert solver. In terms of industrial applications, this scale corresponds to standard long-haul fleet aircraft routing problems.
\end{itemize}

\subsection{Features and model architecture}
For each edge $e \in \cE$ in a given graph, we construct a $20$-dimensional feature vector $\x_e \in \RR^{20}$. The features capture the deterministic and stochastic characteristics of the edge:
\begin{itemize}
    \item \textbf{Feature 1:} The deterministic travel time for the edge.
    \item \textbf{Feature 2:} The vehicle cost (equal to $c_{\mathrm{vehicle}}$ if the edge originates from the source depot, and $0$ otherwise).
    \item \textbf{Features 3--11:} The $9$ deciles of the slack (time margin) distribution $\sigma_{uv}^{(\omega)}$ computed across the $S=10$ scenarios.
    \item \textbf{Features 12--20:} The cumulative distribution function (CDF) of the slack distribution evaluated at $9$ predefined time thresholds: $\{-100, -50, -20, -10, 0, 10, 50, 200, 500\}$.
\end{itemize}

The predictive model uses an standard feed-forward architecture. It applies a shared two-layer perceptron with a hidden dimension of $64$ independently to each edge's feature vector $\x_e$ to output the predicted scalar cost $\theta_e$.

Given predicted edge costs $\thetav\in\RR^{|\cE|}$, the combinatorial optimization layer is then a flow linear program (LP) $\hat{\y}(\thetav)\triangleq\argmax_{\y\in\cY}\langle\thetav,\y\rangle$.

\subsection{Training details and hyperparameters}
All models are trained for $10,000$ steps using the Adam optimizer with a learning rate of $10^{-4}$ and a batch size of $32$. The dataset is split into $70\%$ training, $15\%$ validation, and $15\%$ testing. All training configurations are repeated across $10$ different random seeds controlling dataset and model initialization randomness for statistical significance. Each training is performed on a CPU with $64$GB of RAM and $16$ cores.

\paragraph{RLNS-specific parameters.}
Our proposed method is parameterized as follows:
\begin{itemize}
    \item \textbf{Iterations:} We run $K=1$ RLNS iteration per training step to compute the gradient estimator.
    \item \textbf{Regularization:} We use perturbation-based regularization with $\mathcal{N}(0, I)$ standard Gaussian noise and a temperature scaling parameter $\varepsilon=1.0$.
    \item \textbf{Initialization:} RLNS is initialized directly at the ground-truth expert solution.
    \item \textbf{Variable selection scheme:} We use a route-based mask generation heuristic. At each step, we identify the distinct vehicle routes present in the current routing assignment. We select $66\%$ of these routes uniformly at random and freeze all variables corresponding to the tasks visited by those routes.
\end{itemize}

\paragraph{Baseline parameters.}
The baselines share the identical underlying model architecture, optimization hyperparameters, and evaluation metrics:
\begin{itemize}
    \item \textbf{Global perturbed Fenchel-Young loss (PFY) \citep{berthet_learning_2020}:} Uses the same perturbation-based regularization setup (Gaussian noise, $\varepsilon=1.0$,) but calls the exact global MAP oracle over the full graph $\cG$ at every training step.
    \item \textbf{Blackbox differentiation (DBB) \citep{vlastelica_differentiation_2020}:} Continuous interpolation of the solver. We performed a hyperparameter sweep over the interpolation strength $\lambda \in \{10^{-1}, 5\times10^{-1}, 1, 5, 10^1, 10^2, 5\times10^2, 10^3, 5\times10^3, 10^4\}$ and report the best performing configuration, which was $\lambda=10^2$.
    \item \textbf{Binary cross-entropy (BCE):} An unstructured binary cross-entropy loss applied directly to the edge-level predictions $\thetav$ against the expert binary labels $\y$. See, e.g., \citet{joshi_efficient_2019} for a similar approach on a routing combinatorial optimization problem.
    \item \textbf{Greedy policy:} A non-learned baseline that solves a standard minimum-cost flow problem on the DAG, placing negative costs on the source edges to minimize the total number of vehicles, entirely ignoring the stochastic delays.
\end{itemize}

\end{document}